\newcommand{\re}[1]{{#1}}
\definecolor{mygray}{gray}{0.9}
\theoremstyle{plain}
\newtheorem*{definition*}{Definition}
\newtheorem*{assumption*}{Assumption}
\newtheorem*{principle*}{Principle}
\newtheorem{theorem}{Theorem}
\newtheorem*{theorem*}{Theorem}
\newtheorem*{proposition*}{Proposition}
\newtheorem{corollary}[theorem]{Corollary}
\newtheorem*{corollary*}{Corollary}
\newtheorem{proposition}[theorem]{Proposition}
\newtheorem{lemma}[theorem]{Lemma}
\newtheorem*{lemma*}{Lemma}
\newcommand{\Cov}{\mathrm{Cov}}
\DeclareMathOperator*{\argmax}{arg\,max}
\icmltitlerunning{}
\begin{document}

\twocolumn[
  \icmltitle{IBNorm: Information-Bottleneck Inspired Normalization for Representation Learning}



  \icmlsetsymbol{equal}{*}

  \begin{icmlauthorlist}
    \icmlauthor{Xiandong Zou}{yyy}
    \icmlauthor{Jia Li}{sch1}
    \icmlauthor{Xiaotong Yuan}{sch2}
    \icmlauthor{Pan Zhou}{yyy}
  \end{icmlauthorlist}

  \icmlaffiliation{yyy}{Singapore Management University}
  \icmlaffiliation{sch1}{Beijing Normal University}
  \icmlaffiliation{sch2}{Nanjing University}

  \icmlcorrespondingauthor{Pan Zhou}{panzhou@smu.edu.sg}

  \icmlkeywords{Machine Learning, ICML}

  \vskip 0.3in
]



\printAffiliationsAndNotice{}  

\begin{abstract}
  Normalization is fundamental to deep learning, but existing approaches such as BatchNorm, LayerNorm, and RMSNorm are variance-centric by enforcing zero mean and unit variance, stabilizing training without controlling how representations capture task-relevant information. We propose IB-Inspired Normalization (IBNorm), a simple yet powerful family of methods grounded in the Information Bottleneck principle. IBNorm introduces bounded compression operations that encourage embeddings to preserve predictive information while suppressing nuisance variability, yielding more informative representations while retaining the stability and compatibility of standard normalization. Theoretically, we prove that IBNorm achieves a higher IB value and tighter generalization bounds than variance-centric methods. Empirically, IBNorm outperforms BatchNorm, LayerNorm, and RMSNorm across large-scale language models (like LLaMA, GPT-2) and vision models (e.g., ResNet, ViT), with mutual information analysis confirming superior information bottleneck behavior. Code will be released publicly and is available in the supplementary now.
\end{abstract}

\section{Introduction}


Normalization has become a cornerstone of deep learning, credited for stabilizing and accelerating training across domains. Techniques such as Batch Normalization (BN)~\citep{batchnorm}, Layer Normalization (LN)~\citep{layernorm}, and RMSNorm~\citep{rmsnorm} are now standard in architectures spanning vision and language~\citep{transformer,vit,resnet}, underscoring normalization as a key driver of training stability and acceleration.

Despite this success, existing normalization methods share a fundamental limitation: they are inherently \emph{variance-centric}. By enforcing zero mean and unit variance, followed by rescaling and shifting, they improve the conditioning of the optimization problem~\citep{normalization,bn4,white1,white2}. However, this process operates purely on first- and second-order statistics, without offering any guidance on  (intermediate) representation learning. Indeed,  two representations may share identical mean and variance, but can encode drastically different amounts of task-relevant information~\citep{info1,info2} This raises a key question: can normalization be re-designed not only to stabilize training but also to shape representations toward sufficiency and generalization?  

Prior works have explored this direction. For instance, NormalNorm~\citep{normal} encourages Gaussian-like features via power transforms and additive noise, arguing that maximizing feature entropy improves representation learning. We argue that this objective is misaligned with generalization: Gaussian-like activations indiscriminately retain task-irrelevant nuisance factors and cannot filter task-irrelevant information~\citep{ibb,ib2,ib4}. Additionally, NormalNorm destabilizes the zero-mean, unit-variance properties essential for efficient optimization (see Appendix~\ref{app:normalnorm}).

\noindent \textbf{Contributions.}  
In this work, we move beyond variance-centric normalization and introduce \textbf{IBNorm}, a simple yet powerful normalization method grounded by the Information Bottleneck (IB) principle~\citep{ib1,ib2,ib3}. Moreover, we also theoretically justify the superiority of  IBNorm and its better generalization performance.

First, we introduce an IB perspective to normalization, and propose  IBNorm for generalization  improvement. The IB principle seeks representations that preserve task-relevant information while discarding nuisances, thereby connecting normalization to generalization. Within this framework, we decompose existing normalization methods into three steps: (i) grouping features (e.g., across batch, channel, or dimension), (ii) standardization, and (iii) recovery via re-scaling and shifting. Our analysis reveals that the normalization operation itself is the key bottleneck governing information flow. Building on this insight, IBNorm  introduces several different compression operators that selectively compress activations toward their means, providing explicit control over sufficiency-redundancy tradeoffs while remaining drop-in compatible with existing architectures.  

Second, we provide theoretical guarantees. Under the IB framework, we prove that IBNorm achieves a strictly larger IB value than variance-centric methods like LN and BN, thereby better balancing predictive sufficiency with nuisance suppression. Moreover, we establish a provably tighter generalization bound, explaining why IBNorm outperforms standard normalization in practice.  
 
Extensive experiments show the effectiveness of IBNorm across architectures and modalities.  
In language modeling, integrating IBNorm into LLaMA~\citep{llama2} and GPT-2~\citep{gpt2} outperforms LN, RMSNorm, and NormalNorm on LLM Leaderboard I/II by up to 8.75\%.  
In vision, applying IBNorm to ResNet~\citep{resnet} and ViT~\citep{vit} yields substantial gains, e.g.,  3.98\% improvement (ResNet-50) and 8.17\% improvement (ViT) on ImageNet~\citep{imagenet}.


\section{\re{Related Work}}
\subsection{Normalization}
Normalization plays a central role in modern deep learning, enabling faster convergence and improved generalization across natural language processing (NLP)~\citep{transformer,llama,llama2,llama3,qwen,deepseekv3} and computer vision (CV)~\citep{resnet,vit,convnet}.

Batch Normalization (BN)~\citep{batchnorm} normalizes activations across mini-batches to stabilize training, with numerous extensions such as Mean-only BN~\citep{bn1}, $L^p$-Norm BN~\citep{bn2}, Conditional BN~\citep{bn3}, and Whitening BN~\citep{bn7,bn4,re2}. Layer Normalization (LN)~\citep{layernorm} was proposed for sequential and small-batch settings and has inspired variants including Dynamic LN~\citep{ln1}, RMSNorm~\citep{rmsnorm}, and Adaptive LN~\citep{adaln}, which are now standard in Transformers~\citep{transformer,vit}. Group Normalization (GN)~\citep{gn} and Batch Group Normalization (BGN)~\citep{bgn} extend this idea by combining grouping across features, channels, or batches. Collectively, these methods underscore normalization as a structural tool for scaling depth and stabilizing optimization~\citep{normalization,adaln,j1}.

Beyond variance-centric approaches, some works attempt to explicitly shape representation distributions. For example, \citet{normal} encourage Gaussian-like features via power transforms with additive noise, arguing that Gaussianity enhances representational capacity while noise acts as implicit regularization. We identify key theoretical and practical limitations in this approach. The method lacks a principled information-theoretic analysis within the context of deep neural networks. Specifically, the design of NormalNorm is based on the mutual information game (Theorem 2.1~\citep{normal}) which relies on strong assumptions regarding the first and second moments of the input and noise and neglects higher-order statistics. Consequently, the objective is misaligned with generalization: by prioritizing entropy maximization, the method indiscriminately preserves information, failing to filter task-irrelevant nuisance factors and leading to suboptimal performance. Practically, NormalNorm incurs computational overhead in estimating power parameters and introduces stochasticity that can destabilize training (see Appendix~\ref{app:normalnorm} and~\ref{app:eff}).

Broadly, most normalization methods rely on re-centering and rescaling to mitigate internal covariate shift, yet they overlook the information-theoretic properties of representations, such as sufficiency and redundancy. Crucially, even distribution-shaping methods like NormalNorm fail to explicitly balance preserving predictive information against suppressing nuisance factors. These gaps motivates our work. We propose \textbf{IBNorm}, which augments conventional normalization with compression operation designed to regulate information flow, guiding activations toward IB-optimal representations that enhance generalization and robustness.

\subsection{Representation Learning with Explicit Information Bottleneck Objectives}
The Information Bottleneck (IB) principle~\citep{ib,ib1,ib2} formulates representation learning as a trade-off between sufficiency and compression. Prior works~\citep{ib2,ibb1,ibbb2,ibbb5,spc} have explored this principle primarily through empirical analysis and leveraging explicit IB objectives during deep learning model training.

Methods such as deep variational information bottleneck (VIB)~\citep{ibb1}, nonlinear information bottleneck~\citep{ibbb2}, and mutual information neural estimation (MINE)~\citep{ibbb5} leverage the IB principle by explicitly estimating mutual information through auxiliary neural MI estimators and incorporating these estimates as IB-regularization terms during training. These approaches rely on encoder-decoder architectures, require large datasets to train accurate MI estimators prior to training the primary model, and introduce additional MI-based loss functions that cause both computational and hyperparameter overhead. The reliance on auxiliary estimators also imposes architectural constraints and increases the computational burden, a limitation that becomes pronounced in autoregressive LLMs. Furthermore, as demonstrated by MINE, accurate MI estimation requires substantial sample complexity, making these methods expensive and often unstable when applied to large-scale models. Recent techniques such as structured probabilistic coding~\citep{spc} incorporate IB-inspired coding mechanisms but are confined to specific encoder architectures and inherit similar computational and optimization challenges.

Representation learning methods based on explicit IB objectives face many limitations: they impose heavy computational overhead due to auxiliary MI-estimator networks, introduce architectural constraints, and require large sample sizes for accurate mutual information estimation. Detailed discussion can be found in Appendix~\ref{app:ib1} and~\ref{app:ib2}.

To effectively integrate the IB principle into modern deep learning models, IBNorm introduces the first framework to internalize the IB principle directly within the normalization operation. By redesigning the normalization layer to act as an information filter, IBNorm achieves the benefits of IB-guided compression—improved generalization and robustness—without computationally expensive auxiliary estimators, external losses, or architectural constraints.

\vspace{-0.5em}
\section{Preliminary}\label{preff}
Here we  introduce information bottleneck (IB) framework~\citep{ib1,ib2}  which forms the basis of our approach in Sec.~\ref{sec:method}.

Consider two random variables $X$ and $Y$ with joint probability densities $p(x,y)$ over the space $\mathcal{X}\times\mathcal{Y}$.  We can define the mutual information between $X$ and $Y$:
\vspace{-0.45em}
\begin{equation}
\vspace{-0.45em}
	I(X;Y) = \int_{x\in\mathcal{X}} \int_{y\in\mathcal{Y}}
	p(x,y) \, 
	\log \frac{p(x,y)}{p(x)\, p(y)} \, dx \, dy.
	\label{mi}
\end{equation}
Mutual information measures dependencies between random variables, and can be understood as how much knowing $X$ reduces the uncertainty in $Y$ or vice versa~\citep{mi,mi1,mi2}.

Building on this, the IB framework formulates representation learning as a trade-off between two goals:  1) minimality which compresses $X$ by reducing $I(X;T)$; and 2) sufficiency which preserves information about $Y$ by maximizing $I(Y;T)$. This  is captured by the IB objective:
\vspace{-0.5em}
\begin{equation}
\vspace{-0.5em}
\label{ibpro}
\fontsize{9}{3}\selectfont{
		\begin{aligned}
	\max_{T} \Big[ I(Y;T) - \beta I(X;T) \Big],
\end{aligned}}
\end{equation}
where $\beta > 0$~\citep{ib2}. Here, $T$ must satisfy the Markov chain $T - X - Y$, and the optimal representation $T^*$ is given by the conditional distribution $p_{T^*|X}(t|x)$ with marginal $p_{T^*}(t) = \int p_{T^*|X}(t|x) p_X(x) dx$ governed by self-consistency equations~\citep{ib}.

Intuitively, maximizing $I(Y;T)$ ensures that the representation $T$ preserving information relevant for predicting the target $Y$, thereby enhancing predictive power and interpretability. Concurrently, penalizing $I(X;T)$ enforces compression, which eliminates irrelevant variability in the representation and, in turn, enhances generalization and robustness of the representation~\citep{ibb,ibb2,ib4,ro1}.

\vspace{-0.5em}
\section{Methodology}
\label{sec:method}

Inspired by the IB principle introduced in Sec.~\ref{preff}, we present how it motivates an effective normalization strategy --- an essential component of modern deep learning architectures.   

\vspace{-0.5em}
\subsection{Normalization Design via IB Principle}
\label{sec:problem}

\paragraph{Normalization decomposition and limitations.}  Consider a data space $\mathcal{X}$ and label space $\mathcal{Y}$ with a fixed joint distribution $\mathbb{P}(X,Y)\in\mathcal{P}(\mathcal{X}\times\mathcal{Y})$. A feedforward neural network $f(\cdot;\theta)$, parameterized by $\theta$, maps an input $X\in\mathcal{X}$ to a prediction $\hat{Y}=f(X;\theta)$, aiming to approximate the ground-truth label $Y$. Here, $f(\cdot;\theta)$ can represent various architectures, such as convolutional neural networks (e.g., ResNet~\citep{resnet}) and transformer-based networks (e.g., large language models like~\citep{llama2} or vision transformer~\citep{vit}).

For an $L$-layer network with normalization, we write the intermediate representation after the $l$-th normalization as: 
\vspace{-0.25em}
\begin{equation}
\vspace{-0.25em}
\label{eq:feature}
\fontsize{9}{3}\selectfont{
		\begin{aligned}
	T_l := \Phi_l \circ h_{l}\circ \cdots \circ \Phi_1\circ h_{1}(X)\in\mathcal{T}_l, \quad l\in \{1,\cdots,L\},
    \end{aligned}}
\end{equation}
where $\mathcal{T}_l$ is a measurable space, $h_i$ is a transformation block with parameter $\theta_i$, and $\Phi_i$ is the normalization layer, e.g., layer normalization. We use symbol $\circ$ to represent the composition of functions. Following~\citep{normalization}, any normalization layer  $\Phi$ can be decomposed as:
\vspace{-0.25em}
\begin{equation}
\vspace{-0.25em}
\fontsize{9}{3}\selectfont{
		\begin{aligned}
	\Phi(\cdot) \equiv \eta \circ \psi \circ \zeta,
    \end{aligned}}
\end{equation}
where $\zeta$ defines the \emph{normalization area partitioning} (NAP), i.e., how features are grouped (e.g., batch-level in BN, feature-level in LN, or group-level in GN);   $\psi$ is the \emph{normalization operation} (NOP), i.e., the  standardization operation;   $\eta$ is the \emph{normalization representation recovery} (NRR), typically an affine re-scaling and shifting.   

This decomposition unifies existing normalization methods. For instance, BN~\citep{batchnorm}, LN~\citep{layernorm}, and GN~\citep{gn} differ mainly in their choice of NAP ($\zeta$): BN normalizes across the batch dimension, LN across the feature dimension, and GN across groups of features. In all cases, the NOP enforces zero-mean and unit-variance statistics, while the NRR rescales and shifts the result. Crucially, among them, NAP ($\zeta$) is often dictated by the network architecture or task domain (e.g., batch-based normalization favored by CNNs, feature-based normalization for Transformers). Likewise, NRR ($\eta$) is linear and invertible, and therefore only re-parameterizes activations without altering their information content. This leaves the NOP ($\psi$) as the main degree of freedom: it governs how information is filtered and thus determines whether normalization merely stabilizes optimization or also enhances the quality of learned representations.

However, most existing NOP ($\psi$) operations are fundamentally variance-centric, enforcing zero-mean and unit-variance scaling of activations.  While stabilizing and accelerating training, it overlooks whether the resulting representations retain task-relevant information.  
 From the IB perspective in Sec.~\ref{preff}, it reveals a fundamental limitation: variance normalization alone cannot guarantee the balance between sufficiency (retaining predictive information) and minimality (removing redundancy). As a result, relying solely on variance-centric NOP ($\psi$) can produce representations with superior generalization and robustness, since higher IB values are empirically and theoretically linked to better generalization and robustness~~\citep{ibb,ibb1,ibb2,ib4,weightib}. 

\vspace{-0.5em}
\paragraph{IB-guided normalization objective.}
The IB framework provides a principled way to rethink normalization. An ideal intermediate representation $T_l$ should satisfy two  criteria: 1) preserve sufficiency, namely, maximizing the predictive information $I(Y;T_l)$ about the target and thus representation $T_l$ can easily predict the target $Y$; and 2) promote compression, i.e., minimizing task-nuisance information $I(T_{l-1};T_l)$ carried over from the previous layer and thereby removing the potential unimportant noises or patterns. This intuition leads to the following multi-layer IB objective:
\vspace{-0.5em}
\begin{equation}
\vspace{-0.5em}
\fontsize{9}{3}\selectfont{
		\begin{aligned}
		\max_{\{T_l\}_{l=1}^L} \; 
	\sum_{l=1}^{L} \Big( I(Y;T_l) - \beta I(T_{l-1};T_l) \Big),
	\label{ibpro1}
    \end{aligned}}
\end{equation}
where $\{T_l\}_{l=1}^L \in \mathcal{T}_1 \times \cdots \times \mathcal{T}_L\equiv\mathcal{T}^{\otimes L}$, $T_0=X$ and $\beta>0$ controls the trade-off between sufficiency and compression. 
Prior works~\citep{ibb,ib4,ibb1,ibb2,weightib} has shown that both generalization error and robustness positively correlate with the IB value in Eqn.~\eqref{ibpro1}, implying that aligning intermediate representations with the IB principle can improve  generalization and robustness.

 
Building on this insight, we aim to enhance conventional variance-centric NOP ($\psi$) operator with an IB-inspired compression operation. Rather than treating normalization as a mere stabilizer, we reinterpret it as an information filter that produces representations that are both compact and predictive, bridging optimization stability and information-theoretic optimality. \re{Directly optimizing Eqn.~\eqref{ibpro1} as an additional IB-based training loss is infeasible due to two main reasons: (1) the computational cost of estimating mutual information using auxiliary MI neural estimators over the unknown joint distribution $p_{X,Y}$, and (2) the significant training overhead and sample complexity associated with such MI neural estimators (see details in Appendix~\ref{app:ib2}).} However, it provides a guiding principle: normalization should go beyond variance standardization to encourage information-preserving activations. We introduce our IB-inspired normalization in Sec.~\ref{methodss} and theoretically justify its superiority and improved generalization in Sec.~\ref{sec:theory}.


\begin{figure}[t]
	\centering
    \includegraphics[width=0.4\textwidth]{./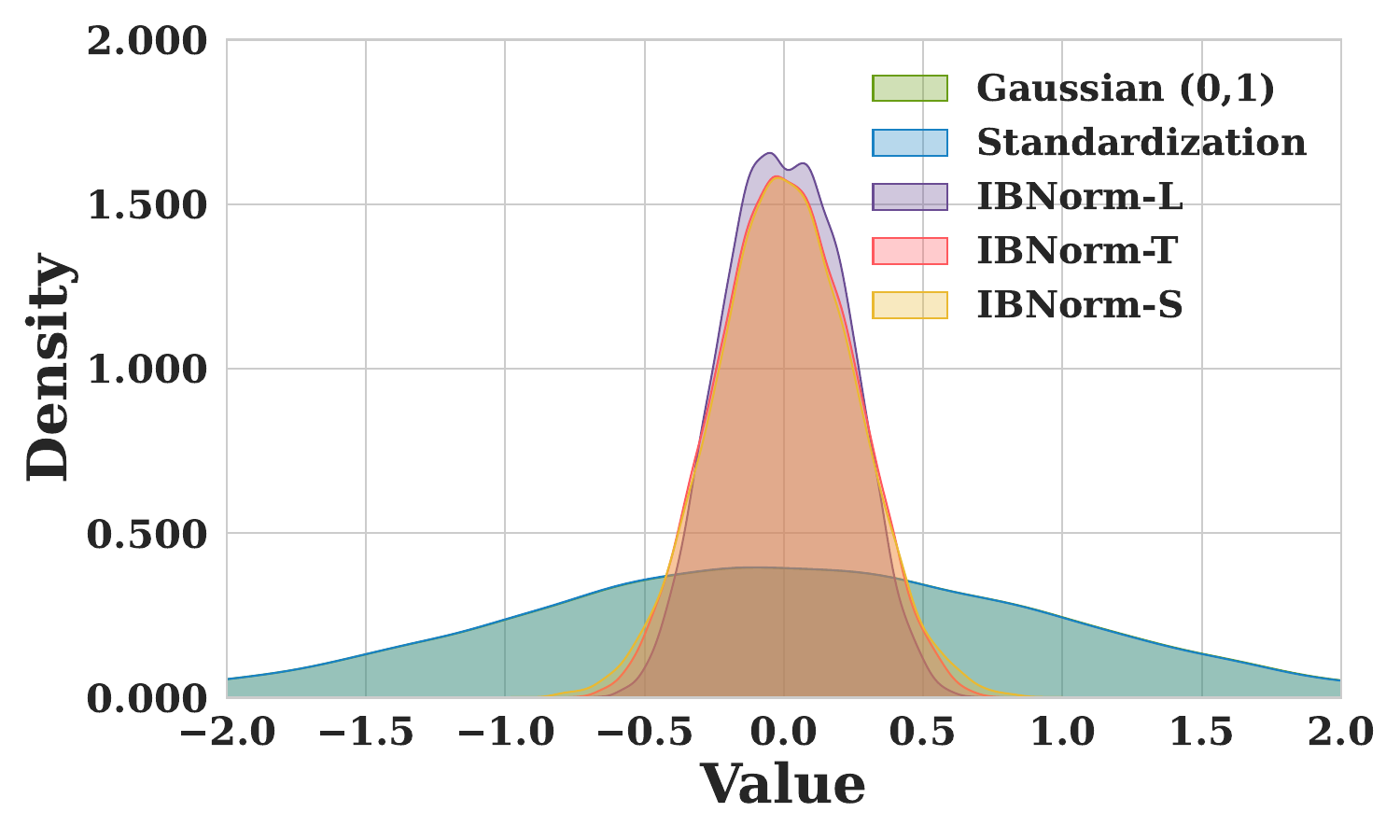}
    \vspace{-0.5em}
	\caption{Comparison of kernel density estimation for Gaussian inputs (mean 0, varying variance) under different compression operations: Standardization, IBNorm-L, IBNorm-T, and IBNorm-S ($\lambda=4$). See more examples  in Appendix~\ref{app:ce}.}
	\label{fig:all_transforms}
    \vspace{-1.5em}
\end{figure}

\vspace{-0.5em}
\subsection{IB-Inspired Normalization}\label{methodss}
\vspace{-0.25em}

In Sec.~\ref{sec:problem}, we identified a key limitation of existing normalization methods: their variance-centric design does not guarantee that intermediate representations preserve task-relevant information. Guided by the IB principle, our goal is to design a normalization operator that explicitly reshapes activations into information-preserving forms, rather than merely controlling their variance. 

To this end, we propose an IB-inspired normalization. Unlike conventional variance-centric approaches, which manipulate only first- and second-order activation statistics via standardizing activations to have zero mean and unit variance, our approach introduces an extra compression operation into the normalization operation (NOP)~$\psi$ that acts on higher-order statistics. Specifically, $\psi$ compresses activations toward their mean in a controlled manner, thereby increasing local kurtosis and inducing sparsity in activations across instances. Prior work has shown that mean-centered and sparse representations, where most activations concentrate around the mean, often exhibit stronger generalization by filtering redundant and task-nuisance variability from the tail regions~\citep{mean,mean1,mean2,mean3,mean4}. Motivated by this insight, our compression operation suppresses variability in activation tails, reshaping the higher-order distribution of activations rather than merely rescaling and shifting activations. Importantly, we find this design aligns normalization with the IB principle, preserving target-relevant information $I(Y;T_l)$ while suppressing irrelevant task-nuisance information $I(T_{l-1};T_l)$ (Sec.~\ref{sec:theory}). 

\vspace{-0.75em}
\paragraph{Compression operation in IBNorm.} Here we introduce the {normalization operation} (NOP) $\psi$ in our IBNorm. 
Let $X$ be hidden activations from a given layer, and let $\bm{x} = \{x_i\}_{i=1}^{H}$ represent a sample of $H$-dimensional activations. We define the compression operation $s_\lambda(\cdot;\lambda): \mathbb{R}^d \to \mathbb{R}^d$:
\begin{equation}
\fontsize{9}{3}\selectfont{
		\begin{aligned}
	s_{\lambda}(x_i;\lambda) = \mu + \operatorname{sign}(x_i - \mu) \cdot f_\lambda\!\left(|x_i - \mu|\right),
\end{aligned}}
\end{equation}
where $\mu = \tfrac{1}{H} \sum_{i=1}^{H} x_i$ is the mean activation, and $f_\lambda(|x_i - \mu|)$ is a measurable and strictly monotone non-decreasing function with a hyper-parameter  $\lambda$ satisfying the \emph{bounded compression property}:
\begin{equation}
\fontsize{9}{3}\selectfont{
		\begin{aligned}
        \label{eq:bounded_compression}
0 \le f_\lambda(r) \le \alpha_\lambda r, \quad \forall r \ge 0, \quad \alpha_\lambda \in [0,1].
\end{aligned}}
\end{equation}
The  property~\eqref{eq:bounded_compression} increases local kurtosis and reduces the spread of the distribution in activations by compressing activations toward the mean. This compression suppresses variability in the tail regions, which often encode task-irrelevant fluctuations in the high-dimensional activations~\citep{mean,ica}. As a result, the compression operation decreases task-nuisance information with the input, $I(T_{l-1}; T_l)$, while preserving the main mass of the distribution that carries task-relevant information, $I(Y; T_l)$. In other words, by promoting mean-centered activations and inducing effective sparsification in IBNorm, compression encourages representations that retain task-relevant information for the downstream task while attenuating task-nuisance information, consistent with the IB principle, which is also empirically validated in Sec.~\ref{sec:theory}.

While the bounded compression property (Eq.~\ref{eq:bounded_compression}) establishes the fundamental condition for $f_\lambda$, the rate of tail decay remains an important degree of freedom in its design~\citep{d2,prob}. To systematically explore the design space, we instantiate three representative functionals---IBNorm-S, IBNorm-L, and IBNorm-T---that span a spectrum of tail suppression strength. As visualized in Figures~\ref{fig:ib1}--\ref{fig:ib3}, these variants apply increasingly stronger suppression to the activation tails: IBNorm-S is the mildest, IBNorm-L is intermediate, and IBNorm-T is the most aggressive. These structural differences manifest in the induced entropy of the representation distribution, leading to different empirical performance. See Appendix~\ref{app:ce} for detailed analysis. Their specific formulations are:
\begin{equation}
\fontsize{9}{3}\selectfont{
		\begin{aligned}
f_\lambda(|x_i-\mu|) &= {|x_i-\mu|}/{\lambda}, &\text{(\textbf{IBNorm-S})}\\
f_\lambda(|x_i-\mu|) &= \ln\!\left(1+{|x_i-\mu|}/{\lambda}\right), &\text{(\textbf{IBNorm-L})}\\
f_\lambda(|x_i-\mu|) &= \tanh\!\left({|x_i-\mu|}/{\lambda}\right). &\text{(\textbf{IBNorm-T})}
\end{aligned}}
\end{equation}
For these functionals, the compression ratio $\alpha_\lambda$ in Eqn.~\eqref{eq:bounded_compression} becomes $\alpha_\lambda = 1/\lambda$,  guaranteeing the bounded compression property whenever $\lambda \geq 1$. Thus, $\lambda$ controls the compression strength: larger  $\lambda$ enforces stronger compression toward the mean $\mu$, whereas smaller  $\lambda$ lead to broader activations. This mechanism acts as a structural prior to reduce task-nuisance information while preserving task-relevant information.

\paragraph{IBNorm.} 
Building on the foundation of LayerNorm, we introduce \emph{IBNorm}, which integrates an explicit compression operation into the normalization pipeline. This design preserves the well-known benefits of LayerNorm---training stability and acceleration and architectural compatibility---while also incorporating the information-theoretic advantages of the IB principle. Finally, we derive three variants: IBNorm-S, IBNorm-L, and IBNorm-T.

Formally, given an activation input $\bm{x}$, IBNorm applies the following sequence of operations: first, the NAP operator $\zeta$ partitions features in the same way as LayerNorm; next, a compression operator $s_\lambda$ reduces nuisance variability; then, the NOP operator $\psi$ standardizes activations and induces effective sparsity in representation based on compression; 
\begin{equation}
\fontsize{9}{3}\selectfont{
		\begin{aligned}
	\text{IBNorm}(\bm{x}; \lambda) \equiv \eta \circ \psi \circ s_\lambda \circ \zeta.
    \end{aligned}}
\end{equation}
This hybrid procedure, summarized in Algorithm~\ref{alg:ibnorm} of Appendix~\ref{app:ib3}, yields a normalization strategy explicitly guided by the IB principle. \re{We highlight that this differs from the function of nonlinear activations (see Appendix~\ref{app:ib1}).} By construction, IBNorm goes beyond variance normalization: it acts as an \emph{information filter}, enhancing sufficiency by retaining predictive information  while improving minimality through compression of irrelevant information in the input. In this way, IBNorm bridges the gap between practical optimization benefits and information-theoretic optimality.

\begin{figure*}[t!]
    \centering
    \begin{subfigure}[H]{0.48\linewidth}
        \centering
        \includegraphics[width=0.8\linewidth]{./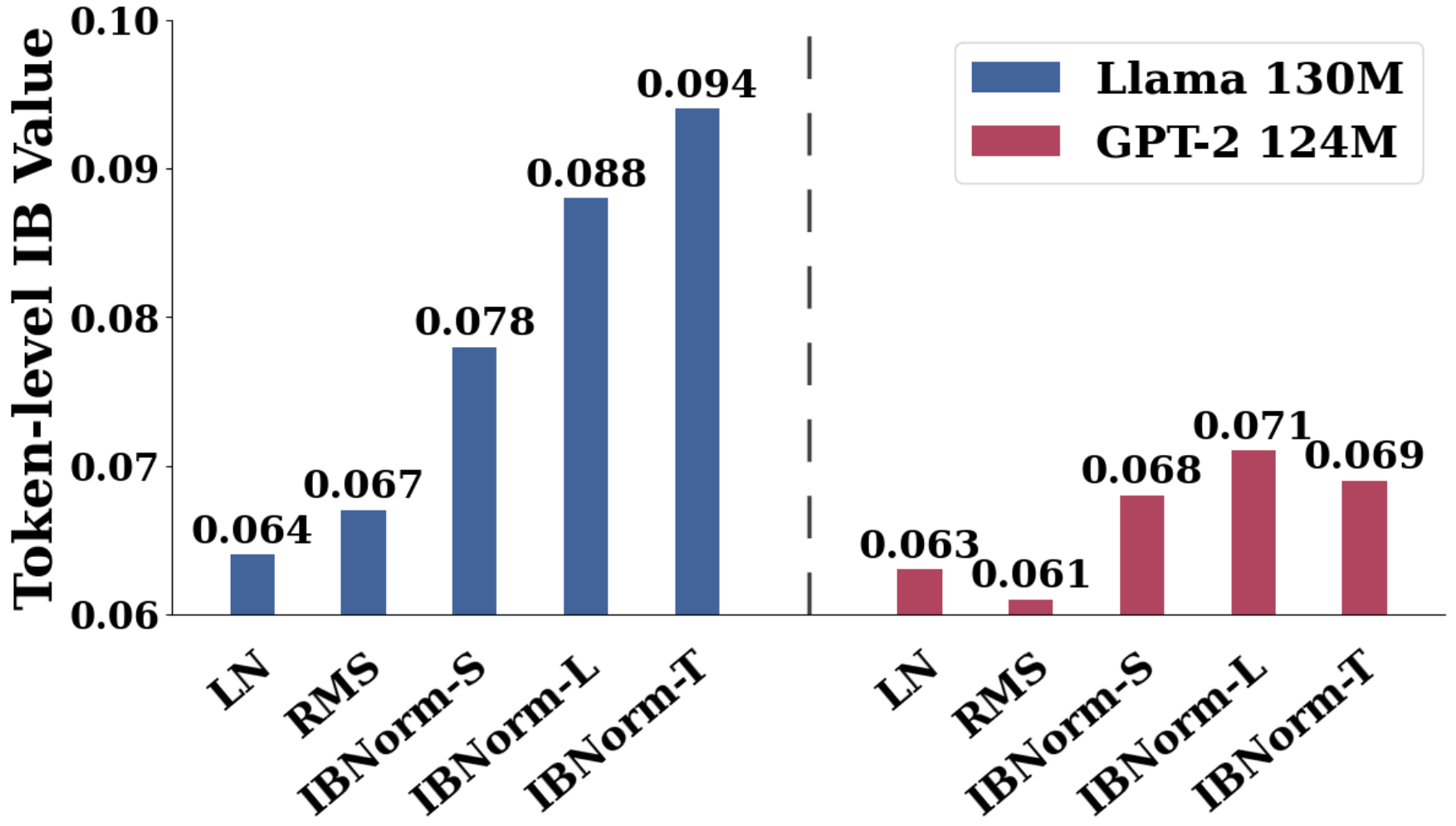}
        \caption{\re{Token-level IB values of Llama-130M and GPT-2 small trained on C4 and OpenWebText, respectively.}}
        \label{img:th0}
    \end{subfigure}
    \hfill
    \begin{subfigure}[H]{0.48\linewidth}
        \centering
        \includegraphics[width=0.8\linewidth]{./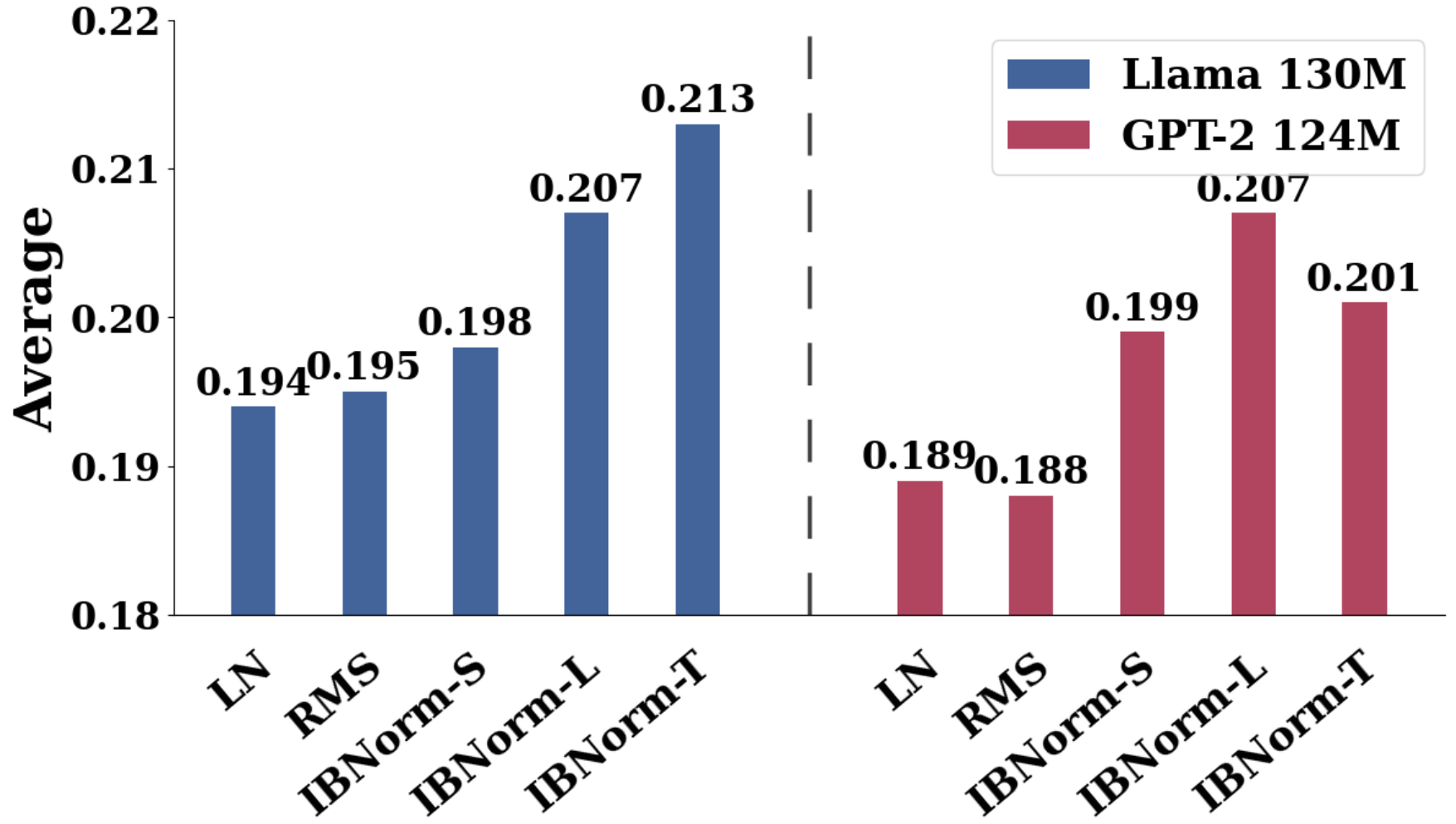}
        \caption{\re{Test performance of Llama-130M and GPT-2 small evaluated on the LLM Leaderboard II.}}
        \label{img:th1}
    \end{subfigure}
    \caption{Evaluation of different normalization methods on Llama-130M and GPT-2 small. (a) shows token-level IB values evaluated at the test dataset when training only the normalization layers, and (b) reports test performance on the LLM Leaderboard II.}
    \label{fig:thm}
    \vspace{-1.5em}
\end{figure*}

\subsection{Theoretical Justification of IBNorm}
\label{sec:theory}

Here we first show that IBNorm achieves a larger IB value in Eqn.~\eqref{ibpro1} than standard normalization methods, and then prove its superior generalization performance.

\noindent\textbf{Information-theoretic superiority of IBNorm.} 
To analyze normalization, we consider a simplified setting with two one-layer neural networks, $f_{\text{S}}=\Phi_\text{s}\circ h$ and $f_{\text{IB}}=\Phi_\text{IB}\circ h$, that share the same feature extractor $h$ but differ in their normalization layers: $\Phi_s  \equiv \eta \circ \psi \circ \zeta$ in the baseline network $f_\text{S}$ and $\Phi_{IB}  \equiv \eta \circ \psi \circ s_\lambda \circ \zeta$ in the network $f_\text{IB}$ with IBNorm.

Given the sample dataset $S=\{x_i,y_i\}_{i=1}^{M} \sim \mathbb{P}(X,Y)$ containing $M (<\infty)$ samples, we define the empirical IB value as $\widehat{\text{IB}}_{\text{S}}(T_1) := \hat{I}_{\text{S}}(Y;T_1) - \beta \hat{I}_{\text{S}}(X;T_1)$ based on Eqn.~\eqref{ibpro1}; therefore, we aim to analyze the empirical IB value of features $T_s := f_\text{S}(X)$ and $T_{IB} := f_\text{IB}(X)$.

Based on the bounded compression property of IBNorm, we can establish the following relationship.

\begin{theorem}[IB Value]
\label{thm1}
For any hyperparameter $\beta \in [0,1]$ and the sample dataset $S \sim \mathbb{P}(X,Y)$ of size $M$, we have
\begin{equation}
\fontsize{9}{3}\selectfont{
		\begin{aligned}
\widehat{\text{IB}}_S(T_{IB}) \ge \widehat{\text{IB}}_S(T_s) \quad \text{almost surely}.
\end{aligned}}
\end{equation}
\end{theorem}
The proof is provided in Appendix~\ref{app:proof1}. Theorem~\ref{thm1} establishes the theoretical advantage of IBNorm: its compressed features yield  higher IB values than those from normalization with standardization, such as LN and BN. Intuitively, IBNorm better preserves label-relevant information $I(Y;T_{IB})$ while suppressing task-nuisance information $I(X;T_{IB})$, leading to more efficient and predictive representations. Crucially, this benefit holds uniformly across all trade-off parameters $\beta > 0$, underscoring the robustness of IBNorm in balancing compression and informativeness.  

This result aligns with our empirical results. We train Llama-130M and GPT-2 small with frozen parameters except the normalization layers on C4~\citep{c4} and OpenWebText~\citep{openwebtext}, respectively. As shown in Fig.~\ref{img:th0}, IBNorm consistently achieves higher token-level IB
values ($\beta=1$) compared to variance-centric normalizations, confirming that the compression operation improves information-theoretic optimality. See details in token-level IB value calculation in Appendix~\ref{app:mi}. 
 
\noindent\textbf{Generalization superiority of IBNorm.}
We next extend the analysis to deep multi-layer networks. Motivated by the one-layer result, we expect that IBNorm can increase IB values of multi-layer networks (Appendix~\ref{app:expmi}), thereby enhancing both expressiveness and compression. To formalize this, we study the generalization gap which measures difference between the expected and empirical-training loss:
\begin{equation}
\fontsize{9}{3}\selectfont{
		\begin{aligned}
	\text{gen}(S) := \mathbb{E}_{X,Y}[\mathcal{L}(f(X),Y)] - \tfrac{1}{M}\sum\nolimits_{i=1}^M \mathcal{L}(f(x_i),y_i),
\end{aligned}}
\end{equation}
where $\mathcal{L}$ denotes a per-sample loss. As shown in prior works~\citep{gen1,gen2,gen3,gen4}, a smaller $\text{gen}(S)$ indicates improved generalization performance and robustness theoretically. Without loss of generality, we assume $\beta=1$.

\begin{corollary}[Generalization Bound]
	\label{thm3}
	With probability at least $1-\delta$ over training set $S$ of size $M<\infty$, the generalization error of a L-layer network $f_{\text{o}}$ satisfies
    \vspace{-0.1em}
\begin{equation}
\vspace{-0.1em}
\fontsize{9}{3}\selectfont{
		\begin{aligned}
	& \text{gen}(S;f_{\text{o}}) \;\leq\; U_{\text{o}}:=\sum\nolimits_{l=1}^L U_l^{\text{o}}, \\
	& \text{with} \ \ U^{\text{o}}_l = \sqrt{\tfrac{-\text{IB}(T_l^{\text{o}})+C_{S}+\log(\frac{2}{\delta})}{M}},
    \end{aligned}}
	\end{equation}
where $T^{\text{o}}_l$ is the intermediate representation at layer $l$, and $C_{S}$ is a term depending on the cardinality of the training set $S$. Here $f_{\text{o}}$ can be the network $f_{\text{S}}$ using standard normalization like  LN and BN, or the network $f_{\text{IB}}$ with IBNorm.
\end{corollary}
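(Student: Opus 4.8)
The plan is to apply a single-representation information-theoretic generalization bound at each layer and then aggregate the $L$ resulting inequalities. Fix $f_{\text{o}}\in\{f_{\text{S}},f_{\text{IB}}\}$; for each layer $l$, regard the pre-recovery feature $\tilde T_l^{\text{o}}$ of Eqn.~\eqref{eq:feature} as ``the representation'', so that the network output factors as $\hat Y=g_l^{\text{o}}(\tilde T_l^{\text{o}})$ through a fixed (training-set-dependent) downstream map $g_l^{\text{o}}$ that includes the invertible $\eta$. Under the standing boundedness/sub-Gaussianity of the per-sample loss $\mathcal L$, a PAC-Bayes / conditional-mutual-information argument of the kind used in information-theoretic generalization analyses \citep{gen1,ib4} gives: with probability at least $1-\delta'$ over $S$,
\[
\mathrm{gen}(S;f_{\text{o}}) \;\le\; \sqrt{\tfrac{1}{M}\bigl(I(X;\tilde T_l^{\text{o}}\mid Y)+C'(S,f_{\text{o}})+\log(2/\delta')\bigr)}.
\]
The conditioning on $Y$ is the key ingredient: since $\mathcal L$ depends only on the pair $(\tilde T_l^{\text{o}},Y)$, only the label-irrelevant content of $\tilde T_l^{\text{o}}$ can cause overfitting, and that is exactly $I(X;\tilde T_l^{\text{o}}\mid Y)$ --- one takes the $Y$-conditional marginal of $\tilde T_l^{\text{o}}$ as the PAC-Bayes prior, so the complexity term is the expected Kullback--Leibler divergence from the per-instance law of $\tilde T_l^{\text{o}}$ to this prior, which integrates (over $(X,Y)$) to $I(X;\tilde T_l^{\text{o}}\mid Y)$. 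All remaining pieces --- the range/sub-Gaussian parameter of $\mathcal L$, the empirical loss, Lipschitz constants linking perturbations of $\tilde T_l^{\text{o}}$ to $\mathcal L$, and any covering/Rademacher term controlling the data-dependent downstream map $g_l^{\text{o}}$ on $S$ --- are absorbed into $C'(S,f_{\text{o}})$, which is the $C(S,f_{\text{o}})$ of the statement.

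For the aggregation step I would take a union bound over $l=1,\dots,L$ with budget $\delta'=\delta/L$, absorbing the extra $\log L$ into $C(S,f_{\text{o}})$. Then, with probability at least $1-\delta$, the display above holds for every $l$ simultaneously, so $\mathrm{gen}(S;f_{\text{o}})\le\min_{1\le l\le L}U_l^{\text{o}}$ with $U_l^{\text{o}}=\sqrt{\tfrac{1}{M}\bigl(I(X;\tilde T_l^{\text{o}}\mid Y)+C(S,f_{\text{o}})+\log(2/\delta)\bigr)}$; since each $U_l^{\text{o}}\ge0$ this yields the (deliberately loose) aggregate form $\mathrm{gen}(S;f_{\text{o}})\le\sum_{l=1}^L U_l^{\text{o}}=U_{\text{o}}$, and the argument is identical for $f_{\text{S}}$ and $f_{\text{IB}}$. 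An alternative route that keeps the sum ``honest'' is to decompose $\mathrm{gen}$ into per-stage gaps along the Markov chain $X-\tilde T_1^{\text{o}}-\cdots-\tilde T_L^{\text{o}}$ by a layer-by-layer hybrid argument, bound each stage gap by the single-stage display, pool the terms, and finish with $\sqrt{\sum_l a_l}\le\sum_l\sqrt{a_l}$.

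To reach the intended conclusion that $f_{\text{IB}}$ enjoys the tighter bound, note that $f_{\text{S}}$ and $f_{\text{IB}}$ share the feature extractor $h$ and differ only by inserting the bounded-compression operator $s_\lambda$ before $\psi$; hence $\tilde T_l^{IB}$ is a measurable post-processing of the matching $\tilde T_l^{S}$ through $s_\lambda$, and the same contraction/data-processing reasoning that proves Theorem~\ref{thm1} --- the bounded-compression property \eqref{eq:bounded_compression} shrinks input-side information while preserving label-relevant information --- gives $I(X;\tilde T_l^{IB}\mid Y)\le I(X;\tilde T_l^{S}\mid Y)$ for all $l$, hence $U_l^{IB}\le U_l^{S}$ termwise and $U_{\text{IB}}\le U_{\text{S}}$. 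I expect the main obstacle to be the single-layer bound, not the aggregation: one must first ensure the terms $I(X;\tilde T_l^{\text{o}}\mid Y)$ are well defined (this invokes the standing smoothing/quantization or stochastic-layer assumption, since for strictly deterministic layers they can be $+\infty$), and then control the complexity contribution of the \emph{data-dependent} downstream map $g_l^{\text{o}}$ so that using $\tilde T_l^{\text{o}}$ as the PAC-Bayes representation does not smuggle uncontrolled dependence on $S$ back in --- which is exactly what a careful definition of $C(S,f_{\text{o}})$ must absorb.
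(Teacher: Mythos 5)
Your proposal and the paper diverge mainly in what they treat as given: the paper's entire proof of Corollary~\ref{thm3} is a one-line appeal to Theorem~2 of \citep{ibb}, which is quoted as already delivering the multi-layer form $\text{gen}(S;f_{\text{o}})\le\sum_l U_l^{\text{o}}$ with $U_l=\mathcal{O}\bigl(\sqrt{(I(X;\tilde T_l^{\text{o}}|Y)+C(S,f_{\text{o}}))/M}\bigr)$; the corollary then just makes the $\log(2/\delta)$ explicit, and the data-dependence of $C(S,f_{\text{o}})$ is handled by a remark rather than an argument. You instead try to reconstruct that cited theorem: a per-layer conditional-mutual-information/PAC-Bayes bound with a $Y$-conditional prior, a union bound over $l$ with the $\log L$ absorbed into $C$, and a (deliberately loose) summation. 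That reconstruction is reasonable in outline and even more forthcoming than the paper about the genuine difficulties (well-definedness of $I(X;\tilde T_l^{\text{o}}|Y)$ for deterministic layers, and the fact that the downstream map $g_l^{\text{o}}$ is trained on $S$, so its complexity must be controlled inside $C(S,f_{\text{o}})$ rather than silently ignored). But note that the pivotal single-layer inequality is asserted ("a PAC-Bayes / conditional-mutual-information argument \ldots gives"), not proved, and that is exactly the content the paper outsources to the citation; so neither your proposal nor the paper's appendix is self-contained, and yours is not wrong so much as incomplete at the same point where the paper defers to \citep{ibb}.

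Two smaller points. First, your aggregation differs cosmetically: in the cited theorem the sum over layers is part of the stated bound, whereas you obtain a bound for each $l$ separately and then sum nonnegative terms; the end result is the same (and, as you note, the minimum over $l$ would be tighter). Second, your closing argument that $I(X;\tilde T_l^{IB}|Y)\le I(X;\tilde T_l^{S}|Y)$ termwise by data processing is not part of the corollary and is not how the paper argues $U_{\text{IB}}\le U_{\text{S}}$ (it invokes Theorem~\ref{thm1}); moreover its premise is shaky, since $\tilde T_{IB}=\psi\circ s_\lambda\circ\zeta\circ h(X)$ applies the compression \emph{before} the (non-invertible) standardization $\psi$, so $\tilde T_{IB}$ is not in general a measurable post-processing of $\tilde T_{S}$, and the conditional data-processing inequality does not directly apply.
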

\vspace{-0.5em}
See Appendix~\ref{app:proof3} for the proof. Corollary~\ref{thm3} establishes that the overall generalization gap depends on the quantity $-\text{IB}(T_l^{\text{o}})$ which measures the amount of task-irrelevant input information preserved in the representations.

By Theorem~\ref{thm1}, IBNorm increases the IB value of the population distribution at each layer, thereby reducing $-\text{IB}(T_l^{\text{IB}})$ compared to $-\text{IB}(T_l^{\text{S}})$. Summing across layers, we obtain $U_{\text{IB}} \;\leq\; U_{\text{S}},$ which shows that the generalization bound of the IBNorm network is tighter than that of the standard
network. This implies that $f_{\text{IB}}$ achieves superior generalization performance compared to $f_{\text{S}}$. In Fig.~\ref{img:th1}, we follow the setting in Fig.~\ref{img:th0} to train Llama-130M and GPT-2 small. Evaluation under LLM Leaderboard II shows that IBNorm improves generalization performance, consistent with the information-theoretic ~\citep{ibb} and empirical studies~\citep{d1,d2}.
\vspace{-0.75em}
\section{Experiment}
\vspace{-0.25em}
\label{sec:exp}
We evaluate three variants—IBNorm-S, IBNorm-L, and IBNorm-T—across LLM pretraining and vision classification, comparing against LN~\citep{layernorm}, BN~\citep{batchnorm}, RMSNorm~\citep{rmsnorm}, and NormalNorm~\citep{normal}.

\noindent\textbf{LLM pretraining.}  
We pretrain LLaMA series~\citep{llama2} (60M-1B) on C4~\citep{c4}, following~\citep{l1,l2}, and GPT-2 (124M and 355M) on OpenWebText~\citep{openwebtext}, following the Sophia setup~\citep{sophia} with the nanoGPT implementation~\citep{nano}. See  details  in Appendix~\ref{app:exp1}. We report performance on LLM Leaderboards I \& II~\citep{leaderboard1,leaderboard2}.

\noindent\textbf{Vision training.}  
We train ResNet-18 on CIFAR-10~\citep{cifar} and ResNet-50 and ViT on ImageNet~\citep{imagenet}, following the setup~\citep{normal}. Both top-1 and top-5 accuracies are reported. Detailed settings are provided in Appendix~\ref{app:exp2}. Extended results of ViT-S/16 and ViT-B/16~\citep{vit} on ImageNet can be found in Appendix~\ref{app:imagenet}.

\vspace{-2em}
\subsection{Main Results}
\begin{table*}[t]
\centering
\caption{\re{Results of Llama models evaluated on LLM Leaderboard II and I. Due to space limitation, we defer the  results of each task on Leadboard I to  Tab.~\ref{llama} of Appendix~\ref{LeaderboardI}.}}
\label{llama2}
\scalebox{0.75}{
\begin{tabular}{c|c||cccccc|c||c}
\toprule
\multirow{2}{*}{\textbf{Model}} 
& \multirow{2}{*}{\textbf{Normalization}} 
& \multicolumn{7}{c||}{\textbf{Leaderboard II}} 
& \textbf{Leaderboard I} \\
& 
& \textbf{IFEval} ($\uparrow$) 
& \textbf{BBH} ($\uparrow$) 
& \textbf{MATH} ($\uparrow$) 
& \textbf{GPQA} ($\uparrow$) 
& \textbf{MUSR} ($\uparrow$) 
& \textbf{MMLU-PRO} ($\uparrow$) 
& \textbf{AVG} ($\uparrow$) 
& \textbf{AVG} ($\uparrow$) \\
\midrule
\multirow{6}{*}{\rotatebox{90}{Llama 60M}} 
& LayerNorm      & 0.1479 & 0.2970 & 0.0000 & 0.2576 & 0.3426 & 0.1071 & 0.1920 & 0.2912 \\
& RMSNorm    & 0.1274 & 0.2951 & 0.0000 & 0.2634 & 0.3585 & 0.1117 & 0.1924 & 0.2894 \\
& NormalNorm   & 0.1101 & 0.2876 & 0.0000 & 0.2676 & 0.3571 & 0.1166 & 0.1898& 0.2825 \\
\cline{2-10}
& IBNorm-S   & 0.1915 & 0.2732 & 0.0000 & 0.2534 & 0.3651 & 0.1152 & 0.1997 & 0.2950 \\ 
& IBNorm-L    & 0.1899 & 0.3001 & 0.0000 & 0.2772 & 0.3519 & 0.1137 & \textbf{0.2055}& \textbf{0.2955} \\
& IBNorm-T     & 0.1754 & 0.3013 & 0.0008 & 0.2668 & 0.3598 & 0.1110 & 0.2025 & 0.2937 \\
\midrule
\multirow{6}{*}{\rotatebox{90}{Llama 130M}} 
& LayerNorm      & 0.1507 & 0.2949 & 0.0000 & 0.2650 & 0.3417 & 0.1116 & 0.1940 & 0.2921 \\ 
& RMSNorm    & 0.1274 & 0.3052 & 0.0008 & 0.2643 & 0.3572 & 0.1121 & 0.1945 & 0.2911 \\
& NormalNorm   & 0.1385 & 0.2888 & 0.0000 & 0.2584 & 0.3611 & 0.1098 & 0.1928 & 0.2925 \\
\cline{2-10}
& IBNorm-S   & 0.1343 & 0.3114 & 0.0008 & 0.2601 & 0.3690 & 0.1151 & 0.1984 & 0.2933 \\ 
& IBNorm-L      & 0.1836 & 0.3028 & 0.0000 & 0.2816 & 0.3638 & 0.1124 & 0.2074& 0.2962 \\
& IBNorm-T     & 0.1999 & 0.3053 & 0.0000 & 0.2777 & 0.3783 & 0.1165 & \textbf{0.2130} & \textbf{0.2970} \\
\midrule
\multirow{6}{*}{\rotatebox{90}{Llama 350M}} 
& LayerNorm      & 0.1668 & 0.2927 & 0.0000 & 0.2659 & 0.3586 & 0.1178 & 0.2003 & 0.3062 \\ 
& RMSNorm    & 0.1673 & 0.2999 & 0.0030 & 0.2667 & 0.3571& 0.1121 & 0.2010 & 0.3025 \\
& NormalNorm   & 0.1667 & 0.2940 & 0.0008 & 0.2458 & 0.3599 & 0.1103 & 0.1962 & 0.3035 \\
\cline{2-10}
& IBNorm-S   & 0.1447 & 0.3020 & 0.0030 & 0.2940 & 0.3585 & 0.1094 & 0.2019 & 0.3089 \\ 
& IBNorm-L      & 0.1872 & 0.3135 & 0.0000 & 0.2968 & 0.3558 & 0.1162 & 0.2116 & \textbf{0.3101} \\
& IBNorm-T  & 0.1797 & 0.3115 & 0.0008 & 0.2983 & 0.3788 & 0.1150 & \textbf{0.2140} & 0.3089 \\
\midrule
\multirow{6}{*}{\rotatebox{90}{Llama 1B}} 
& LayerNorm      & 0.1701 & 0.3024 & 0.0038 & 0.2691 & 0.3683 & 0.1183 & 0.2053 & 0.3138 \\ 
& RMSNorm    & 0.1693 & 0.3005 & 0.0008 & 0.2685 & 0.3585 & 0.1139 & 0.2019 & 0.3134 \\ 
& NormalNorm   & 0.1679 & 0.2971 & 0.0030 & 0.2576 & 0.3656 & 0.1127 & 0.2006 & 0.3051 \\ 
\cline{2-10}
& IBNorm-S   & 0.1572 & 0.3118 & 0.0030 & 0.2958 & 0.3597 & 0.1130 & 0.2067 & 0.3184 \\ 
& IBNorm-L   & 0.1875 & 0.3167 & 0.0038 & 0.2986 & 0.3717 & 0.1171 & 0.2159 & 0.3179 \\ 
& IBNorm-T  & 0.1829 & 0.3155 & 0.0030 & 0.2994 & 0.3793 & 0.1169 & \textbf{0.2162} & \textbf{0.3199} \\ 
\bottomrule
\end{tabular}}
\vspace{-0.1em}
\end{table*}

\begin{table*}[t]
\centering
\caption{\re{Results of GPT-2 models evaluated on LLM Leaderboard II and I. Due to space limitation, we defer the  results of each task on Leadboard I to  Tab.~\ref{gpt2} of Appendix~\ref{LeaderboardI}.}}
\label{gpt22}
\scalebox{0.75}{
\begin{tabular}{c|c||cccccc|c||c}
\toprule
\multirow{2}{*}{\textbf{Model}} 
& \multirow{2}{*}{\textbf{Normalization}} 
& \multicolumn{7}{c||}{\textbf{Leaderboard II}} 
& \textbf{Leaderboard I} \\
& 
& \textbf{IFEval} ($\uparrow$) 
& \textbf{BBH} ($\uparrow$) 
& \textbf{MATH} ($\uparrow$) 
& \textbf{GPQA} ($\uparrow$) 
& \textbf{MUSR} ($\uparrow$) 
& \textbf{MMLU-PRO} ($\uparrow$) 
& \textbf{AVG} ($\uparrow$) 
& \textbf{AVG} ($\uparrow$) \\
\midrule

\multirow{5}{*}{\rotatebox{90}{GPT-2 124M}} 
& LayerNorm      & 0.1010 & 0.2809 &0.0000 & 0.2718 & 0.3624 & 0.1158 & 0.1886 & 0.2887 \\ 
& RMSNorm    & 0.0966 & 0.2772 & 0.0008 & 0.2651 & 0.3730 & 0.1154 & 0.1880 & 0.2860 \\
& NormalNorm   & 0.1499 & 0.2913 & 0.0000 & 0.2693 & 0.3690 & 0.1119 & 0.1986& 0.2927 \\
\cline{2-10}
& IBNorm-S     & 0.1860 & 0.2793 & 0.0000 & 0.2619 & 0.3540 & 0.1149 & 0.1994 & 0.2934 \\ 
& IBNorm-L      & 0.2207 & 0.2850 & 0.0000 & 0.2650 & 0.3558 & 0.1134 & \textbf{0.2067} & \textbf{0.2970} \\ 
& IBNorm-T     & 0.1836 & 0.2810 & 0.0000 & 0.2697 & 0.3571 & 0.1158 & 0.2012 & 0.2952 \\
\midrule
\multirow{5}{*}{\rotatebox{90}{GPT-2 355M}}  
& LayerNorm      & 0.1128 & 0.2835 & 0.0000 & 0.2799& 0.3661 & 0.1106 & 0.1922 & 0.2983 \\ 
& RMSNorm    & 0.1395 & 0.2796 & 0.0000 & 0.2694 & 0.3707 & 0.1090 & 0.1947& 0.2950 \\
& NormalNorm   & 0.1425 & 0.2899 & 0.0000 & 0.2702 & 0.3708 & 0.1187 & 0.1987 & 0.2955 \\
\cline{2-10}
& IBNorm-S     & 0.2069 & 0.2802 & 0.0000 & 0.2716 & 0.3692 & 0.1176 & 0.2076 & 0.3011 \\ 
& IBNorm-L      & 0.2258 & 0.2867 & 0.0000 & 0.2771 & 0.3687 & 0.1163 & \textbf{0.2124} & \textbf{0.3035} \\ 
& IBNorm-T     & 0.1964 & 0.2824 & 0.0008 & 0.2780 & 0.3719 & 0.1183 & 0.2080 & \textbf{0.3035} \\
\bottomrule
\end{tabular}}
\vspace{-1em}
\end{table*}
\begin{table*}[t]
	\centering
	\caption{
		Accuracy (\%) on image classification tasks. We report BatchNorm's results for CNNs (ResNet18/50) since it outperforms LayerNorm on CNNs, and present LayerNorm's results on Transformers (ViT) because it performs much better than BatchNorm.}
	\label{tab:imagenet}
    \vspace{-0.5em}
	\begin{center}
    \scalebox{0.8}{
	\begin{tabular}{c|cc|cc|cc}
		\toprule
\multirow{2}{*}{\textbf{Normalization}}
& \multicolumn{2}{c|}{\textbf{ResNet18 on CIFAR-10}}
& \multicolumn{2}{c|}{\textbf{ResNet50 on ImageNet}}
& \multicolumn{2}{c}{\textbf{ViT on ImageNet}} \\
& \textbf{Top-1} & \textbf{Top-5}
& \textbf{Top-1} & \textbf{Top-5}
& \textbf{Top-1} & \textbf{Top-5} \\
\midrule
		 BatchNorm   & 93.65   &   99.85   &  76.15   &   92.86  & --- & ---  \\
		 LayerNorm   & --- & ---  & --- & --- & 71.54   &  89.40  \\
	 RMSNorm    & --- & ---  & --- & --- &  71.03   &  89.15  \\ 
		 NormalNorm  & 94.01   &   99.86 &  77.29   &   94.04  &   75.25  &   \textbf{92.23}  \\
	\hline
		 IBNorm-S & 95.49 & 99.85 & 78.85 & 94.03 & 75.82 & 91.97 \\
		 IBNorm-L   &   \textbf{95.57}  &   \textbf{99.87}   &   \textbf{79.18}   &   \textbf{94.82}  &  \textbf{76.83}   &  92.22 \\
		 IBNorm-T   &   95.51   &   99.83 &   78.61   &   93.79  &  76.46  &  92.19  \\

		\bottomrule
	\end{tabular}}
		\end{center}
		\vspace{-1.3em}
\end{table*}

\textbf{Results on LLM}. Due to the space limitation, we defer the results of each task in Leaderboard I to  Appendix~\ref{LeaderboardI} and report results with statistical significance in Appendix~\ref{app:res_ss}.

As shown in Tab.~\ref{llama2}, IBNorm outperforms variance-centric normalization methods such as LN and RMSNorm. On LLaMA-60M, IBNorm-L achieves average scores of 0.2955 (Leaderboard I) and 0.2055 (Leaderboard II), improving over LN by 1.47\% and 7.03\%, and over RMSNorm by 2.11\% and 6.81\%. \re{On LLaMA-350M, IBNorm-T reaches 0.3101 (Leaderboard I) and 0.2140 (Leaderboard II), \textit{i.e.} 1.27\% and 6.84\% over LN, and 2.51\% and 6.46\% over RMSNorm.} On LLaMA-1B, IBNorm-T achieves the highest scores: 0.3199 on Leaderboard I and 0.2162 on Leaderboard II. Similarly, Tab.~\ref{gpt22} also shows that on GPT-2 (355M), IBNorm-L attains 0.3035 (Leaderboard I) and 0.2124 (Leaderboard II), yielding improvements of 2.88\% and 9.95\% over RMSNorm. 


 
These results demonstrate that IBNorm enhances representation quality in large language models, yielding more predictive and generalizable representations. Further analysis of mutual information and token-level IB values in Appendix~\ref{app:expmi} confirms that IBNorm promotes representations that retain task-relevant information while suppressing nuisances, explaining the observed empirical gains. However, IBNorm shows some failure cases (see Appendix~\ref{LeaderboardI}).

\noindent\textbf{Results for vision classification.}
Tab.~\ref{tab:imagenet} shows that IBNorm also improves performance across vision models and datase-
\begin{table}[H]
\centering
\caption{Ablation study of Llama 60M with normalization using different $\lambda$ evaluated on LLM Leaderboard I and LLM Leaderboard II. We defer the detailed results of each task to Appendix~\ref{app:ab2}.}
\label{lablation2}
\scalebox{0.76}{
\begin{tabular}{c|c||c|c}
    \toprule
\multirow{2}{*}{\textbf{Normalization}} 
& \multirow{2}{*}{\textbf{$\lambda$}} 
& \textbf{Leaderboard I} 
& \textbf{Leaderboard II} \\
& & \textbf{AVG} ($\uparrow$) 
& \textbf{AVG} ($\uparrow$) \\
\midrule
    \multirow{3}{*}{IBNorm-S}  
    & 0.5 & 0.2898 & 0.1977 \\ 
    & 4   & \textbf{0.2952} & \textbf{0.1987} \\
    & 8   & 0.2918 & 0.1981 \\
    \midrule
    \multirow{3}{*}{IBNorm-L}  
    & 0.5 & 0.2863 & 0.2018 \\ 
    & 4   & \textbf{0.2955} & \textbf{0.2055} \\
    & 8   & 0.2921 & 0.1881 \\
    \midrule
    \multirow{3}{*}{IBNorm-T}  
    & 0.5 & 0.2881 & 0.1984 \\ 
    & 4   & \textbf{0.2937} & \textbf{0.2025} \\
    & 8   & 0.2898 & 0.2005 \\
    \bottomrule
\end{tabular}}
\vspace{-1.3em}
\end{table}

\begin{table}[H]
\centering
\caption{Ablation study of Llama 60M evaluated on LLM Leaderboard I and LLM Leaderboard II. Due to space limitation, we defer the detailed results of each task to Appendix~\ref{app:ab1}.}
\label{ablation2}
\scalebox{0.78}{
\begin{tabular}{c||c|c}
\toprule
\multirow{2}{*}{\textbf{Normalization}} 
& \textbf{Leaderboard I} 
& \textbf{Leaderboard II} \\
& \textbf{AVG} ($\uparrow$)
& \textbf{AVG} ($\uparrow$)\\
\midrule
IBNorm-S*   & 0.2952 & 0.1978 \\ 
IBNorm-S**  & 0.2881 & 0.1945 \\
\midrule
IBNorm-T*   & 0.2936 & 0.1956 \\ 
IBNorm-T**  & 0.2831 & 0.1940 \\
\midrule
IBNorm-L*   & 0.2905 & 0.1967 \\ 
IBNorm-L**  & 0.2859 & 0.1934 \\
\bottomrule
\end{tabular}}
\vspace{-1em}
\end{table}
ts. On CIFAR-10, ResNet18 with IBNorm-L achieves 95.57\% top-1 accuracy, surpassing BN and NormalNorm by 1.92\% and 1.56\%. On ImageNet, ResNet50 with IBNorm-L improves over BN and NormalNorm by 3.03\% and 1.89\%. For ViT, IBNorm-L yields top-1 accuracy gains of 5.29\%, 5.80\%, and 1.58\% over LN, RMSNorm, and NormalNorm.
\vspace{-0.5em}
\subsection{Ablation Study}
\noindent{\textbf{Effects of compression hyper-parameter $\lambda$.}}  
We analyze the effect of the hyperparameter $\lambda$ in our proposed IBNorm, which controls the strength of the compression on intermediate activations. We consider $\lambda = 0.5, 4, 8$, corresponding to mild, moderate, and strong compression, respectively. As shown in Tab.~\ref{lablation2}, the results show that moderate compression ($\lambda = 4$) achieves the best performance, striking a balance between preserving task-relevant information and suppressing irrelevant variability. For instance, IBNorm-S achieves the best scores of 0.2952 and 0.1987. We find that overly mild and strong compression lead to suboptimal performance, excessively reducing the representation diversity.

\noindent{\textbf{Effects of compression operation and linear affine reparameterization.}} 
To further clarify the design choices underlying our proposed IBNorm, we perform ablation studies on two key components: the compression operation and the standardization step. As reported in Tab.~\ref{ablation2}, the variant of IBNorm marked with an asterisk (*) applies standardization before the compression operation, namely, $\text{IBNorm}(\bm{x}; \lambda) \equiv \eta \circ s_\lambda \circ \psi  \circ \zeta$. This modification leads to only minor fluctuations in performance compared to the baseline: for example, the baseline achieves average scores of 0.2950 and 0.1997, while IBNorm-S* obtains 0.2952 and 0.1978 on Leaderboard I and II, respectively. These results indicate that applying compression operation prior to standardization is preferable for optimal performance. In addition, the variant denoted by a double asterisk (**) removes the linear affine component, which results in a substantial performance drop across benchmarks. For instance, IBNorm-L achieves 0.2955 and 0.2055, whereas IBNorm-L** drops to 0.2859 and 0.1934 on Leaderboard I and II. This pronounced degradation highlights the critical role of the affine reparameterization $(\eta)$ in the IBNorm design.

\vspace{-0.5em}
\section{Conclusion}
We revisited normalization through the lens of IB principle and proposed a principled framework for designing normalization layers. We introduced \textbf{IBNorm}, a family of nonlinear transforms that compress activations toward their mean, yielding more information-rich representations. \re{Without relying on explicit IB objectives during training, IBNorm remains model-agnostic and incurs no heavy computational overhead.} Theoretically, we showed that IBNorm achieves stronger information-theoretic optimality—attaining higher IB values and tighter generalization bounds—than variance-centric methods. Empirically, IBNorm outperforms BN, LN, and RMSNorm across vision and language domains.

\noindent\textbf{Limitations.}  
Our experiments are limited to medium-scale LLMs due to computational constraints. Extending the evaluation to larger foundation models will be addressed when sufficient computational resources are available.

\newpage

\section*{Impact Statement}
The pretrained LLMs employed in this research may reflect biases or generate sensitive or potentially offensive content, intended solely for academic and scientific purposes. The opinions expressed within generated outputs do not represent the views of the authors. We remain committed to fostering the development of AI technologies which align with ethical standards and reflect societal values.

\bibliography{ref}
\bibliographystyle{icml2026}

\newpage
\appendix
\onecolumn

\section{IBNorm}
\label{IBNormAlg}

\subsection{Comparison with Task-Irrelevant Information Compression Methods in Representation Learning}
\label{app:ib1}

Prior works~\citep{ib,ib2,ib4,re3,re4,h2} explore the IB theory and representation learning methods, such as whitening, and empirically shows how hidden layers with activation nonlinearities and whitening modules aiming to decorrelate features and stabilize training can compress task-irrelevant information. 

While these findings provide valuable empirical insights, there are two notable gaps unaddressed: (1) they do not examine how normalization layers influence the decomposition of task-relevant and task-irrelevant mutual information within neural representations, and (2) they do not provide a theoretical framework for how to design the normalization following the IB framework to facilitate more informative and robust representations. Importantly, activation and normalization are conceptually and structurally distinct: In modern LLMs (e.g., LLaMA, GPT), nonlinear activations occur only in the gated branch of the MLP, and do not influence the representations within each transformer block. In these architectures, inserting tanh as an "activation" would not propagate compression throughout the model, since the gating branch represents only one component of the block.

In contrast, IBNorm applies compression directly the normalization layer, which is invoked throughout the model—after each attention block and MLP block in LLMs. This placement ensures that compression systematically modulates the IB value of the representation at every layer, producing effects that cannot be achieved by modifying the activation function alone. In addition, recent studies on normalization module in Transformer~\citep{d1,d2} replace variance-centric normalization layers with point-wise functions such as Dynamic Tanh, showing that the resulting performance gains arise primarily from improved generalization rather than increased fitting capacity. This observation is consistent with the IB principle underlying our work. Our proposed IBNorm is grounded on the IB-theoretic analysis of normalization and demonstrates how different IB-motivated compression operations shape higher-order statistics, aligning normalization behavior with the IB principle.

\subsection{Comparison with Training Pipelines Based on Explicit Information Bottleneck Objectives}
\label{app:ib2}
Methods~\citep{ibbb2,ibb1,ibbb5} focus on estimating mutual information by parameterizing the IB principle with pretrained neural MI estimators and then guiding the training of deep learning models using the explicit estimated IB objective. These approaches typically rely on encoder-decoder architectures (e.g., VAEs), require large training datasets to train accurate and lightweight MI estimators before training the deep learning models, and introduce additional estimated MI-regularization loss functions during training the deep learning models. These components introduce extra hyperparameters, increase computational cost, and may lead to unstable optimization. Specifically, these approaches face two key limitations when applied to deep learning models, such as autoregressive LLMs where the sequential decoding process complicates the design and increases the inference cost of MI estimators, in practice:

\textbf{MI Neural Estimator Architecture and Computational Overhead:} These methods require additional MI neural estimators with architectural constraints alongside the main network, significantly increasing heavy inference computational cost and complicating optimization. In addition, designing such neural estimators is particularly difficult for autoregressive LLMs due to their sequential decoding characteristics.

\textbf{Sample Complexity:} Accurate MI estimation, as noted in MINE~\citep{ibbb5}, requires large sample sizes, which is challenging for models with long-context inputs and requires heavy training cost.

IBNorm fills this gap by providing the first theoretical analysis of normalization under the IB principle. Specifically, IBNorm incorporates the IB principle within the normalization operation. It requires no auxiliary MI estimators, no additional IB-based training losses, and no architectural constraints. The compression operation introduced in IBNorm is not an external module but a theoretically grounded reformulation of normalization that increases the IB value of intermediate representations while adding minimal computational overhead.

We highlight the key advantages of IBNorm here:

\textbf{Systematic Theoretical Grounding:} We provide a theoretical analysis of widely used variance-centric normalization methods. This analysis motivates a principled mechanism for designing normalization layers that follow IB principle, as shown in Sec.~\ref{sec:theory}.

\textbf{Computational Efficiency:} Unlike~\citep{ibbb2,ibb1,ibbb5}, IBNorm does not rely on computationally heavy MI neural estimators or explicit IB regularizers, making it more efficient and stable during training. We include the comparison of training time and VRAM across IBNorm and baselines in Appendix~\ref{app:eff}.

\textbf{Compatibility:} Probabilistic coding method like Structured Probabilistic Coding (SPC)~\citep{spc} introduces an IB-inspired coding mechanism, but it is limited to encoder-only models and does not examine normalization layers. IBNorm is model-agnostic and can be directly integrated into Transformers (e.g., LLaMA, GPT-2) and CNNs (e.g., ResNet) without modifying the training pipeline, as demonstrated in Sec.~\ref{sec:exp}.

\subsection{IBNorm Algorithm}
\label{app:ib3}
Formally, given an activation input $\bm{x}$, IBNorm applies the following sequence of operations: first, the NAP operator $\zeta$ partitions features in the same way as LayerNorm; next, a compression operator $s_\lambda$ reduces nuisance variability; then, the NOP operator $\psi$ standardizes activations; and finally, the NRR operator $\eta$ rescales and shifts them:  
\begin{equation}
	\text{IBNorm}(\bm{x}; \lambda) \equiv \eta \circ \psi \circ s_\lambda \circ \zeta.
\end{equation}

Here we summarize the IBNorm algorithmic steps in  Algorithm~\ref{alg:ibnorm}. By construction, IBNorm goes beyond variance normalization: it acts as an \emph{information filter}, enhancing sufficiency by retaining predictive information  while improving minimality through compression of irrelevant information in the input. In this way, IBNorm bridges the gap between practical optimization benefits and information-theoretic optimality.
\begin{algorithm}[H]
	\caption{IB-inspired normalization (IBNorm)}
	\label{alg:ibnorm}
	\begin{algorithmic}[1]
		\STATE \textbf{Input:} Activations $\bm{x} \in \mathbb{R}^{d \times m \times h \times w}$, hyperparameter $\lambda$
		\STATE \textbf{Output:} Normalized activations $\bm{y} \in \mathbb{R}^{d \times m \times h \times w}$
		\STATE Normalization area partitioning (NAP): $\bm{\hat x} \gets \zeta(\bm{x})$
		\STATE Compression operation: $\bm{\tilde x} \gets s_\lambda(\bm{\hat x})$
		\STATE Normalization operation (NOP): $\bm{\bar x} \gets \psi(\bm{\tilde x})$
		\STATE Normalization representation recovery (NRR): $\bm{\hat y} \gets \eta(\bm{\bar x})$
		\STATE Reshape back: $\bm{y} \gets \zeta^{-1}(\bm{\hat y})$
	\end{algorithmic}
\end{algorithm}

\section{NormalNorm}
\label{app:normalnorm}
We first provide the specific algorithm of NormalNorm in Algorithm~\ref{alg:normalnorm}.

\noindent{\textbf{Power Transform.}}
Consider a random variable $X$ from which a sample $\bm{x}=\{x_i\}_{i=1}^{H}$ is obtained. The power transform $g(\cdot; \lambda)$ gaussianizes $\bm{x}$ by applying the following function for each $x_i$:
\begin{equation}
g(x_i;\lambda) = 
\begin{cases}
\frac{(x_i + 1)^\lambda - 1}{\lambda}, & x_i \geq 0,\ \lambda \ne 0 \\
\log(x_i + 1), & x_i \geq 0,\ \lambda = 0 \\
-\frac{(-x_i + 1)^{2 - \lambda} - 1}{2 - \lambda}, & x_i < 0,\ \lambda \ne 2 \\
-\log(-x_i + 1), & x_i < 0,\ \lambda = 2,
\end{cases}
\end{equation}
where $\lambda$ is a transformation parameter typically estimated from the data to make the transformed values approximately normally distributed.

The key design difference lies in how NormalNorm and IBNorm shape hidden activation distributions. NormalNorm aims to maximize the entropy of hidden activations by applying a power transform after standardization. In contrast, our proposed IBNorm follows the IB principle and shapes the activation distributions via a \emph{compression operation} applied \emph{before} standardization.

While NormalNorm also attempts to explicitly shape activation distributions, encouraging Gaussian-like features through the power transform with additive noise, this approach has several limitations: 1) In practice, we find that dynamically varying $\lambda$ not only increases computational cost (\re{see Appendix~\ref{app:eff}}), but can also lead to training instability due to numerical issues, especially in the setting of lower precision like bfloat16. 2) Furthermore, because the power transform modifies both the first- and second-order statistics of the hidden activations, the resulting outputs after the NOP no longer have zero mean and unit variance. As a result, the network cannot fully benefit from traditional normalization techniques, such as improved training stability and scale invariance.

\begin{algorithm}[htbp]
\caption{Normal Normalization~\citep{normal}}
\label{alg:normalnorm}
\begin{algorithmic}[1]
\STATE \textbf{Input:} $\bm{u} = \{u_i\}_{i=1}^{N}$
\STATE \textbf{Input:} $\bm{y} = \{y_i\}_{i=1}^{N}$
\STATE \textbf{Learnable Parameters:} $\gamma$, $\beta$
\STATE \textbf{Noise Factor:} $\xi\geq 0$

\textbf{Standardization:}
\STATE $\hat{u} \gets \frac{1}{N} \sum_{i=1}^{N} u_i$
\STATE $\hat{\sigma}^2 \gets \frac{1}{N} \sum_{i=1}^{N} (u_i - \hat{u})^2$
\STATE $c_i \gets \frac{u_i - \hat{\mu}}{\sqrt{\hat{\sigma}^2 + \epsilon}}$

\textbf{Power Transform and Scaled Additive Noise:}
\STATE $x_i \gets g(c_i; \hat\lambda)$ 
\STATE with gradient tracking disabled:
\STATE$\bar{x}=\frac{1}{N}\sum^{N}_{i=1}x_i$
\STATE$s=\frac{1}{N}\sum^{N}_{i=1}|x_i-\bar{x}|$
\STATE sample $z_i\sim\mathcal{N}(0,1)$
\STATE$v_i=x_i+z_i\cdot s\cdot\xi$

\textbf{Affine Transform:}
\STATE $y_i \gets \gamma \cdot v_i + \beta$

\end{algorithmic}
\end{algorithm}

\clearpage

\section{Extended Empirical Results}
\label{app:res}

\subsection{Results on LLM Leaderboard I}
\label{LeaderboardI}
\begin{table}[H]
\centering
\caption{\re{Results of Llama models evaluated on LLM Leaderboard I.}}
\label{llama}
\scalebox{0.8}{
\begin{tabular}{c|c|cccccc|c}
\toprule
\textbf{Model} & \textbf{Normalization} & \textbf{ARC} ($\uparrow$) & \textbf{HellaSwag} ($\uparrow$) & \textbf{MMLU} ($\uparrow$) & \textbf{TruthfulQA} ($\uparrow$) & \textbf{Winogrande} ($\uparrow$) & \textbf{GSM8K} ($\uparrow$) & \textbf{AVG} ($\uparrow$) \\
\midrule
\multirow{6}{*}{Llama 60M}  
& LayerNorm      & 0.2150 & 0.2725 & 0.2557 & 0.4935 & 0.5075 & 0.0031 & 0.2912 \\
& RMSNorm    & 0.2244 & 0.2783 & 0.2496 & 0.4761 & 0.5059 & 0.0023 & 0.2894 \\
& NormalNorm   & 0.2270 & 0.2504 & 0.2295 & 0.4925 & 0.4957 & 0.0000 & 0.2825 \\
\cline{2-9}
& IBNorm-S   & 0.2631 & 0.2616 & 0.2526 & 0.4899 & 0.5028 & 0.0000 & 0.2950 \\
& IBNorm-L      & 0.2568 & 0.2651 & 0.2689 & 0.4966 & 0.4854 & 0.0000 & \textbf{0.2955} \\
& IBNorm-T     & 0.2270 & 0.2758 & 0.2620 & 0.4827 & 0.5099 & 0.0049 & 0.2937 \\
\midrule
\multirow{6}{*}{Llama 130M}  
& LayerNorm      & 0.2312 &  0.2795 & 0.2562 & 0.4754 &0.5059 & 0.0042 & 0.2921 \\
& RMSNorm    & 0.2210 & 0.2950 & 0.2494 & 0.4639 & 0.5154 & 0.0019 & 0.2911 \\
& NormalNorm   & 0.2244 & 0.2684 & 0.2621 & 0.4797 & 0.5178 & 0.0027 & 0.2925 \\
\cline{2-9}
& IBNorm-S   & 0.2295 & 0.2950 & 0.2616 & 0.4698 & 0.5012 & 0.0027 & 0.2933 \\
& IBNorm-L      & 0.2577 & 0.2697 & 0.2697 & 0.4892 & 0.4891 & 0.0019 & 0.2962 \\
& IBNorm-T     & 0.2267 & 0.2900 & 0.2619 & 0.4949 & 0.5043 & 0.0042 & \textbf{0.2970} \\
\midrule
\multirow{6}{*}{Llama 350M}  
& LayerNorm      & 0.2355 & 0.3286 & 0.2579 & 0.4889 & 0.5225 & 0.0038 & 0.3062 \\
& RMSNorm    & 0.2432 & 0.3155 & 0.2513 & 0.5201 & 0.4808 & 0.0038 & 0.3025 \\
& NormalNorm   & 0.2398 & 0.3031 & 0.2618 & 0.4998 & 0.5107 & 0.0057 & 0.3035 \\
\cline{2-9}
& IBNorm-S   & 0.2363 & 0.3463 & 0.2620 & 0.4917 & 0.5149 & 0.0023 & 0.3089 \\
& IBNorm-L      & 0.2619 & 0.3342 & 0.2692 & 0.4966 & 0.4949 & 0.0038 & \textbf{0.3101} \\
& IBNorm-T     & 0.2598 &0.3346 & 0.2623 & 0.4957 & 0.4970 & 0.0038 & 0.3089 \\
\midrule
\multirow{6}{*}{Llama 1B}
& LayerNorm      & 0.2364 & 0.3711 & 0.2580 & 0.4882 & 0.5226 & 0.0064 & 0.3138 \\ 
& RMSNorm    & 0.2517 & 0.3593 & 0.2522 & 0.5035 & 0.5093 & 0.0045 & 0.3134 \\ 
& NormalNorm   & 0.2244 & 0.3183 & 0.2713 & 0.4961 & 0.5170 & 0.0038 & 0.3051 \\ 
\cline{2-9}
& IBNorm-S   & 0.2551 & 0.3749 & 0.2636 & 0.4923 & 0.5188 & 0.0057 & 0.3184  \\ 
& IBNorm-L   & 0.2632 & 0.3725 & 0.2691 & 0.4945 & 0.4980 & 0.0099 & 0.3179 \\ 
& IBNorm-T  & 0.2654 & 0.3809 & 0.2663 & 0.4952 & 0.5075 & 0.0042 & \textbf{0.3199} \\ 
\bottomrule
\end{tabular}}
\end{table}
\begin{table}[H]
\centering
\caption{Results of GPT-2 models evaluated on LLM Leaderboard I. }
\label{gpt2}
\scalebox{0.8}{
\begin{tabular}{c|c|cccccc|c}
\toprule
\textbf{Model} & \textbf{Normalization} & \textbf{ARC} ($\uparrow$) & \textbf{HellaSwag} ($\uparrow$) & \textbf{MMLU} ($\uparrow$) & \textbf{TruthfulQA} ($\uparrow$) & \textbf{Winogrande} ($\uparrow$) & \textbf{GSM8K} ($\uparrow$) & \textbf{AVG} ($\uparrow$) \\
\midrule
\multirow{5}{*}{GPT-2 124M}  
& LayerNorm      & 0.2483 & 0.2667 & 0.2371 & 0.4868 & 0.4933 & 0.0000 & 0.2887 \\
& RMSNorm    & 0.2551 & 0.2540 & 0.2373 & 0.4848 & 0.4838 & 0.0008 & 0.2860 \\
& NormalNorm   & 0.2645 & 0.0000 & 0.2632 & 0.2457 & 0.4890 & 0.4941 & 0.2927 \\
\cline{2-9}
& IBNorm-S     & 0.2690 & 0.2558 & 0.2458 & 0.4846 & 0.5053 & 0.0000 & 0.2934 \\
& IBNorm-L      & 0.2756 & 0.2577 & 0.2689 & 0.4859 & 0.4933 & 0.0008 & \textbf{0.2970} \\
& IBNorm-T     & 0.2734 & 0.2569 & 0.2394 & 0.4835 & 0.5178 & 0.0004 & 0.2952 \\
\midrule
\multirow{5}{*}{GPT-2 355M}  
& LayerNorm      & 0.2637 & 0.2772 & 0.2485 & 0.4945 & 0.5051 & 0.0011 & 0.2983 \\
& RMSNorm    & 0.2643 &  0.2673 & 0.2518 & 0.4951 & 0.4917 & 0.0000 & 0.2950 \\
& NormalNorm   & 0.2601 & 0.2683 & 0.2549 & 0.4948 & 0.4949 &  0.0000 & 0.2955 \\
\cline{2-9}
& IBNorm-S      & 0.2781 & 0.2679 & 0.2581 & 0.4955 & 0.5072 & 0.0000 & 0.3011 \\
& IBNorm-L      & 0.2813 & 0.2681 & 0.2693 & 0.4964 & 0.5058 & 0.0000 & \textbf{0.3035} \\
& IBNorm-T     & 0.2819 & 0.2693 & 0.2532 & 0.4972 & 0.5185 & 0.0011 & \textbf{0.3035} \\
\bottomrule
\end{tabular}}
\end{table}
\subsection{Results with Statistical Significance}
\label{app:res_ss}
\begin{table}[H]
\centering
\caption{Results of Llama models evaluated on LLM Leaderboard I and II.}
\label{tab:re1}
\scalebox{0.9}{
\begin{tabular}{c|c|cc}
\toprule
\textbf{Model} & \textbf{Normalization} & \textbf{Leaderboard I} ($\uparrow$) & \textbf{Leaderboard II} ($\uparrow$) \\
\midrule
\multirow{5}{*}{Llama 60M}  
& LayerNorm  &  0.2910 $\pm$ 0.0020  & 0.1922 $\pm$ 0.0033   \\
& RMSNorm    & 0.2895 $\pm$ 0.0027  & 0.1927 $\pm$ 0.0038  \\
& NormalNorm  &  0.2823 $\pm$ 0.0029 & 0.1895 $\pm$  0.0053  \\
\cline{2-4}
& IBNorm-L   & \textbf{0.2955 $\pm$ 0.0022}  & \textbf{0.2055 $\pm$ 0.0031}  \\
\midrule
\multirow{5}{*}{Llama 130M}  
& LayerNorm    &0.2919 $\pm$ 0.0021   & 0.1942 $\pm$ 0.0035  \\
& RMSNorm    & 0.2913 $\pm$ 0.0027  & 0.1940 $\pm$ 0.0039  \\
& NormalNorm  &  0.2918 $\pm$ 0.0031 & 0.1927 $\pm$ 0.0055  \\
\cline{2-4}
& IBNorm-T & \textbf{0.2973 $\pm$ 0.0024} & \textbf{0.2133 $\pm$ 0.0037} \\
\midrule
\multirow{5}{*}{Llama 350M}  
& LayerNorm    &0.3062 $\pm$ 0.0020   & 0.2003 $\pm$ 0.0037  \\
& RMSNorm    & 0.3025 $\pm$ 0.0028  & 0.2010 $\pm$ 0.0041  \\
& NormalNorm  &  0.3035 $\pm$ 0.0033 & 0.1962 $\pm$ 0.0058  \\
\cline{2-4}
& IBNorm-L & \textbf{0.3101 $\pm$ 0.0021} & \textbf{0.2116 $\pm$ 0.0047} \\
& IBNorm-T & \textbf{0.3089 $\pm$ 0.0022} & \textbf{0.2140 $\pm$ 0.0050} \\
\midrule
\multirow{5}{*}{Llama 1B}  
& LayerNorm    &0.3138 $\pm$ 0.0024   & 0.2053 $\pm$ 0.0042  \\
& RMSNorm    & 0.3134 $\pm$ 0.0030  & 0.2019 $\pm$ 0.0048  \\
& NormalNorm  & 0.3051 $\pm$ 0.0049 & 0.2006 $\pm$ 0.0067  \\
\cline{2-4}
& IBNorm-T & \textbf{0.3199 $\pm$ 0.0027} & \textbf{0.2162 $\pm$ 0.0053} \\
\bottomrule
\end{tabular}}
\end{table}
\begin{table}[H]
\centering
\caption{Results of GPT-2 models evaluated on LLM Leaderboard I and II.}
\label{tab:re2}
\scalebox{0.9}{
\begin{tabular}{c|c|cc}
\toprule
\textbf{Model} & \textbf{Normalization} & \textbf{Leaderboard I} ($\uparrow$) & \textbf{Leaderboard II} ($\uparrow$) \\
\midrule
\multirow{5}{*}{GPT-2 124M}  
& LayerNorm    & 0.2888 $\pm$ 0.0017  & 0.1889 $\pm$ 0.0021 \\
& RMSNorm    & 0.2861 $\pm$ 0.0019 & 0.1883 $\pm$ 0.0025 \\
& NormalNorm  &  0.2923 $\pm$ 0.0027 & 0.1983  $\pm$ 0.0031 \\
\cline{2-4}
& IBNorm-L & \textbf{0.2969 $\pm$ 0.0022} & \textbf{0.2068 $\pm$ 0.0025} \\
\midrule
\multirow{5}{*}{GPT-2 355M}  
& LayerNorm    & 0.2978 $\pm$ 0.0024  & 0.1920 $\pm$ 0.0037 \\
& RMSNorm    & 0.2949 $\pm$ 0.0031 & 0.1948 $\pm$ 0.0042 \\
& NormalNorm  &  0.2953 $\pm$ 0.0035 & 0.1985  $\pm$ 0.0044 \\
\cline{2-4}
& IBNorm-L & \textbf{0.3034 $\pm$ 0.0028} & \textbf{0.2125 $\pm$ 0.0048} \\
\bottomrule
\end{tabular}}
\end{table}

\textbf{\noindent{Discussion of Failure Cases.}}
Our evaluation reveals distinct performance characteristics across different task types. While IBNorm consistently improves performance on tasks involving factual retrieval and general language modeling, it achieves suboptimal performance on reasoning-intensive benchmarks such as BBH, GPQA, and MMLU-PRO. These benchmarks place substantial demands on the model’s ability to preserve and integrate intermediate representations across multiple inference steps, particularly in deeper layers. Qualitative analysis shows that models trained with IBNorm tend to produce more concise rationales compared to LayerNorm and RMSNorm baselines. In many challenging cases, the model correctly identifies the key premises and relevant entities, but intermediate inference steps are less explicitly represented, which can affect performance on tasks requiring extended deductive chains. To further characterize this behavior, we examine the entropy of hidden representations and output distributions. We find that IBNorm consistently induces lower activation and token-level entropy in deeper layers for reasoning tasks, indicating a more compact representational structure. 

From an information-theoretic perspective, this behavior reflects the inherent trade-off between minimality and sufficiency in the information bottleneck objective, regulated by the hyperparameter $\lambda$. Stronger compression favors representations that are highly predictive of the target output while minimizing redundancy, which is advantageous for retrieval-oriented tasks but may be less aligned with reasoning benchmarks that benefit from maintaining high-entropy intermediate latent states. These observations motivate a layer-dependent normalization strategy as a promising direction for future work. Applying IBNorm in early or intermediate layers can encourage informative representations, while retaining variance-based normalization in deeper layers may help preserve the representational entropy and flexibility necessary for multi-step reasoning. 

\subsection{Ablation Study on Normalization Structure}
\label{app:ab1}
\begin{table}[H]
\centering
\caption{Ablation study of Llama 60M evaluated on LLM Leaderboard I. }
\label{ablation1}
\scalebox{0.9}{
\begin{tabular}{c|cccccc|c}
\toprule
\textbf{Normalization} & \textbf{ARC} ($\uparrow$) & \textbf{HellaSwag} ($\uparrow$) & \textbf{MMLU} ($\uparrow$) & \textbf{TruthfulQA} ($\uparrow$) & \textbf{Winogrande} ($\uparrow$) & \textbf{GSM8K} ($\uparrow$) & \textbf{AVG} ($\uparrow$) \\
\midrule 
IBNorm-S*  & 0.2661 & 0.2698 & 0.2542 & 0.4788 & 0.5012 & 0.0011 & 0.2952 \\
IBNorm-S**  & 0.2587 & 0.2570 & 0.2535 & 0.4750 & 0.4822 & 0.0023 & 0.2881 \\
\midrule
IBNorm-T*  & 0.2518 & 0.2658 & 0.2570 & 0.4879 & 0.4988 & 0.0004 & 0.2936 \\
IBNorm-T**  & 0.2261 & 0.2562 & 0.2472 & 0.4740 & 0.4941 & 0.0008 & 0.2831 \\
\midrule
IBNorm-L*  & 0.2483 & 0.2615 & 0.2389 & 0.5029 & 0.4886 & 0.0030 & 0.2905 \\
IBNorm-L**  & 0.2435 & 0.2606 & 0.2407 & 0.4743 & 0.4957 & 0.0008 & 0.2859 \\
\bottomrule
\end{tabular}}
\end{table}
\begin{table}[H]
\centering
\caption{Ablation study of Llama 60M evaluated on LLM Leaderboard II.}
\scalebox{0.85}{
\begin{tabular}{c||cccccc|c||c}
\toprule
 & \multicolumn{7}{c||}{\textbf{Leaderboard II}} & {\textbf{Leaderboard I}}\\
\textbf{Normalization} & \textbf{IFEval} ($\uparrow$) & \textbf{BBH} ($\uparrow$) & \textbf{MATH} ($\uparrow$) & \textbf{GPQA} ($\uparrow$) & \textbf{MUSR} ($\uparrow$) & \textbf{MMLU-PRO} ($\uparrow$) & \textbf{AVG} ($\uparrow$) & \textbf{AVG} ($\uparrow$) \\
\midrule
IBNorm-S*  & 0.1626 & 0.2646 & 0.0000 & 0.2876 & 0.3609 & 0.1114 & 0.1978 & 0.2952 \\ 
IBNorm-S**  & 0.1376 & 0.2946 & 0.0000 & 0.2776 & 0.3460 & 0.1114 & 0.1945 & 0.2881 \\
\midrule
IBNorm-T*  & 0.1372 & 0.2880 & 0.0000 & 0.2801 & 0.3519 & 0.1166 & 0.1956  & 0.2936 \\ 
IBNorm-T**  & 0.1272 & 0.2980 & 0.0000 & 0.2701 & 0.3519 & 0.1166 & 0.1940 & 0.2831 \\
\midrule
IBNorm-L*  & 0.1374 & 0.2941 & 0.0000 & 0.2793 & 0.3505 & 0.1190 & 0.1967 & 0.2905 \\ 
IBNorm-L**  & 0.1374 & 0.2841 & 0.0000 & 0.2693 & 0.3505 & 0.1190& 0.1934& 0.2859 \\
\bottomrule
\end{tabular}}
\end{table}
\subsection{Ablation Study on $\lambda$}
\label{app:ab2}
\begin{table}[H]
\centering
\caption{Ablation study of Llama 60M with normalization using different $\lambda$ evaluated on LLM Leaderboard I. }
\label{lablation1}
\scalebox{0.85}{
\begin{tabular}{c|c|cccccc|c}
\toprule
\textbf{Normalization} & \textbf{$\lambda$} & \textbf{ARC} ($\uparrow$) & \textbf{HellaSwag} ($\uparrow$) & \textbf{MMLU} ($\uparrow$) & \textbf{TruthfulQA} ($\uparrow$) & \textbf{Winogrande} ($\uparrow$) & \textbf{GSM8K} ($\uparrow$) & \textbf{AVG} ($\uparrow$) \\
\midrule
\multirow{3}{*}{IBNorm-S}  
& 0.5     & 0.2227 &0.2641 & 0.2478 & 0.4948 & 0.5046 & 0.0045 & 0.2898 \\
& 4   & 0.2393 & 0.2665 & 0.2496 & 0.5033 & 0.5078 & 0.0049 & \textbf{0.2952} \\
& 8  & 0.2270 & 0.2647 & 0.2579 & 0.4984 & 0.4958 & 0.0072 & 0.2918 \\
\midrule
\multirow{3}{*}{IBNorm-L}  
& 0.5    & 0.2193 & 0.2781 & 0.2501 &0.4892 & 0.4767 & 0.0045 & 0.2863 \\
& 4   & 0.2568 & 0.2651 & 0.2689 & 0.4966 & 0.4854 & 0.0000 & \textbf{0.2955} \\
& 8   & 0.2722 & 0.2692 & 0.2293 & 0.4853 & 0.4964 & 0.0000 & 0.2921 \\
\midrule
\multirow{3}{*}{IBNorm-T}  
& 0.5    & 0.2243 & 0.2644 & 0.2692 & 0.4846 & 0.4862 & 0.0000 & 0.2881 \\
& 4  & 0.2270 & 0.2758 & 0.2620 & 0.4827 & 0.5099 & 0.0049 & \textbf{0.2937} \\
& 8  & 0.2274 & 0.2645 & 0.2519 & 0.4923 & 0.5028 & 0.0000 & 0.2898 \\
\bottomrule
\end{tabular}}
\end{table}
\begin{table}[H]
\centering
\caption{Ablation study of Llama 60M with normalization using different $\lambda$ evaluated on LLM Leaderboard II.}
\scalebox{0.8}{
	\begin{tabular}{c|c||cccccc|c||c}
		\toprule
		& & \multicolumn{7}{c||}{\textbf{Leaderboard II}} & {\textbf{Leaderboard I}}\\
\textbf{Normalization} & \textbf{$\lambda$} & \textbf{IFEval} ($\uparrow$) & \textbf{BBH} ($\uparrow$) & \textbf{MATH} ($\uparrow$) & \textbf{GPQA} ($\uparrow$) & \textbf{MUSR} ($\uparrow$) & \textbf{MMLU-PRO} ($\uparrow$) & \textbf{AVG} ($\uparrow$) & \textbf{AVG} ($\uparrow$) \\
\midrule
\multirow{3}{*}{IBNorm-S}  
& 0.5     & 0.1687 & 0.2894 & 0.0008 & 0.2592 & 0.3519 & 0.1162 & 0.1977 & 0.2898 \\ 
& 4   & 0.1781 & 0.2977 & 0.0000 & 0.2501 & 0.3571 & 0.1091 & \textbf{0.1987} & \textbf{0.2952} \\
& 8  & 0.1623 & 0.2947 & 0.0008 & 0.2606 & 0.3515 & 0.1185 & 0.1981 & 0.2918 \\
\midrule
\multirow{3}{*}{IBNorm-L}  
& 0.5     & 0.1577 & 0.2968 & 0.0000 & 0.2710 & 0.3690 & 0.1162 & 0.2018 & 0.2863 \\ 
& 4 & 0.1899 & 0.3001 & 0.0000 & 0.2772 & 0.3519 & 0.1137 & \textbf{0.2055} & \textbf{0.2955} \\
& 8  & 0.1364 & 0.2888 & 0.0000 & 0.2450 & 0.3452 & 0.1132 & 0.1881 & 0.2921 \\
\midrule
\multirow{3}{*}{IBNorm-T}  
& 0.5     & 0.1761 & 0.3024 & 0.0000 & 0.2492 & 0.3469 & 0.1158 & 0.1984 & 0.2881 \\ 
& 4   & 0.1754 & 0.3013 & 0.0008 & 0.2668 & 0.3598 & 0.1110 & \textbf{0.2025} & \textbf{0.2937} \\
& 8  & 0.1753 & 0.2933 & 0.0000 & 0.2559 & 0.3631 & 0.1156 & 0.2005 & 0.2898 \\
\bottomrule
\end{tabular}}
\end{table}

\begin{table}[H]
\centering
\caption{Ablation study of GPT-2 124M with normalization using different $\lambda$ evaluated on LLM Leaderboard I.}
\label{lablation3}
\scalebox{0.85}{
\begin{tabular}{c|c|cccccc|c}
\toprule
\textbf{Normalization} & \textbf{$\lambda$} & \textbf{ARC} ($\uparrow$) & \textbf{HellaSwag} ($\uparrow$) & \textbf{MMLU} ($\uparrow$) & \textbf{TruthfulQA} ($\uparrow$) & \textbf{Winogrande} ($\uparrow$) & \textbf{GSM8K} ($\uparrow$) & \textbf{AVG} ($\uparrow$) \\
\midrule
\multirow{3}{*}{IBNorm-T}  
& 0.5    & 0.2652 & 0.2634 & 0.2512 & 0.4896 & 0.5092 & 0.0000 &  0.2964 \\
& 4  & 0.2819 & 0.2693 & 0.2532& 0.4972& 0.5185& 0.0011& \textbf{0.3035} \\
& 8  & 0.2794 & 0.2660 & 0.2522 & 0.4903 & 0.5037 & 0.0000 & 0.2986 \\
\bottomrule
\end{tabular}}
\end{table}

\begin{figure}[H]
	\centering
	\begin{subfigure}{0.33\linewidth}
		\centering
		\includegraphics[width=\linewidth]{./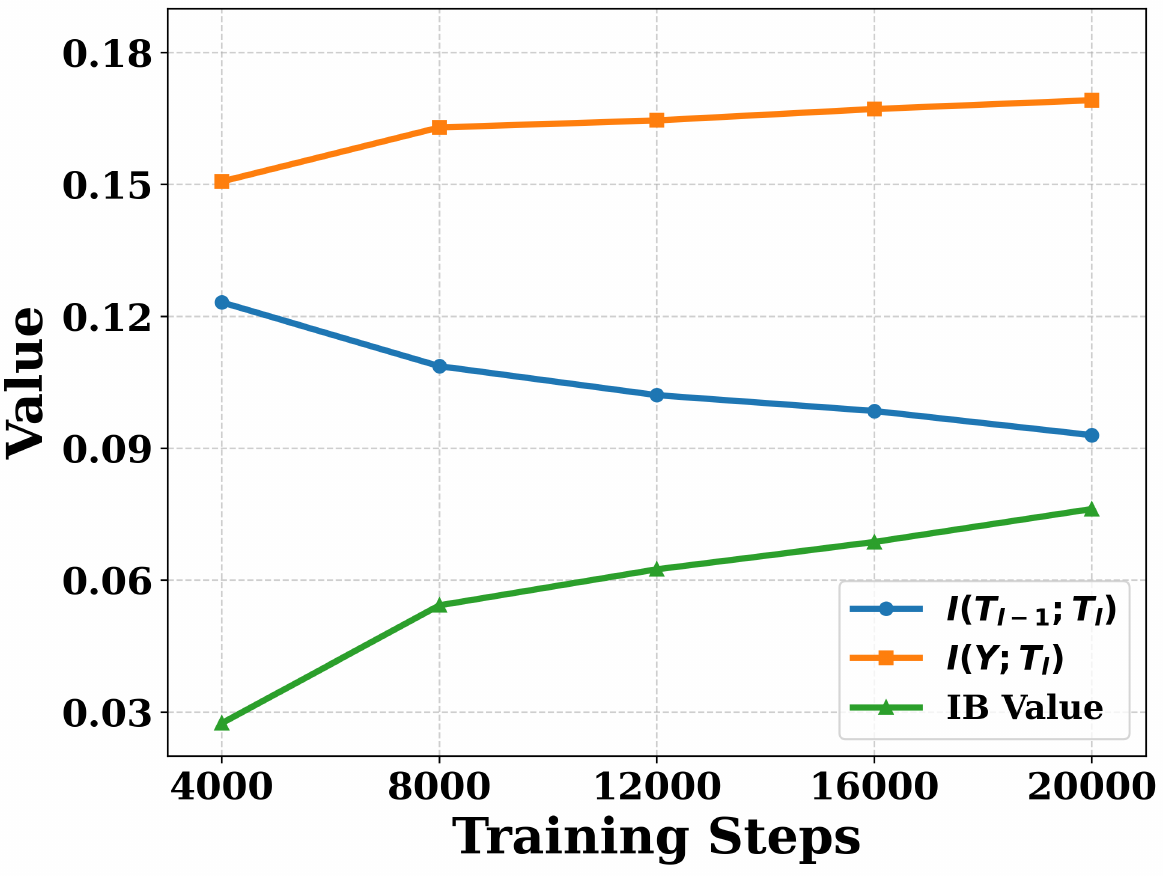}
		\caption{$\lambda=0.5$.}
	\end{subfigure}\hfill
	\begin{subfigure}{0.33\linewidth}
        \centering
		\includegraphics[width=\linewidth]{./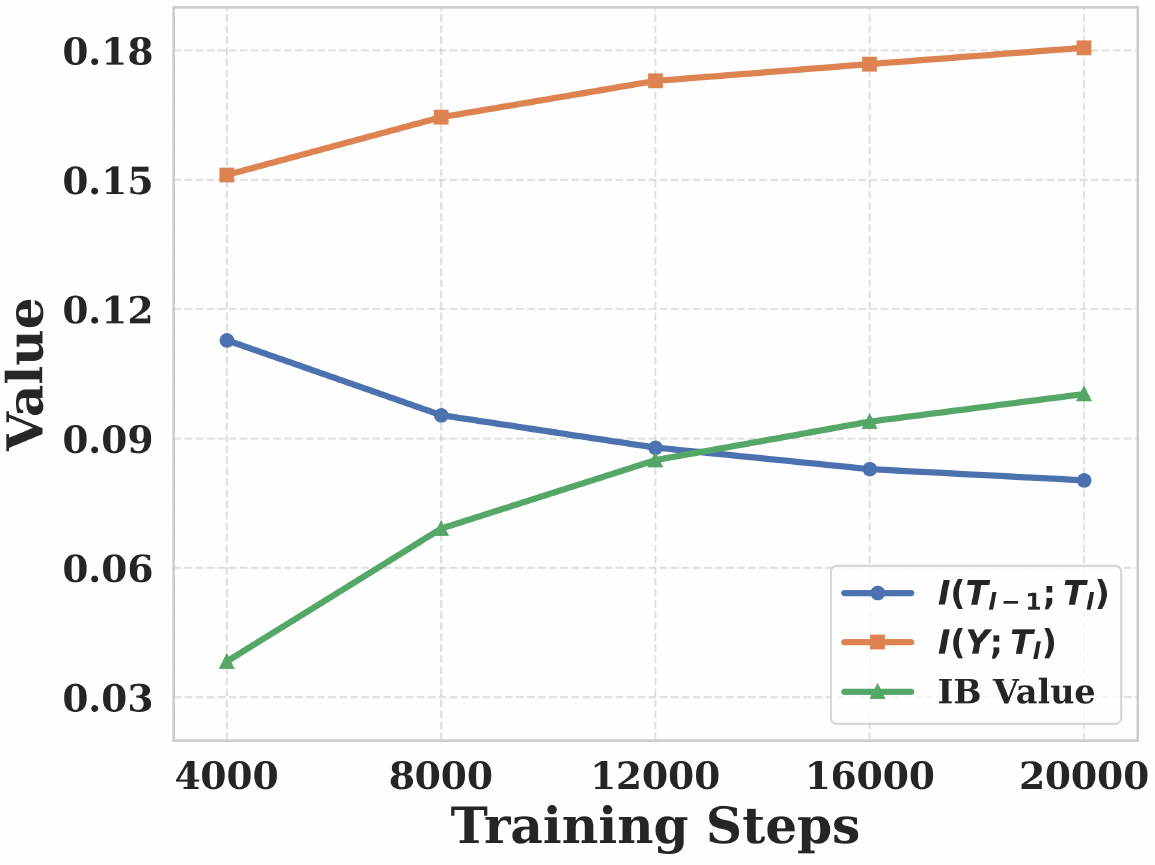}
		\caption{$\lambda=4$.}
	\end{subfigure}\hfill
	\begin{subfigure}{0.33\linewidth}
    	\centering
		\includegraphics[width=\linewidth]{./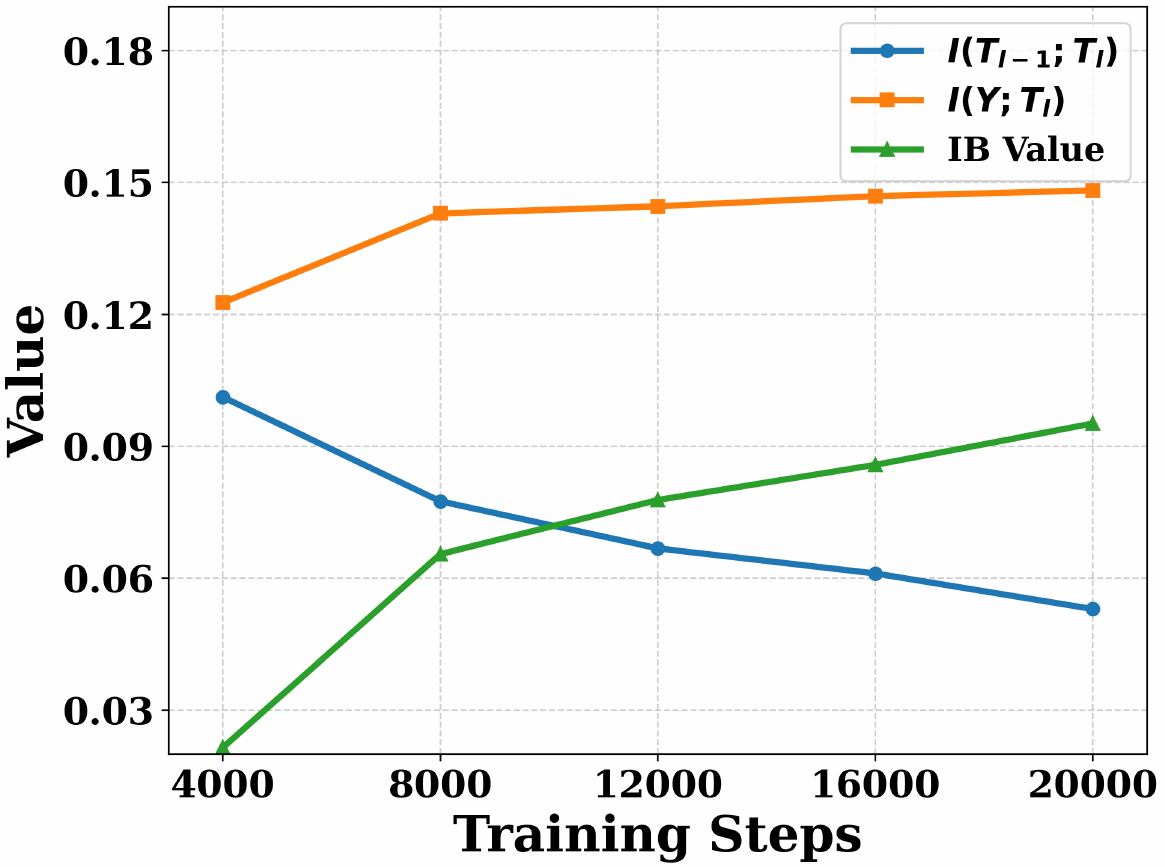}
		\caption{$\lambda=8$.}
	\end{subfigure}
	\caption{\re{Evolution of predictive information $\hat I(Y;T_l)$, task-nuisance information $\hat I(T_{l-1};T_l)$, and the token-level IB value during training of Llama-130M on C4, using IBNorm-T across different hyperparameters $\lambda=0.5,4,8$. Results are reported on the test set.}}
    \label{fig:re1}
\end{figure}

\re{As shown in Figure~\ref{fig:re1}, we find: 1) Under-compression ($\lambda=0.5$) fails to sufficiently suppress task-irrelevant information, leading to suboptimal IB tradeoffs. 2) Over-compression ($\lambda=8$) starts to suppress task-relevant information, harming predictive performance. 3) Moderate compression ($\lambda=4$) achieves the best IB value, effectively removing nuisance factors while preserving task-useful information.}

\subsection{Extended Results on ImageNet}
\label{app:imagenet}
In addition to the results for ViT in Tab.~\ref{tab:imagenet}, we conduct ViT experiments following the ViT-S/16 and ViT-B/16 settings~\citep{tvt}, with details provided in Appendix~\ref{app:exp2}.
\begin{table}[H]
	\centering
	\caption{
		\re{Accuracy (\%) of ViT-S/16 and ViT-B/16 on image classification tasks. We present LayerNorm's results on Vision Transformer (ViT) because it performs much better than BatchNorm.}}
	\label{tab:vit}
	\begin{center}
    \scalebox{0.9}{
	\begin{tabular}{c|cc|cc}
		\toprule
		&\multicolumn{2}{c|}{\textbf{ViT-S/16 on ImageNet}} &\multicolumn{2}{c}{\textbf{ViT-B/16 on ImageNet}} \\  
	   \textbf{Normalization} & \textbf{Top-1} & \textbf{Top-5} & \textbf{Top-1} & \textbf{Top-5}  \\
		\midrule
		 BatchNorm  & --- & ---  & --- & ---    \\
		 LayerNorm   & 78.25 & 94.12  & 81.09 & 95.57   \\
	 RMSNorm    & 78.19 & 93.98  & 81.06 & 95.50  \\ 
		 NormalNorm  & 78.77 & 94.21  & 81.03 & 95.46  \\
	\hline
		 IBNorm-S & 80.23 & 94.94  & 82.53 & 95.97 \\
		 IBNorm-L  & 80.57 & 95.12  & 83.69 & 96.43 \\
		 IBNorm-T   & \textbf{80.76} & \textbf{95.21}  & \textbf{83.71} & \textbf{96.49}  \\
		\bottomrule
	\end{tabular}}
		\end{center}
\end{table}

\begin{table}[H]
\centering
\caption{
Accuracy (\%) on CIFAR-10 with ResNet18. We report BatchNorm's results since it significantly outperforms LayerNorm on CNNs.}
\label{tab:resnet18}
\begin{center}
\scalebox{0.9}{
\begin{tabular}{c|cc}
\toprule
\textbf{Normalization} & \textbf{Top-1} & \textbf{Top-5} \\
\midrule
BatchNorm   & 93.65   & 99.85   \\
IterNorm   & 94.48   & 99.86   \\
LayerNorm   & --- & ---  \\
RMSNorm     & --- & ---  \\ 
NormalNorm  & 94.01   & 99.86  \\
IBNorm-S    & 95.49   & 99.85  \\
IBNorm-L    & \textbf{95.57} & \textbf{99.87} \\
IBNorm-T    & 95.51   & 99.83  \\
\bottomrule
\end{tabular}}
\end{center}
\end{table}

\subsection{The Effect of Compression Operation in IBNorm}
\label{app:ce}
We instantiate three representative forms of the compression function $f_\lambda$:
\begin{align*}
f_\lambda(|x_i-\mu|) &= {|x_i-\mu|}/{\lambda}, &\text{(\textbf{IBNorm-S}, linear compression)}\\
f_\lambda(|x_i-\mu|) &= \ln\!\left(1+{|x_i-\mu|}/{\lambda}\right), &\text{(\textbf{IBNorm-L}, logarithmic compression)}\\
f_\lambda(|x_i-\mu|) &= \tanh\!\left({|x_i-\mu|}/{\lambda}\right). &\text{(\textbf{IBNorm-T}, hyperbolic tangent compression)}
\end{align*}
These choices capture complementary compression behaviors. Specifically, IBNorm-S provides a linear baseline, IBNorm-L yields progressively stronger suppression of large activations, and IBNorm-T enforces a hard saturation effect on extreme values. Together, they span the spectrum from mild to strong tail suppression while sharing the desirable properties of boundedness and monotonicity. While other monotonic compression functions are possible, we provide these three forms to demonstrate the range of behaviors and provide a principled comparison.

The bounded compression property requires
\[
0 \le f_\lambda(r) \le \alpha_\lambda r, 
\quad \forall r \ge 0, 
\quad \alpha_\lambda \in [0,1].
\]
To determine $\alpha_\lambda$, we analyze the ratio $f_\lambda(r)/r$.

\textbf{Linear.} For $f_\lambda(r) = r/\lambda$,
\[
\frac{f_\lambda(r)}{r} = \frac{1}{\lambda}, \quad \forall r > 0,
\]
thus $\alpha_\lambda = 1/\lambda$.

\textbf{Logarithmic.} For $f_\lambda(r) = \log(1 + r/\lambda)$,
\[
\frac{f_\lambda(r)}{r} = \frac{\log(1+r/\lambda)}{r},
\]
which is monotonically decreasing in $r$ and attains its maximum at $r \to 0$:
\[
\lim_{r \to 0} \frac{\log(1+r/\lambda)}{r} = \frac{1}{\lambda}.
\]
Hence $\alpha_\lambda = 1/\lambda$.

\textbf{Hyperbolic tangent.} For $f_\lambda(r) = \tanh(r/\lambda)$, the ratio
\[
\frac{f_\lambda(r)}{r}
\]
is maximized as $r \to 0$, giving
\[
\lim_{r \to 0} \frac{\tanh(r/\lambda)}{r} = \frac{1}{\lambda}.
\]
Thus $\alpha_\lambda = 1/\lambda$.

In conclusion, for these functionals, the compression ratio $\alpha_\lambda$ in Eqn.~\eqref{eq:bounded_compression} is given by $\alpha_\lambda = 1/\lambda$, which guarantees the bounded compression property whenever $\lambda \geq 1$. Compression operations in IBNorm compress the tail of activations while adjusting higher-order statistics.

\begin{figure}[H]
	\centering
	\begin{subfigure}{0.33\linewidth}
		\centering
		\includegraphics[width=\linewidth]{./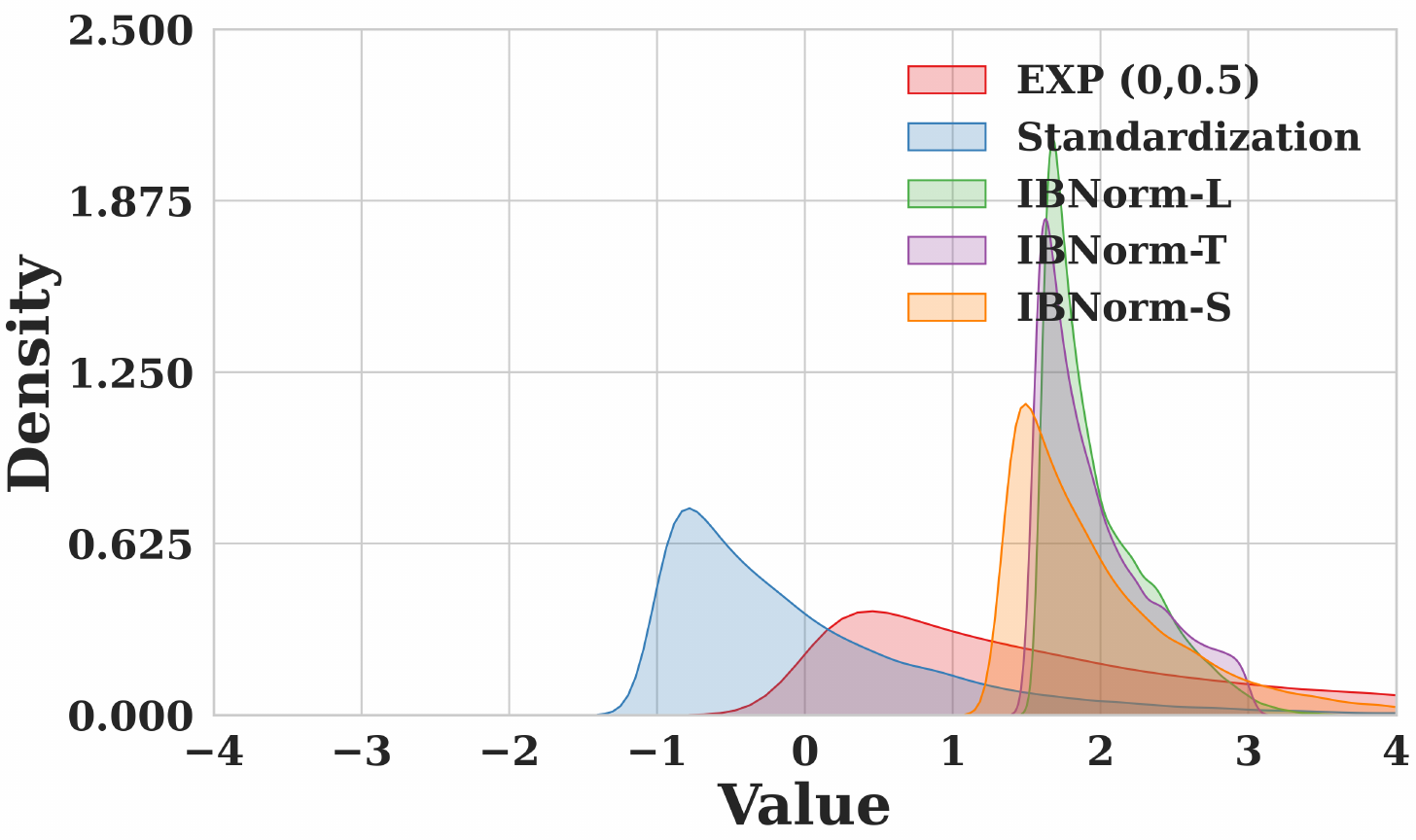}
		\caption{Exponential (0,0.5) input.}
	\end{subfigure}\hfill
	\begin{subfigure}{0.33\linewidth}
        \centering
		\includegraphics[width=\linewidth]{./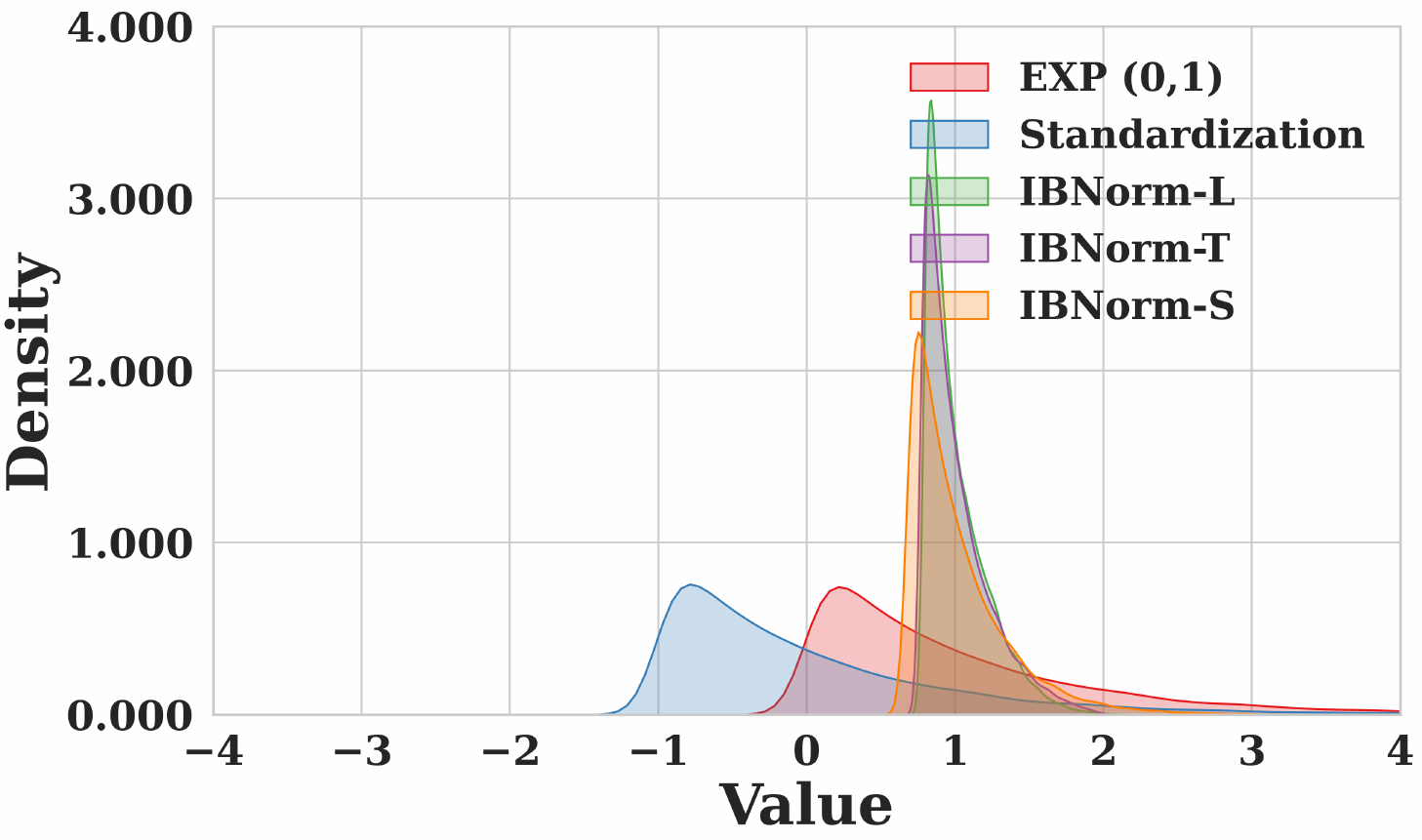}
		\caption{Exponential (0,1) input.}
	\end{subfigure}\hfill
	\begin{subfigure}{0.33\linewidth}
    	\centering
		\includegraphics[width=\linewidth]{./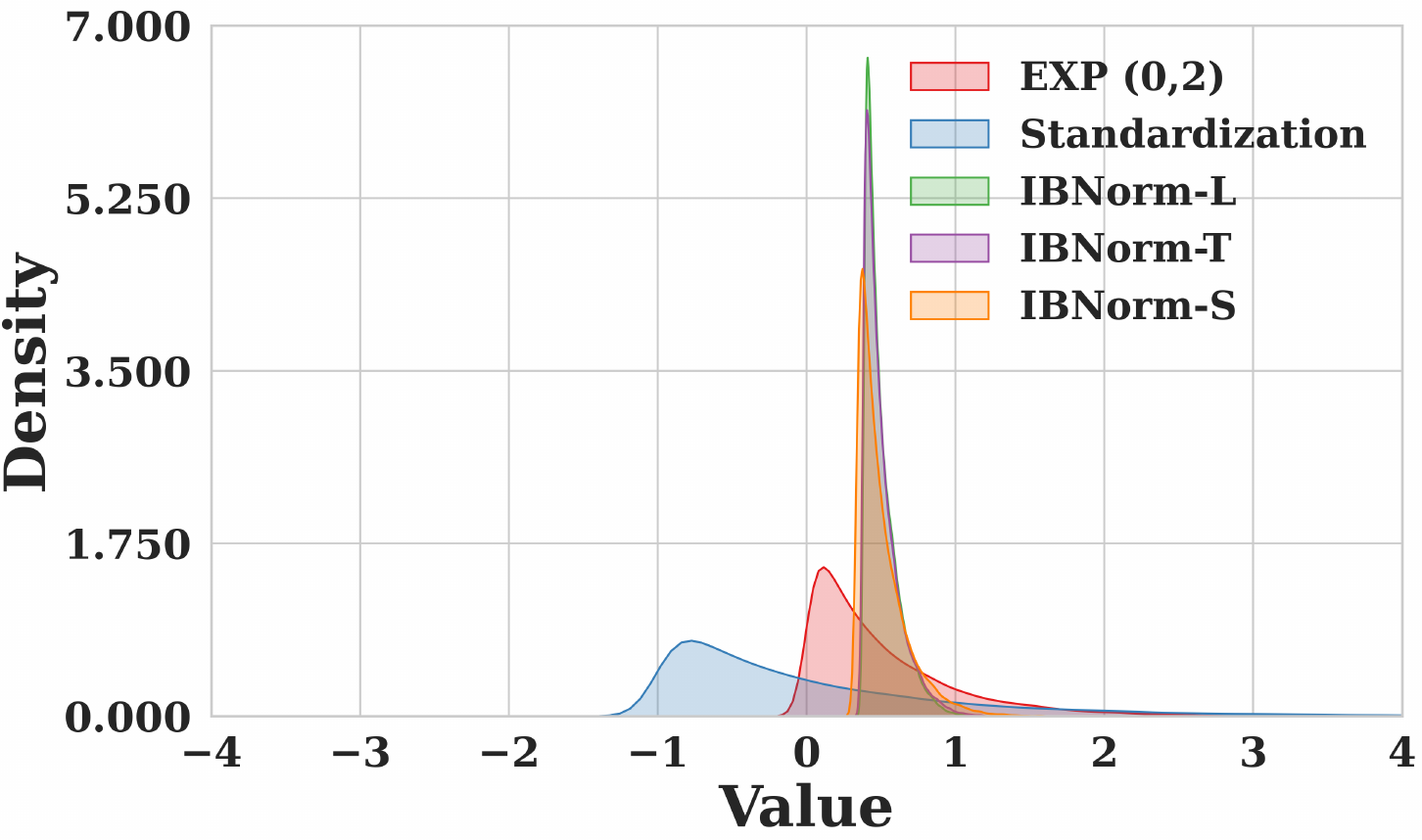}
		\caption{Exponential (0,2) input.}
	\end{subfigure}
	
	\caption{Comparison of kernel density estimation across different compression operation (Standardization, IBNorm-L ($\lambda=4$), IBNorm-T ($\lambda=4$), IBNorm-S ($\lambda=3$)) given Exponential distribution inputs with mean 0 and different lambdas. Compression operations in IBNorm compress the tail in the activations and adjust the higher-order statistics.}
    \label{fig:ib1}
\end{figure}
\vspace{-1.1em}

\begin{figure}[H]
	\centering
	\begin{subfigure}{0.33\linewidth}
		\centering
		\includegraphics[width=\linewidth]{./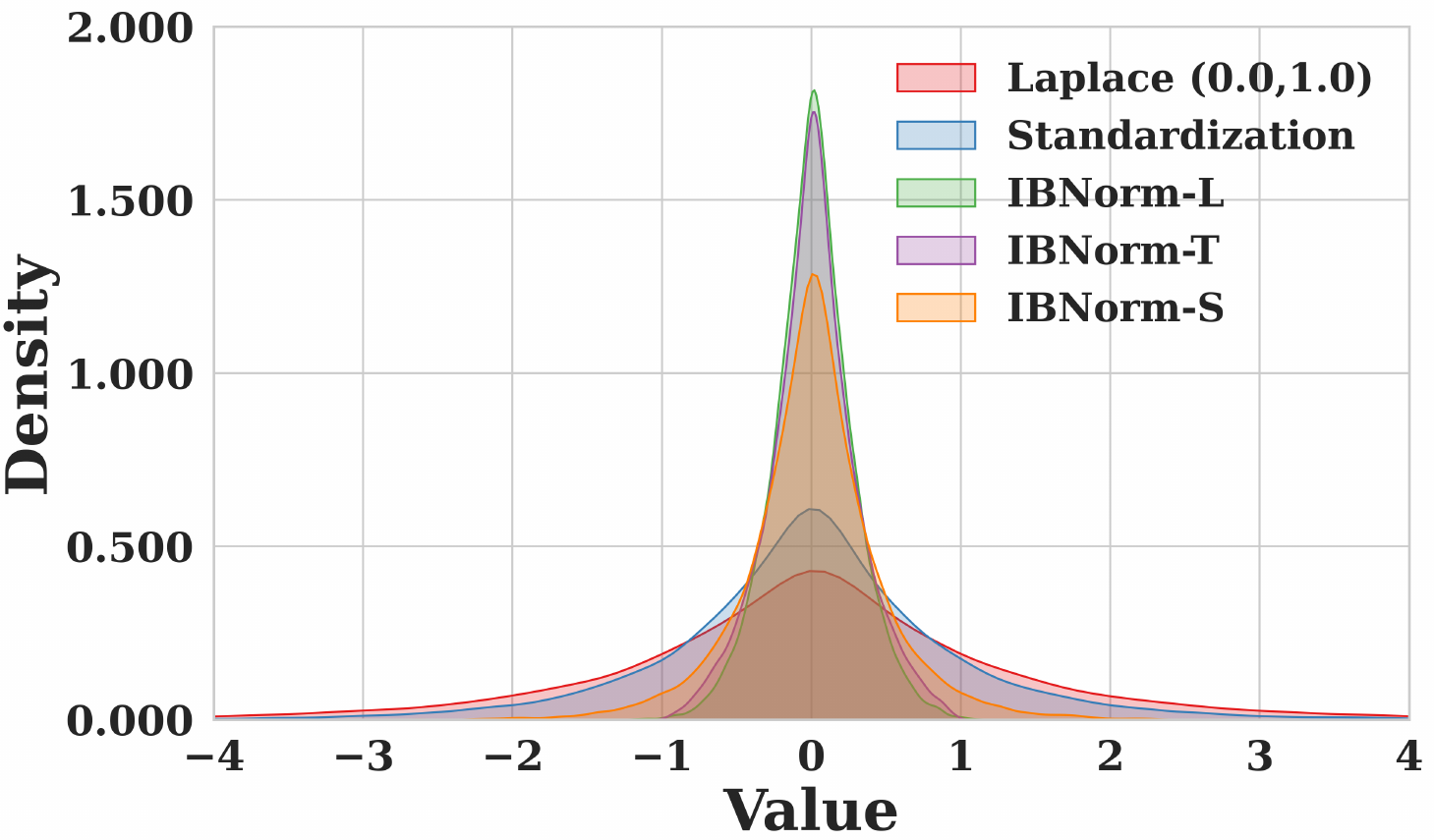}
		\caption{Laplace (0,1) input.}
	\end{subfigure}\hfill
	\begin{subfigure}{0.33\linewidth}
        \centering
		\includegraphics[width=\linewidth]{./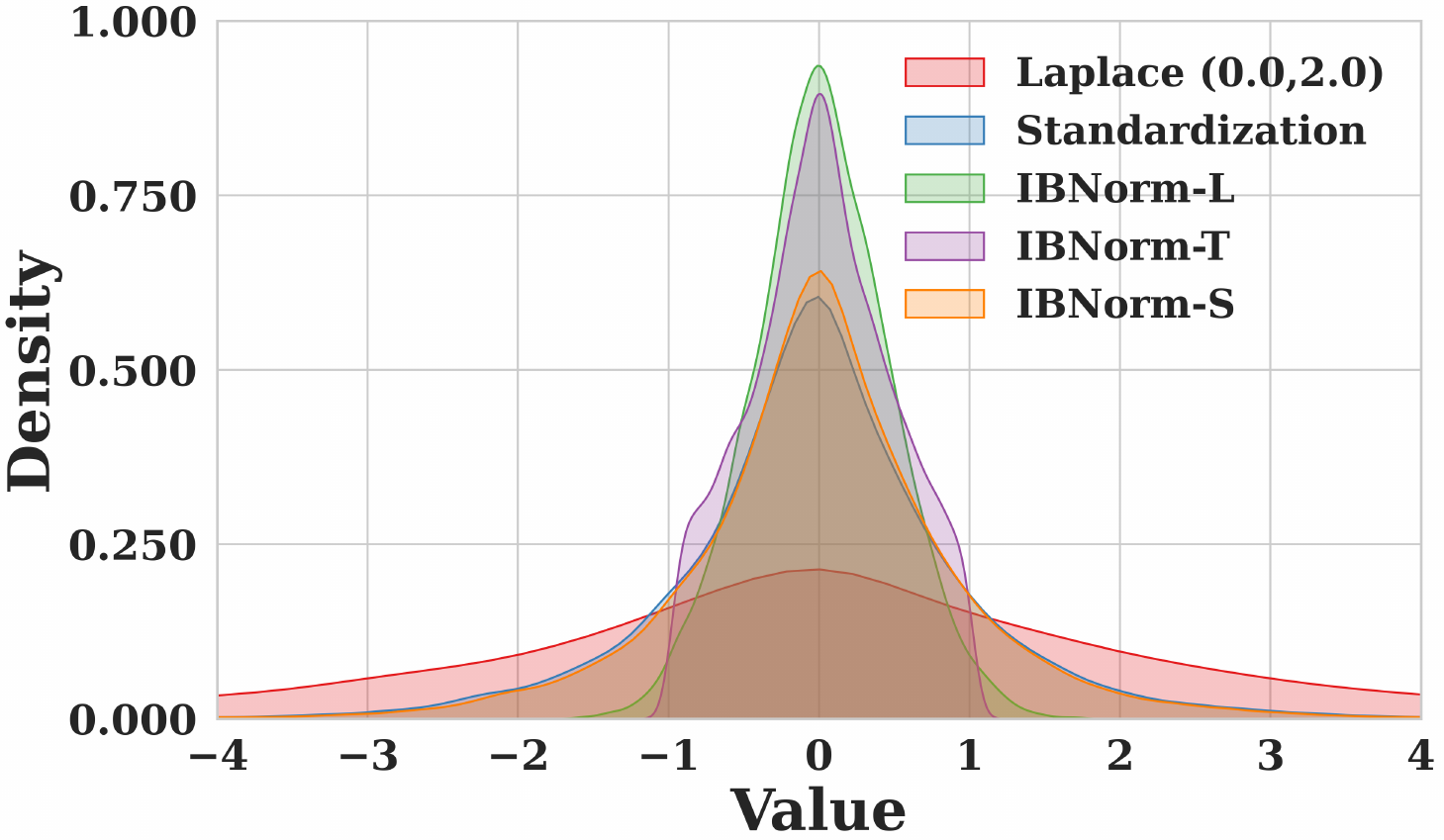}
		\caption{Laplace (0,2) input.}
	\end{subfigure}\hfill
	\begin{subfigure}{0.33\linewidth}
    	\centering
		\includegraphics[width=\linewidth]{./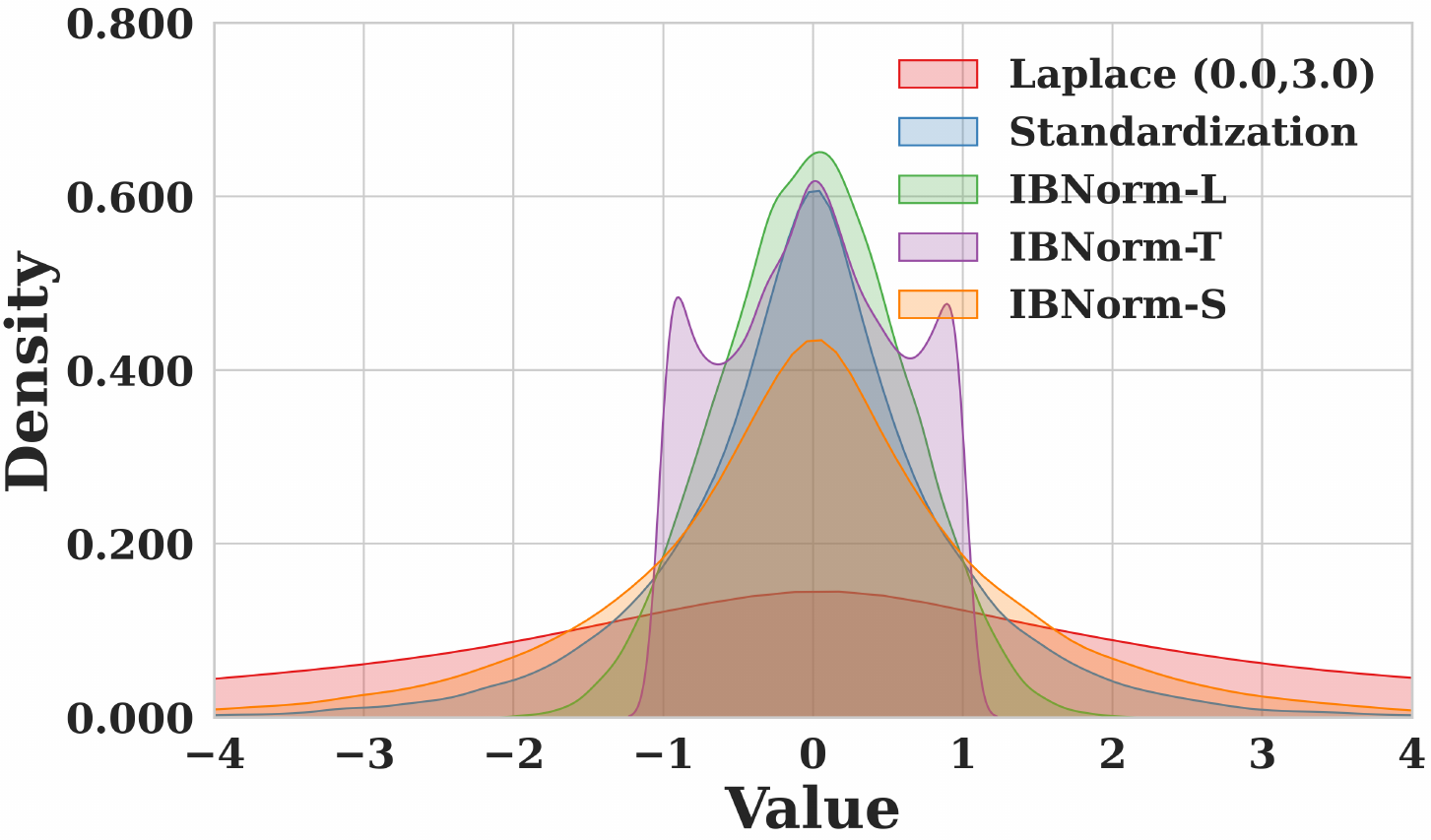}
		\caption{Laplace (0,3) input.}
	\end{subfigure}
	
	\caption{Comparison of kernel density estimation across different compression operation (Standardization, IBNorm-L ($\lambda=4$), IBNorm-T ($\lambda=4$), IBNorm-S ($\lambda=3$)) given Laplace distribution inputs with mean 0 and different scales. Compression operations in IBNorm compress the tail in the activations and adjust the higher-order statistics.}
    \label{fig:ib2}
\end{figure}

\vspace{-1em}

\begin{figure}[H]
\centering
\resizebox{0.8\textwidth}{!}{
\begin{minipage}{\textwidth}
    \begin{subfigure}{0.48\linewidth}
        \centering
        \includegraphics[width=\linewidth]{./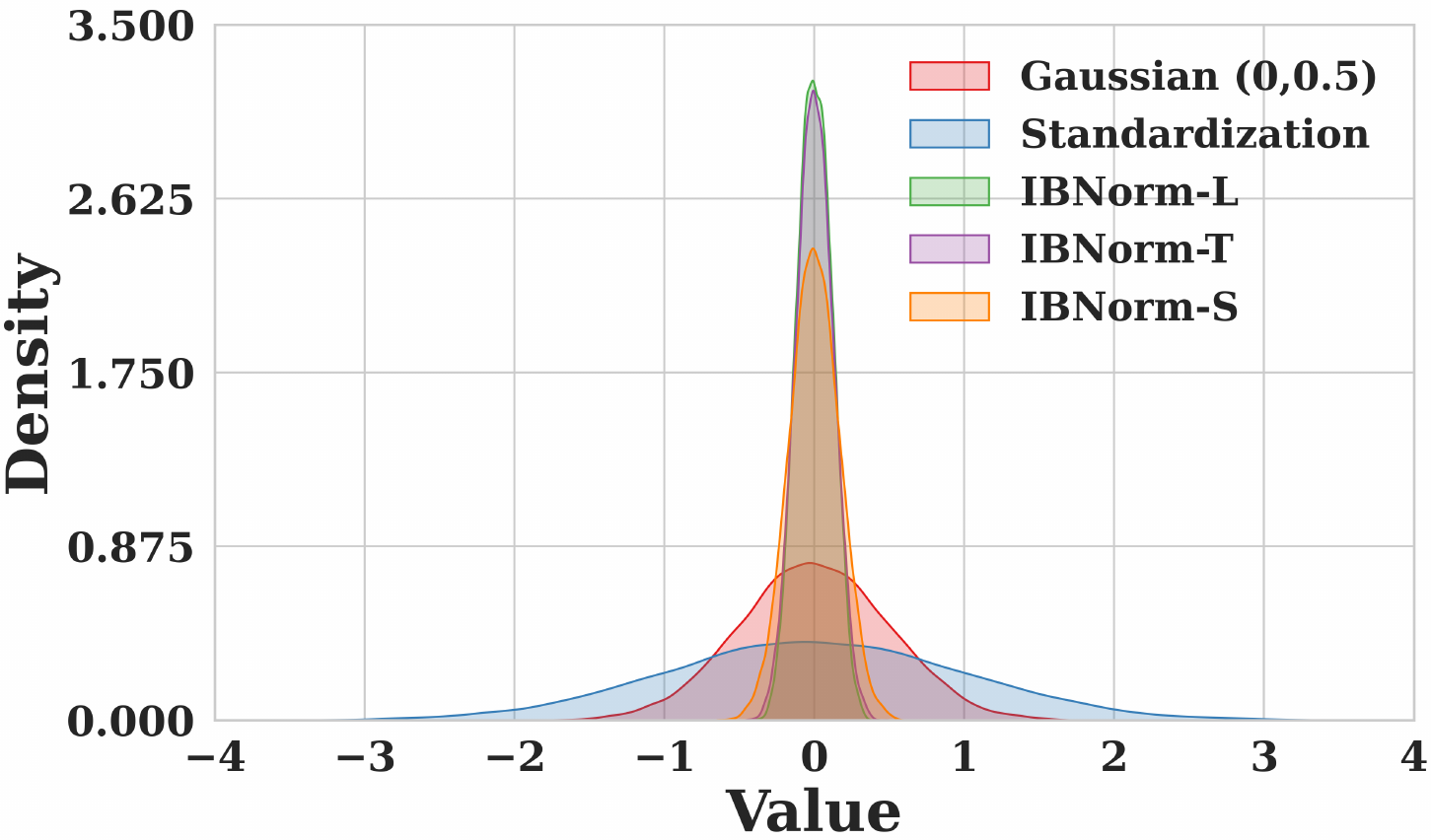}
        \caption{Gaussian (0,0.5) input.}
    \end{subfigure}
    \hfill
    \begin{subfigure}{0.48\linewidth}
        \centering
        \includegraphics[width=\linewidth]{./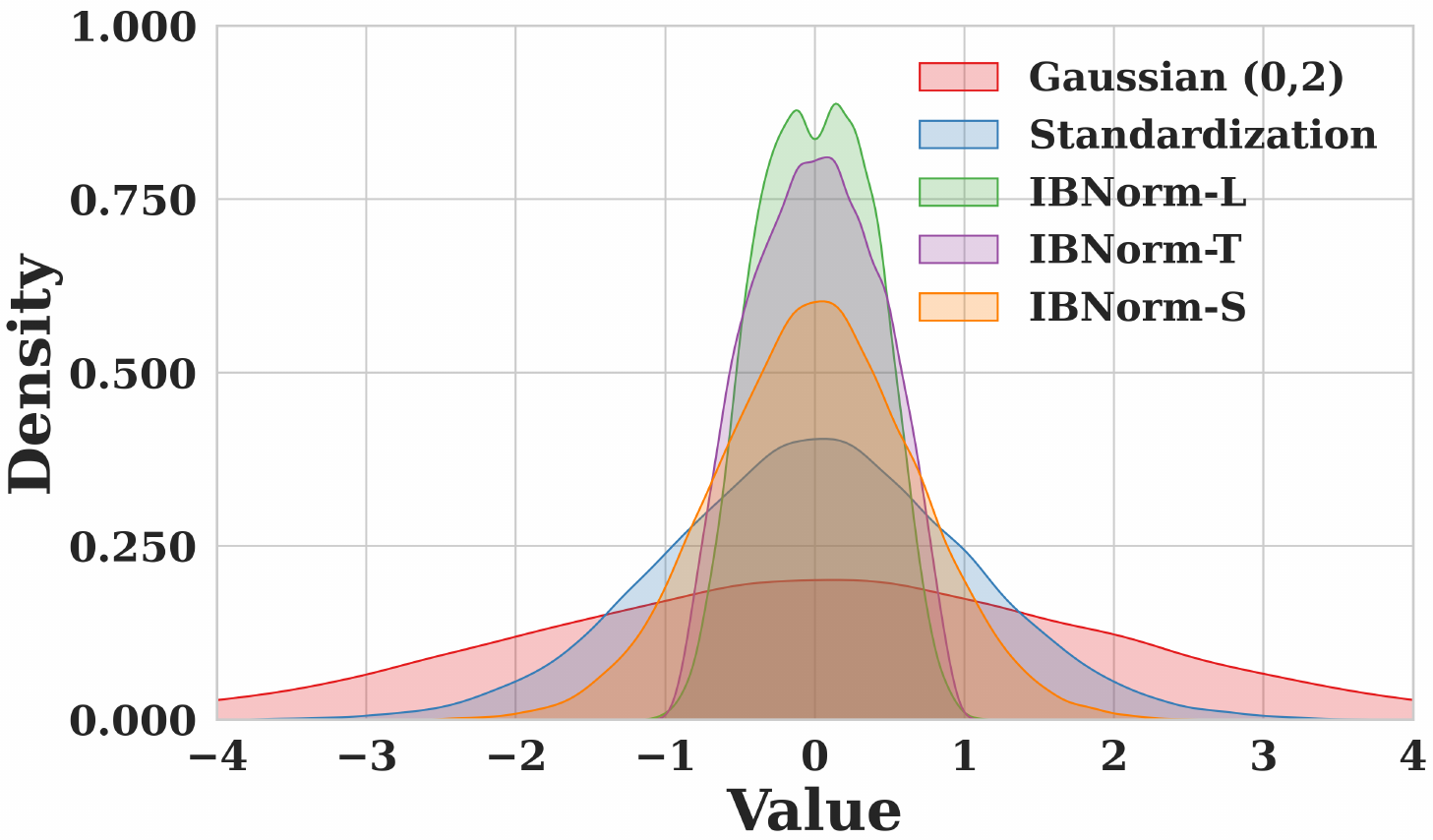}
        \caption{Gaussian (0,2) input.}
    \end{subfigure}

    \vspace{0.5em}

    \begin{subfigure}{0.48\linewidth}
        \centering
        \includegraphics[width=\linewidth]{./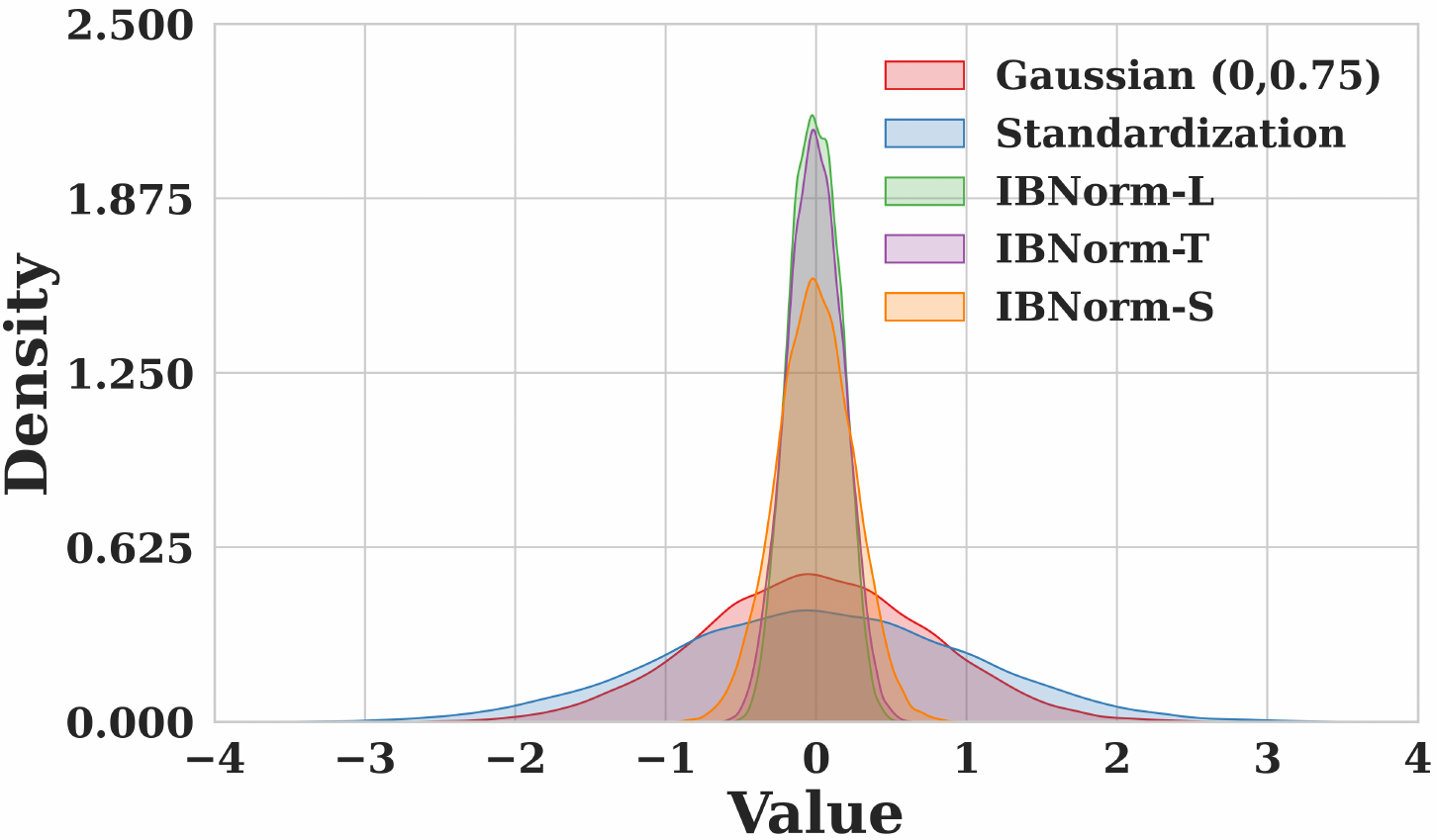}
        \caption{Gaussian (0,0.75) input.}
    \end{subfigure}
    \hfill
    \begin{subfigure}{0.48\linewidth}
        \centering
        \includegraphics[width=\linewidth]{./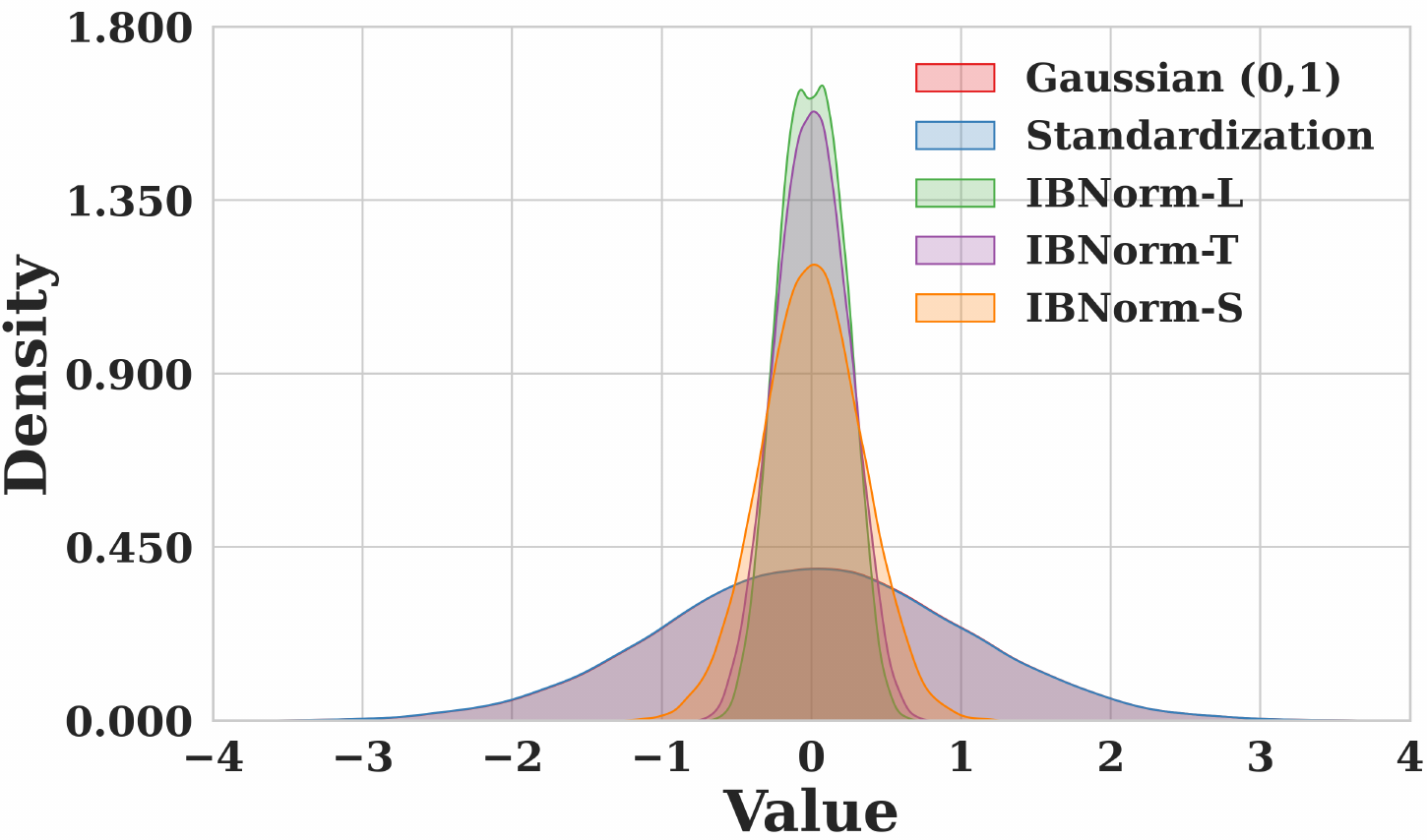}
        \caption{Gaussian (0,1) input.}
    \end{subfigure}
\end{minipage}
}
\caption{Comparison of kernel density estimation across different compression operation (Standardization, IBNorm-L ($\lambda=4$), IBNorm-T ($\lambda=4$), IBNorm-S ($\lambda=3$)) given Gaussian distribution inputs with mean 0 and different variance. Compression operations in IBNorm compress the tail in the activations and adjust the higher-order statistics.}
\label{fig:ib3}
\end{figure}

\textbf{\noindent{Distinction of IBNorm-S, IBNorm-L, and IBNorm-T}}
IBNorm-S, IBNorm-L, and IBNorm-T are three types of IBNorm, which satisfy the compression property. Compression operations in IBNorm compress the tail of activations while adjusting higher-order statistics. IBNorm-L, IBNorm-T, and IBNorm-S exhibit different abilities to compress the tails of activations as shown in Figures~\ref{fig:ib1},~\ref{fig:ib2}, and~\ref{fig:ib3}. These differences in how aggressively each function compresses activation tails further lead to the distinction in the induced entropy of the output distribution. Because all three satisfy IB constraints, their IB curves (predictive MI, nuisance MI, token-level IB value) follow similar trends during training, making it difficult to distinguish their own characteristics purely from the IB framework. For this reason, we provide an additional quantitative analysis using kernel density entropy, which directly reflects how much information each compression function filters.

Specifically, to illustrate the distinct compression behaviors of IBNorm-S, IBNorm-L, and IBNorm-T, we compute the entropy of the kernel density estimate (KDE) of activations after applying each variant (with the same hyperparameter $\lambda=4$ under different input distributions. For an input Gaussian $\mathcal{N}(0,2)$ with entropy 1.768, the resulting KDE entropies are 1.695 for IBNorm-S, 1.530 for IBNorm-L, and 1.302 for IBNorm-T. Similarly, for an input Gaussian $\mathcal{N}(0,1)$ with entropy 1.418, the corresponding output entropies are 1.352 for IBNorm-S, 1.263 for IBNorm-L, and 1.192 for IBNorm-T. These results indicate that IBNorm-S performs the mildest compression, IBNorm-L achieves stronger compression, and IBNorm-T is the most aggressive in reducing entropy. This analysis highlights the distinct statistical effects of the three variants on latent representations.

Beyond compression strength, we observe differences in robustness. As shown in Tables~\ref{lablation2} and~\ref{lablation1}, IBNorm-S is less sensitive to variations in the hyperparameter $\lambda$ compared to IBNorm-L and IBNorm-T. Similarly, ablation studies on the normalization structure shown in Tables~\ref{ablation2} and~\ref{ablation1} indicate that IBNorm-S is more robust to the ordering of the IB compression and standardization operations.

At the task level, performance differences align with these compression characteristics. Across diverse medium-scale LLMs (Llama 130M-1B and GPT 355M), IBNorm-S generally achieves the worst performance among the three IBNorm variants on evaluation tasks in Tables~\ref{llama2} and~\ref{llama}, reflecting its mildest compression. For challenging tasks, such as BBH and GPQA, the stronger compression applied by IBNorm-L can lead to improved performance, suggesting that mild compression can better support reasoning and generalization in difficult tasks.

\subsection{Estimated Mutual Information Quantities and Token-level IB Value across Training}
\label{app:expmi}
We track the evolution of three mutual information quantities during training: predictive information $\hat I(Y;T_l)$, task-nuisance information $\hat I(T_{l-1};T_l)$, and their token-level IB value $\hat I(Y;T_l) - \hat I(T_{l-1};T_l)$. These quantities are measured throughout training on test dataset for Llama-130M and GPT-2 (124M) on C4 and OpenWebText, respectively. The details in token-level IB value calculation are in Appendix~\ref{app:mi}.

\begin{figure}[H]
	\centering
	\begin{subfigure}{0.33\linewidth}
		\centering
		\includegraphics[width=\linewidth]{./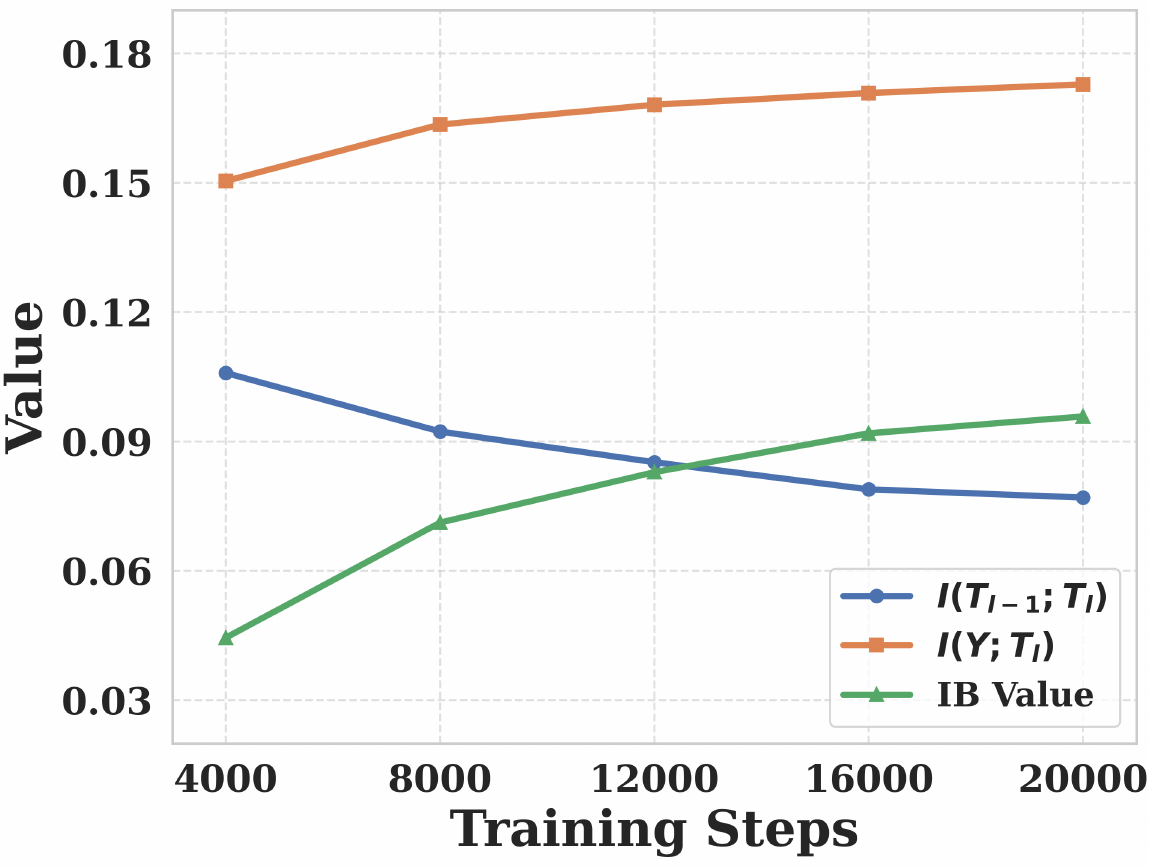}
		\caption{LN.}
	\end{subfigure}\hfill
	\begin{subfigure}{0.33\linewidth}
        \centering
		\includegraphics[width=\linewidth]{./pic/mi2.pdf}
		\caption{IBNorm-T.}
	\end{subfigure}\hfill
	\begin{subfigure}{0.33\linewidth}
    	\centering
		\includegraphics[width=\linewidth]{./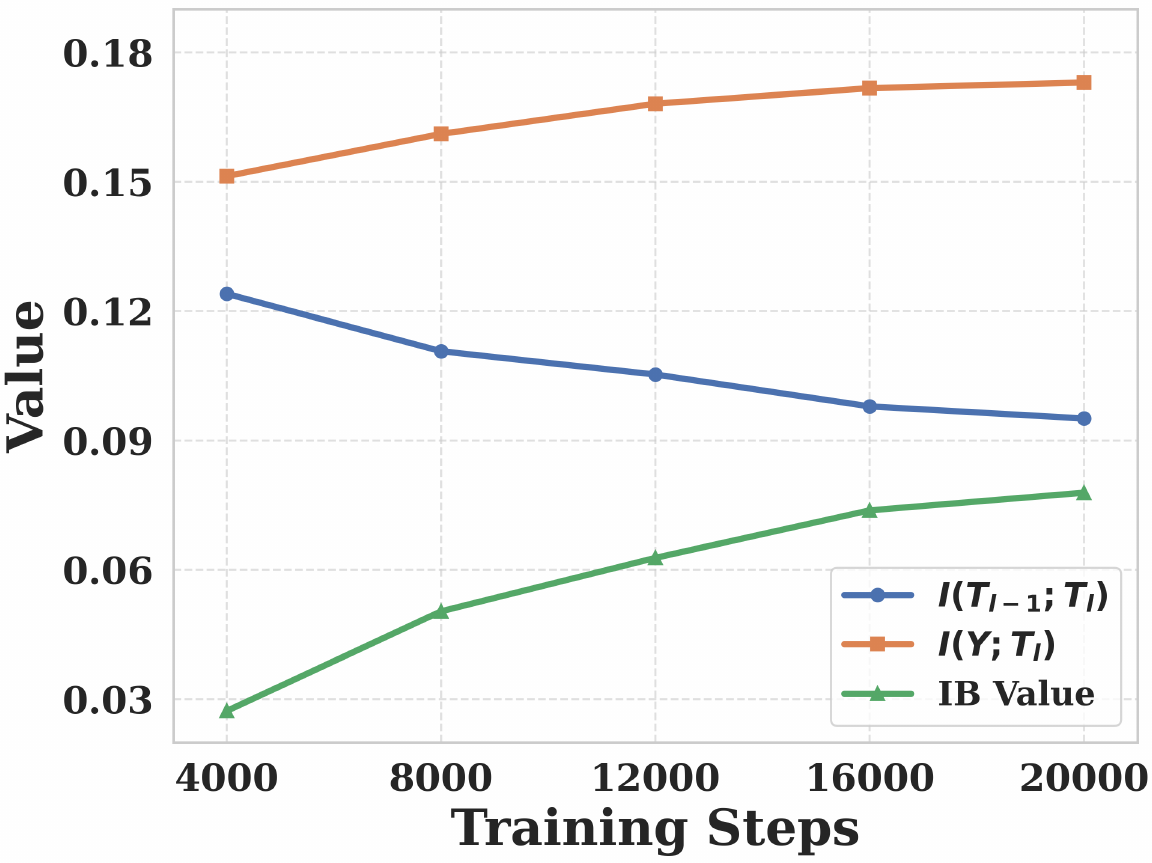}
		\caption{NormalNorm.}
	\end{subfigure}
	
	\caption{Evolution of predictive information $\hat I(Y;T_l)$, task-nuisance information $\hat I(T_{l-1};T_l)$, and the token-level IB value during training of Llama-130M on C4, using LN, IBNorm-T, and NormalNorm. Results are reported on the test set.}
    \label{fig:m1}
\end{figure}

\begin{figure}[H]
	\centering
	\begin{subfigure}{0.33\linewidth}
		\centering
		\includegraphics[width=\linewidth]{./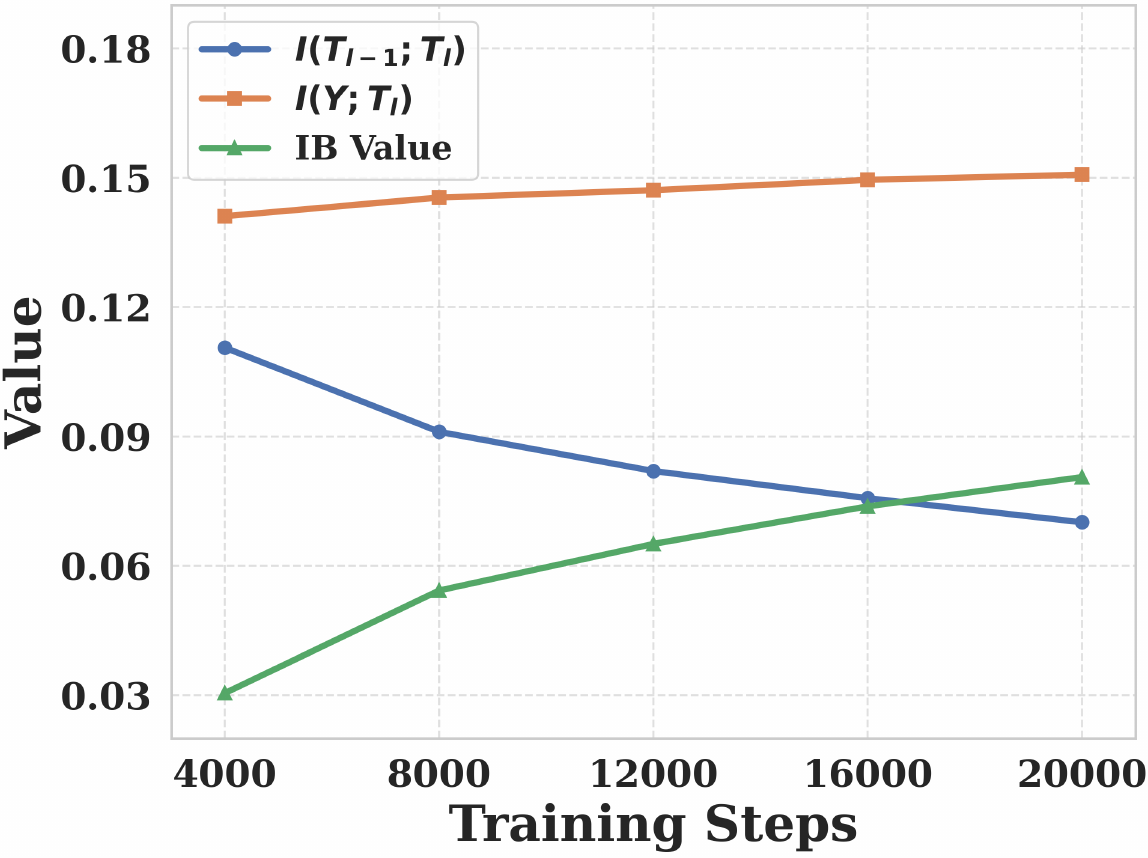}
		\caption{LN.}
	\end{subfigure}\hfill
	\begin{subfigure}{0.33\linewidth}
        \centering
		\includegraphics[width=\linewidth]{./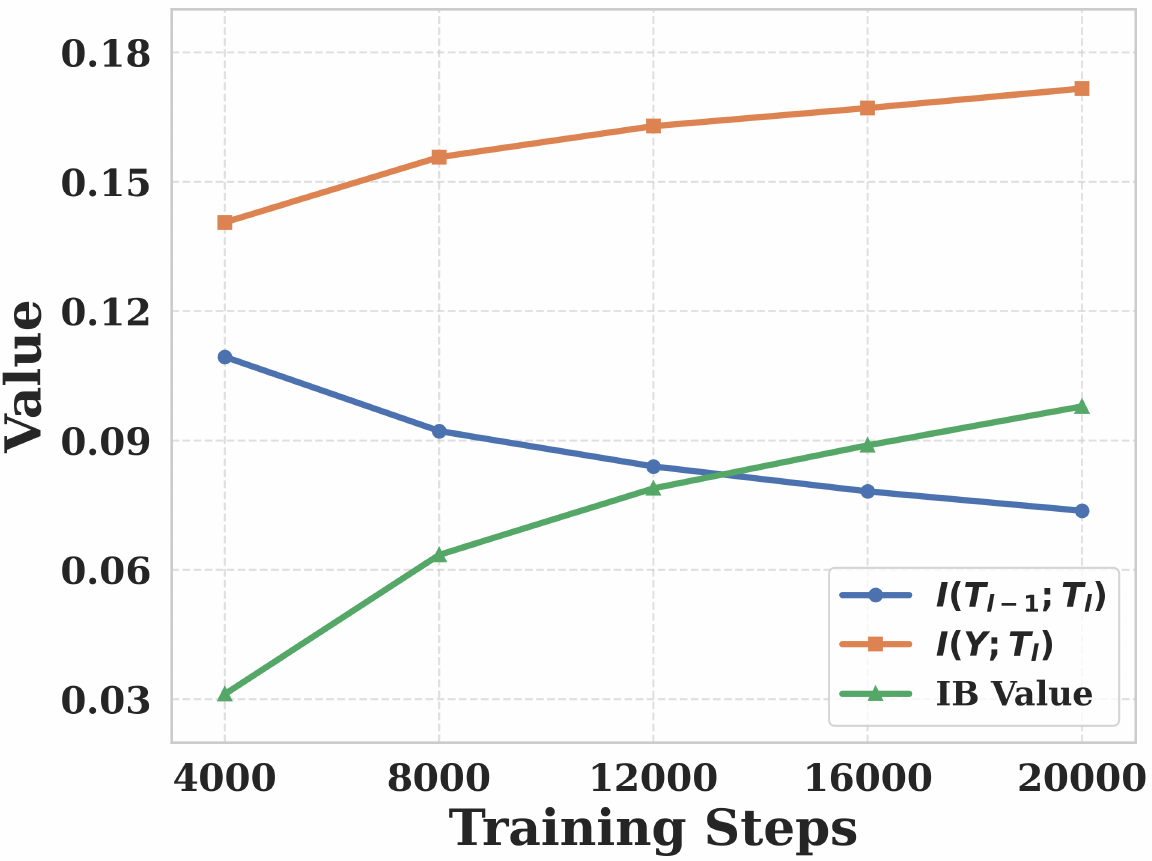}
		\caption{IBNorm-L.}
	\end{subfigure}\hfill
	\begin{subfigure}{0.33\linewidth}
    	\centering
		\includegraphics[width=\linewidth]{./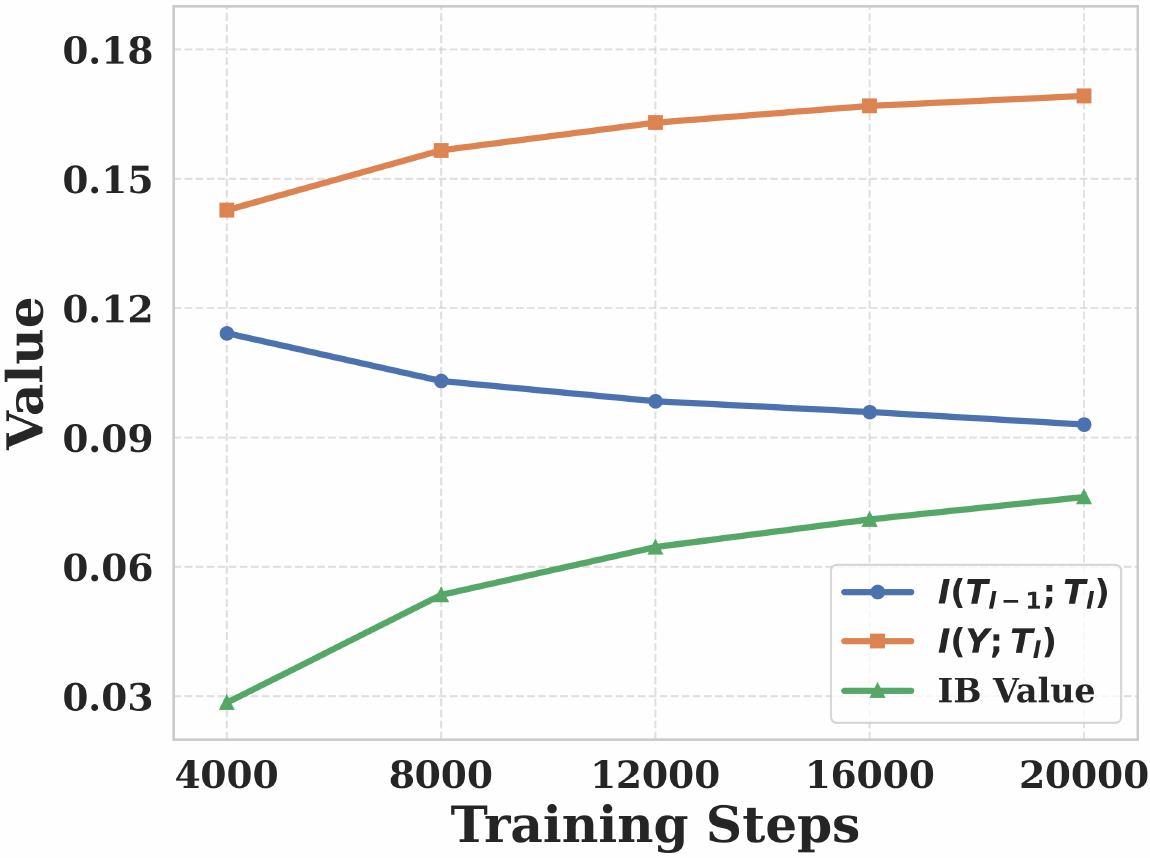}
		\caption{NormalNorm.}
	\end{subfigure}
	
	\caption{Evolution of predictive information $\hat I(Y;T_l)$, task-nuisance information $\hat I(T_{l-1};T_l)$, and the token-level IB value during training of GPT-2 (124M) on OpenWebText, using LN, IBNorm-L, and NormalNorm. Results are reported on the test set.}
    \label{fig:m2}
\end{figure}

\begin{figure}[H]
	\centering
	\begin{subfigure}{0.45\linewidth}
		\centering
		\includegraphics[width=0.8\linewidth]{./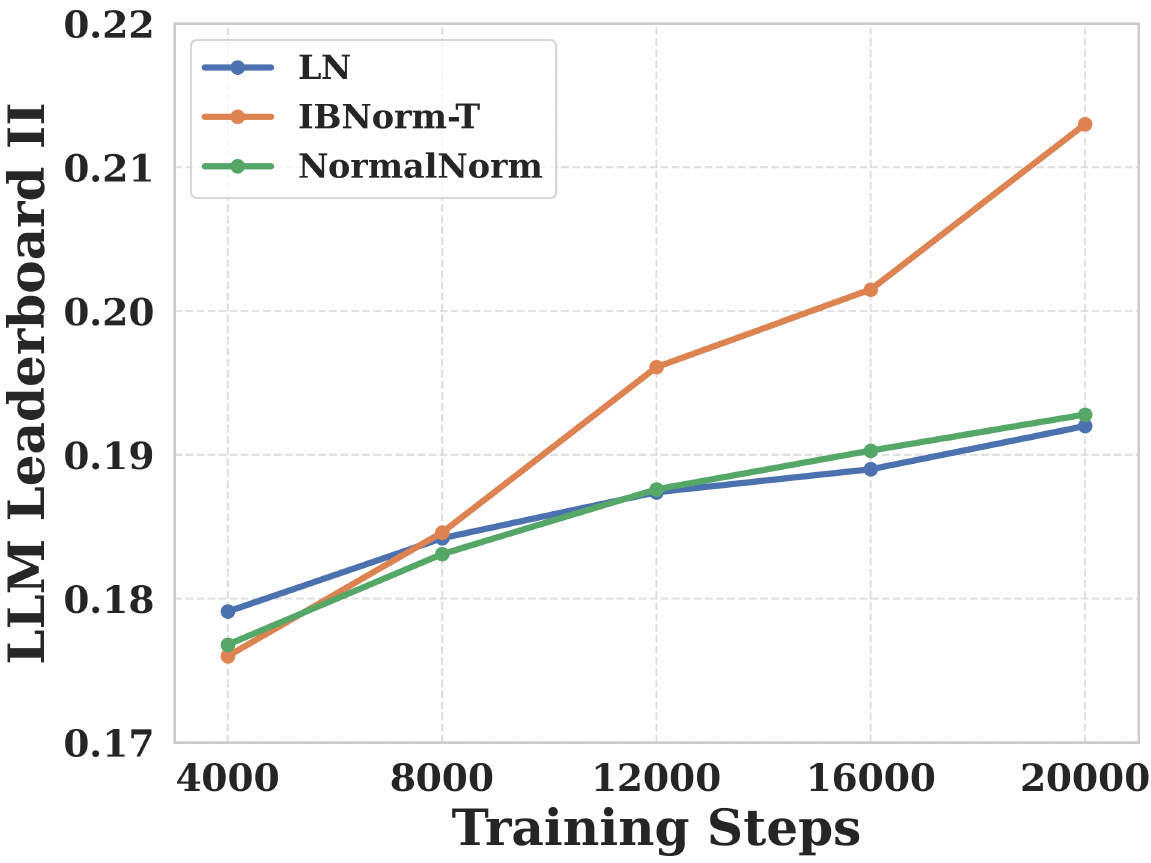}
		\caption{Llama 130M.}
	\end{subfigure}\hfill
	\begin{subfigure}{0.45\linewidth}
        \centering
		\includegraphics[width=0.8\linewidth]{./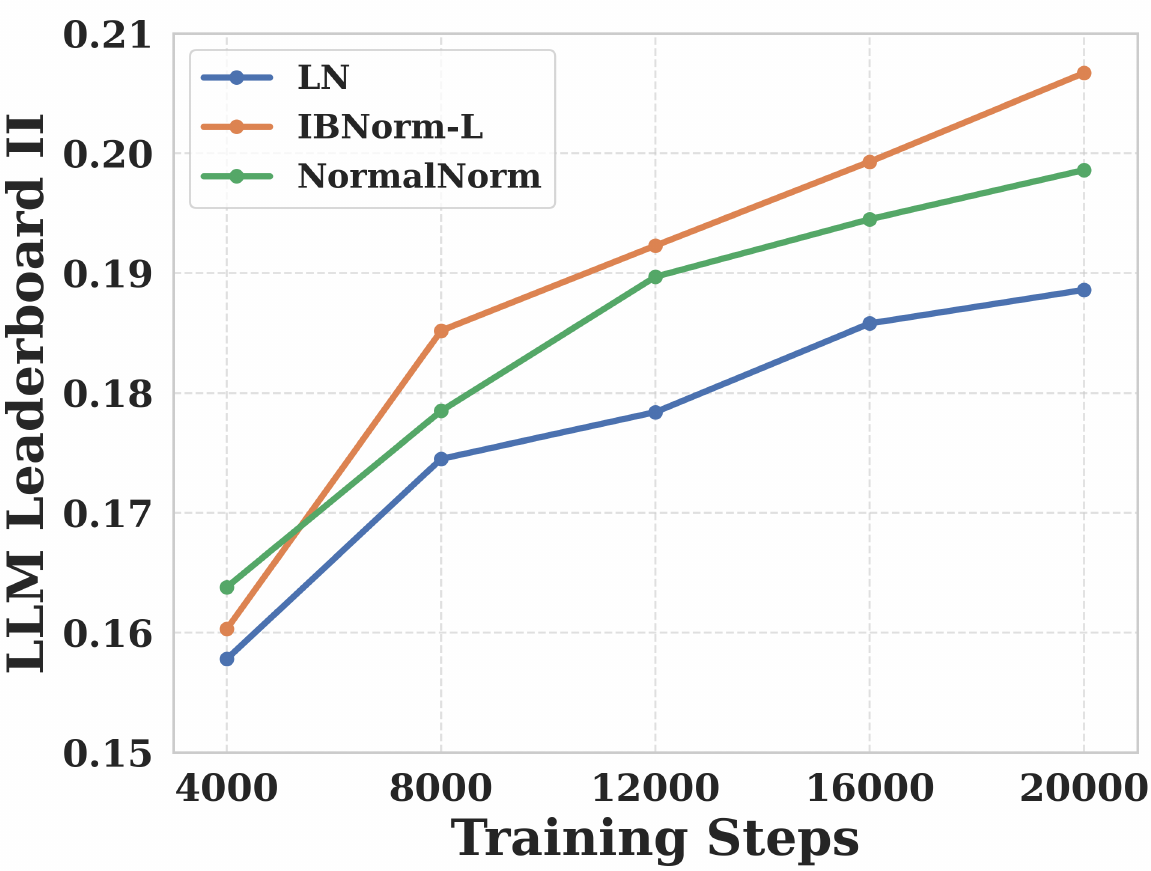}
		\caption{GPT-2 124M.}
	\end{subfigure}\hfill
	
	\caption{Evaluation of Llama 130M and GPT-2 124M trained with LN, IBNorm-L, NormalNorm on LLM Leaderboard II across training.}
    \label{fig:m3}
\end{figure}

As shown in Figs.~\ref{fig:m1},~\ref{fig:m2}, and~\ref{fig:m3}, these results demonstrate that IBNorm encourages representations that retain task-relevant information while suppressing irrelevant task-nuisance information, providing a explanation for the observed empirical gains.

\subsection{Discussion on Training Time and Computational Overhead}
\label{app:eff}
IBNorm introduces only a single compression operation per normalization layer. Consequently, it does not incur significant computational overhead or cause numerical stability issues, even for larger models. Training time and memory usage of IBNorm remain comparable to standard variance-centric normalization methods such as LayerNorm or RMSNorm.

We report the training time and total VRAM usage for LLaMA-60M under the same global batch size 512 and local batch size 64, trained on 4$\times$ L40-S GPUs with 20000 training steps:
\begin{table}[H]
\label{tab:computational_overhead}
\centering
\caption{\re{Training time and VRAM usage for different normalization methods on LLaMA-60M.}}
\re{
\begin{tabular}{c|cc}
\toprule
\textbf{Normalization} & \textbf{Training Time} ($\downarrow$)& \textbf{Total VRAM} ($\downarrow$) \\
\midrule
LayerNorm     & 1h 54min & 95,944 MiB \\
RMSNorm       & 2h 15min & 102,601 MiB \\
NormalNorm    & 3h 38min & 139,207 MiB \\
IBNorm-S      & 2h 14min & 102,460 MiB \\
IBNorm-L      & 2h 17min & 106,371 MiB \\
IBNorm-T      & 2h 14min & 104,853 MiB \\
\bottomrule
\end{tabular}}
\end{table}

We report the training time and total VRAM usage for LLaMA-1B under the same global batch size 512 and local batch size 8, trained on 4$\times$ L40-S GPUs with 100000 training steps:
\begin{table}[H]
\label{tab:computational_overhead2}
\centering
\caption{\re{Training time and VRAM usage for different normalization methods on LLaMA-1B.}}
\re{
\begin{tabular}{c|cc}
\toprule
\textbf{Normalization} & \textbf{Training Time} ($\downarrow$)& \textbf{Total VRAM} ($\downarrow$) \\
\midrule
LayerNorm     & 176h 02min & 99,832 MiB \\
RMSNorm       & 190h 58min & 110,574 MiB \\
NormalNorm    & 246h 06min & 162,088 MiB \\
IBNorm-S      & 194h 29min & 115,920 MiB \\
IBNorm-L      & 190h 02min & 106,484 MiB \\
IBNorm-T      & 188h 41min & 106,481 MiB \\
\bottomrule
\end{tabular}}
\end{table}

\section{Details of Other Terms}
\label{app:term}
We have
\begin{equation*}
\fontsize{9.5}{3}\selectfont{
\begin{aligned}
& G(\sigma_{H_1|X}, \sigma_{H_1|Y}, \sigma_{p_1|X}, \sigma_{p_1|Y}, \mu_{p_1|X}, \mu_{p_1|Y}) \\
&=
C_1 \sqrt{\frac{\sigma_{H_1|X}}{M}} + C_1 \sqrt{\frac{\sigma_{H_1|Y}}{M}} + \sum_y p(y) \int_{\mathcal{T}_1} \phi(\sqrt{\sigma_{p_1|Y}}) \, dt 
+ \beta \sum_x p(x) \int_{\tilde{\mathcal{T}}_1} \phi(\sqrt{\sigma_{p_1|X}}) \, dt \\
& + (\beta+2) \int_{\tilde{\mathcal{T}}_1} \phi\Big(C_1 \sqrt{\frac{\sigma_{H_1|X}}{M}}\Big) p_{\tilde{T}_1}(t) \, dt + \sum_y p(y) \int_{\tilde{\mathcal{T}}_1} \phi(\mu_{p_1|Y}) \, dt
+ \beta \sum_x p(x) \int_{\tilde{\mathcal{T}}_1} \phi(\mu_{p_1|X}) \, dt \\
& +(\beta+1) \int_{\tilde{\mathcal{T}}_1} \sum_x p(x)[ \phi(\sqrt{\sigma_{p_1|X}})+ \phi(\mu_{p_1|X})] dt,
\end{aligned}}
\end{equation*}
where
\[
\begin{aligned}
\sigma_{H_1|X} &:= V(H(\tilde{T}_1|X))= |\mathcal{X}|\mathbb{V}(H(\tilde{T}_1|X)), & \sigma_{H_1|Y} &:= V(H(\tilde{T}_1|Y))=|\mathcal{Y}|\mathbb{V}(H(\tilde{T}_1|Y)), \\
\sigma_{p_1|X} &:= V(p(\tilde{T}_1|X))=|\mathcal{X}|\mathbb{V}(p(\tilde{T}_1|X)), & \sigma_{p_1|Y} &:= V(p(\tilde{T}_1|Y))=|\mathcal{Y}|\mathbb{V}(p(\tilde{T}_1|Y)), \\
\mu_{p_1|X} &:= \mathbb{E}[p(\tilde{T}_1|X)], & \mu_{p_1|Y} &:= \mathbb{E}[p(\tilde{T}_1|Y)],
\end{aligned}
\] and
\[
C_1 = 2+\sqrt{2\log((|\mathcal{Y}|+2)/\delta)}.
\]
In addition, we have the optimum-dependent term:
\begin{equation*}
\begin{aligned}
& Q(\tilde{T}_1^{*}) = 
(2+\sqrt{2\log((|\mathcal{Y}|+2)/\delta)}) \sqrt{\frac{V(H(\tilde{T}^*_1|Y))}{M}} \\
&+ \beta (2+\sqrt{2\log((|\mathcal{Y}|+2)/\delta)}) \sqrt{\frac{V(H(\tilde{T}^*_1|X))}{M}} \\
&+ (\beta+2) \int_{\tilde{\mathcal{T}}^*_1} \phi\Bigg(
(2+\sqrt{2\log((|\mathcal{Y}|+2)/\delta)}) \sqrt{\frac{V(p(\tilde{T}^*_1|X))}{M}}
\Bigg) p_{\tilde{T}^*_1}(t) \, dt \\
&+ \sum_y p(y) \int_{\tilde{\mathcal{T}}_1} \phi(p_{\tilde{T}^*_1|Y}(t|y)) \, dt + \beta \sum_x p(x) \int_{\tilde{\mathcal{T}}_1} \phi(p_{\tilde{T}^*_1|X}(t|x)) \, dt + (\beta+1) \int_{\tilde{\mathcal{T}}_1} \phi(p(\tilde{T}^*_1)) \, dt \\
&+ (\beta+1) \int_{\tilde{\mathcal{T}}_1} \sum_x p(x) \phi (p_{\tilde{T}_1^{*}|X}(t|x)) dt
\end{aligned}
\end{equation*}

\section{Proof}
\label{app:proof}

\subsection{Proof for Theorem~\ref{thm1}}
\label{app:proof1}
\begin{theorem*}
For any hyperparameter $\beta \in [0,1]$ and the sample set $S \sim \mathbb{P}(X,Y)$ of size $M$, we have
\[
\widehat{\text{IB}}_S(T_{IB}) \ge \widehat{\text{IB}}_S(T_s) \quad \text{almost surely}.
\]
\end{theorem*}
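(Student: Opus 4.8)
The plan is to reduce the comparison of the two empirical IB values to two separate monotonicity statements, and to exploit that $\beta\in[0,1]$ so that both mutual-information terms enter with a favorable sign. First I would record that $\zeta$, $h$, $s_\lambda$ and $\psi$ are all deterministic maps, so $\tilde{T}_s$ and $\tilde{T}_{IB}$ are each deterministic functions of $X$; hence $\tilde{T}-X-Y$ is a Markov chain in both cases and, for the empirical distribution on $S$, the chain rule gives $\hat{I}_S(X;\tilde{T}) = \hat{I}_S(Y;\tilde{T}) + \hat{I}_S(X;\tilde{T}\mid Y)$. Substituting this into $\widehat{\text{IB}}_S(\tilde{T}) = \hat{I}_S(Y;\tilde{T}) - \beta\,\hat{I}_S(X;\tilde{T})$ yields the rewriting $\widehat{\text{IB}}_S(\tilde{T}) = (1-\beta)\,\hat{I}_S(Y;\tilde{T}) - \beta\,\hat{I}_S(X;\tilde{T}\mid Y)$, in which $1-\beta\ge 0$ and $\beta\ge 0$ are the decisive facts.

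Given this rewriting it suffices to prove, almost surely over the draw of $S$, that (i) $\hat{I}_S(Y;\tilde{T}_{IB})\ge \hat{I}_S(Y;\tilde{T}_s)$ and (ii) $\hat{I}_S(X;\tilde{T}_{IB}\mid Y)\le \hat{I}_S(X;\tilde{T}_s\mid Y)$; taking $(1-\beta)$ times (i) minus $\beta$ times (ii) then gives the theorem. The engine behind both claims is the bounded-compression property: since $|s_\lambda(x)-\mu| = f_\lambda(|x-\mu|)\le \alpha_\lambda|x-\mu|$ with $\alpha_\lambda\le 1$, applying $s_\lambda$ before $\psi$ contracts every activation toward its group mean (and, for the concave instantiations IBNorm-L and IBNorm-T, contracts the tails strictly more than the bulk), so each class-conditional activation block becomes no more dispersed than before. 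For (ii) I would bound the residual-information term by tracking how this uniform contraction, followed by the change of variables induced by re-standardization $\psi$, can only shrink the conditional-entropy gap that measures the information about $X$ beyond $Y$. For (i) I would use that $f_\lambda$ is strictly monotone, so $s_\lambda$ is injective and order-preserving and therefore does not merge any two sample activations that carried distinct labels, whence $\tilde{T}_{IB}$ reveals at least as much about $Y$ as $\tilde{T}_s$. The qualifier ``almost surely'' is spent on discarding the measure-zero events on which some group variance vanishes or two sample activations coincide exactly, where $\psi$ (or the empirical estimator) is undefined.

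The main obstacle is step (ii), specifically the tension between the nonlinear compression $s_\lambda$ and the subsequent re-standardization $\psi$: because $\psi$ divides by the (now smaller) standard deviation, it partially undoes the contraction, so the inequality is not a bare data-processing statement and genuinely requires $\alpha_\lambda\le 1$ — and, for the nonlinear variants, the concavity of $f_\lambda$ — rather than mere finiteness of the compression ratio. Quantifying the net effect on the relevant entropy terms after this rescaling, uniformly over the sample and over all $\beta\in[0,1]$, is where the real work lies. The linear case IBNorm-S ($f_\lambda(r)=r/\lambda$) serves as a sanity check: there $\psi\circ s_\lambda=\psi$ exactly, so $\tilde{T}_{IB}=\tilde{T}_s$ and the inequality holds with equality.
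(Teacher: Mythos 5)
Your decomposition $\widehat{\text{IB}}_S(\tilde{T}) = (1-\beta)\,\hat{I}_S(Y;\tilde{T}) - \beta\,\hat{I}_S(X;\tilde{T}\mid Y)$ is legitimate, but the argument you attach to part (i) undermines the whole plan: you never say what $\hat{I}_S$ actually is, and for any estimator that is invariant under invertible per-sample transformations of the representation (in particular the plug-in mutual information of the empirical joint distribution, or population differential MI under a diffeomorphism), the strict monotonicity of $f_\lambda$ together with the invertibility of the per-group standardization $\psi$ makes $\tilde{T}_{IB}$ and $\tilde{T}_s$ bijectively related, so \emph{both} $\hat{I}_S(Y;\cdot)$ and $\hat{I}_S(X;\cdot\mid Y)$ coincide for the two representations. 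Your injectivity argument for (i) therefore proves equality, and the same invariance would force equality in (ii) as well, so the decomposition by itself cannot separate the two normalizations. The inequality in the theorem is not a statement about information processing by $s_\lambda$ at all; it is a statement about how the \emph{estimation} of the IB value behaves for the two representations. The paper's proof lives entirely in that estimator-dependent world: it bounds the deviation $\Delta_S(\tilde{T}_1;\tilde{T}_1^*)$ of each empirical IB value from that of the IB-optimal representation by a function $G$ that is monotone in variance and mean functionals of the conditional densities and conditional entropies (Lemma~\ref{ibgap}), then shows via the change-of-variables entropy-reduction argument (Proposition~\ref{prop}) and a variance-contraction lemma (Lemma~\ref{thm2}) that these functionals are no larger for $\tilde{T}_{IB}$ than for $\tilde{T}_s$, and concludes the ordering of the empirical IB values from the ordering of the gaps. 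Your proposal never engages with any such distribution- or estimator-level quantity, which is precisely where the content of the theorem resides.

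Beyond this structural problem, step (ii) --- which you yourself identify as the place where ``the real work lies'' --- is left entirely unproven: you correctly observe that re-standardization partially undoes the contraction so no bare data-processing argument applies, but you offer no mechanism in its place (no control of the conditional densities, no use of the bounded-compression constant $\alpha_\lambda$ beyond naming it, no treatment of how the empirical estimator responds to the reshaped tails). Your sanity check that IBNorm-S satisfies $\psi\circ s_\lambda=\psi$, hence $\tilde{T}_{IB}=\tilde{T}_s$ exactly, is correct and is consistent with the non-strict inequality, but it should also have been a warning sign: it shows that whatever drives the inequality cannot be read off the representation map alone and must come from higher-order distributional properties of the nonlinear variants interacting with the estimator --- exactly the route the paper takes and your outline omits.
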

\begin{proof}
Given the dataset $S\sim \mathbb{P}_{X,Y}$ with size $M$ and one-layer neural networks, $f_{\text{1}}=\Phi_\text{1}\circ h$. Because $\eta$ is linear and invertible, it does not affect the mutual information terms~\citep{mi,prob}; therefore, it suffices to analyze the intermediate features $\tilde{T}_s := \psi \circ \zeta \circ h(X)$ and $\tilde{T}_{IB} := \psi \circ s_\lambda \circ \zeta \circ h(X)$. We analyze the IB value of intermediate features $\tilde{T}_1 := \psi \circ \zeta \circ h(X)$ in a general setting given a network $f$. First, we define the empirical gap $\Delta_{S}(\tilde{T}_1;\tilde{T}_1^{*})$ between the IB bottleneck derived from arbitrary representation $\tilde{T}_1$ and the IB-optimum representation $\tilde{T}_1^{*}$, \textit{i.e.}
\begin{equation}
\Delta_{S}(\tilde{T}_1;\tilde{T}_1^{*})=|\left(\hat I(Y;\tilde{T}_1) - \beta \hat I(\tilde{T}_{0};\tilde{T}_1)\right)-\left(\hat I(Y;\tilde{T}^{*}_1) - \beta \hat I(\tilde{T}^{*}_{0};\tilde{T}^{*}_1)\right)|,
\end{equation}
where $\tilde{T}_{0}=\tilde{T}^{*}_{0}=X$.

We prove $\widehat{\text{IB}}_{1}\geq \widehat{\text{IB}}_{2}$ by showing that $\Delta_{S}(\tilde{T}_{IB};\tilde{T}_1^{*})\leq \Delta_{S}(\tilde{T}_s;\tilde{T}_1^{*})$.

\begin{lemma}
\label{ibgap}
Given a fixed trade-off hyperparameter $\beta>0$ and a confidence parameter $\delta \in (0,1)$, the following holds with probability at least $1-\delta$ over any sample set $S\sim\mathbb{P}(X,Y)$ with size $M$: for any representation $\tilde{T}_1 \in \mathcal{\tilde{T}}_1$, 
\begin{equation}
\fontsize{9.5}{11}\selectfont{
\Delta_{S}(\tilde{T}_1;\tilde{T}^*_1)\leq G(\sigma_{H_1|X}, \sigma_{H_1|Y}, \sigma_{p_1|X}, \sigma_{p_1|Y}, \mu_{p_1|X}, \mu_{p_1|Y})+Q(T_1^{*}).
}
\end{equation}
where $G(\cdot)$ is a monotonically increasing function with its arguments. Specifically, $\sigma_{H_1|X} := \mathbb{V}(H(\tilde{T}_1|X)), \sigma_{H_1|Y} := \mathbb{V}(H(\tilde{T}_1|Y)), \sigma_{p_1|X} := \mathbb{V}(p(\tilde{T}_1|X)),  \sigma_{p_1|Y} := \mathbb{V}(p(\tilde{T}_1|Y)), \mu_{p_1|X} := \mathbb{E}[p(\tilde{T}_1|X)], \mu_{p_1|Y} := \mathbb{E}[p(\tilde{T}_1|Y)]$ and $\mathbb{V}$ is the variance function.

The formulas of concrete form of $G$ and $Q$ are given in Appendix~\ref{app:term}. The proof is given in Appendix~\ref{app:proof2}.
\end{lemma}

We now analyze the gap $\Delta_{S}(\tilde{T}_{IB};\tilde{T}_1^{*})$ and $\Delta_{S}(\tilde{T}_s;\tilde{T}_1^{*})$ by analyzing the $\tilde{T}_1$ relevant terms. We first note that we have the following property for the IBNorm.
\begin{proposition}[Entropy Reduction]
\label{prop}
For any random vector $Z \in \mathbb{R}^d$ with distribution $\mathbb{P}_Z$, the compression operation $s_\lambda$ in IBNorm satisfies
\begin{equation}\label{eq:entropy_reduction}
H(s_\lambda(Z)) \le H(Z),
\end{equation}
where $H(\cdot)$ denotes differential entropy. The detailed proof is given in Appendix~\ref{app:proofprop}.
\end{proposition}

Based on Proposition~\ref{prop}, we have the following lemma.
\begin{lemma}
\label{thm2}
\textbf{1)} Let $\tilde{T}_{IB} = s_\lambda(\tilde{T}_s)$, where $s_\lambda$ is a bounded-compression mapping with Lipschitz constant $L_1 = \prod_{i=1}^d \frac{1}{\lambda}$. For any small $\delta>0$, the following holds:
\[
\mathbb{P}(\mathbb{V}(H(\tilde{T}_{IB}|X)) \le \mathbb{V}(H(\tilde{T}_s|X)))\geq 1-\delta, \quad
\mathbb{P}(\mathbb{V}(H(\tilde{T}_{IB}|Y)) \le \mathbb{V}(H(\tilde{T}_s|Y)))\geq 1-\delta.
\]
\textbf{2)} In addition, let $\tilde{T}_1, \tilde{T}_1' \in \tilde{\mathcal{T}}_1$ be two arbitrary elements in the representation space. Then, for any $\delta\in(0,1)$, the following holds with probability at least $1-\delta$:
\[
\begin{aligned}
&|\mathbb{V}(p(\tilde{T}_1|X)) - \mathbb{V}(p(\tilde{T}_1'|X))| \le \delta, \\
&|\mathbb{V}(p(\tilde{T}_1|Y)) - \mathbb{V}(p(\tilde{T}_1'|Y))| \le \delta, \\
&|\mathbb{E}[p(\tilde{T}_1|X)] - \mathbb{E}[p(\tilde{T}_1'|X)]| \le \delta, \\
&|\mathbb{E}[p(\tilde{T}_1|Y)] - \mathbb{E}[p(\tilde{T}_1'|Y)]| \le \delta,
\end{aligned}
\]
where the expectation and variance are taken with respect to the Lebesgue measure on $\mathcal{X} \times \mathcal{Y}$. The detailed proof is given in Appendix~\ref{app:proof4}.
\end{lemma}

Since $G(\cdot)$ is monotone in its variance arguments, Lemma~\ref{thm2} implies that, given arbitrary $\beta \in [0,1]$ and $\delta \in (0,1)$, with probability at least $1-\delta$ over the sample set $S \sim \mathbb{P}(X,Y)$ with size $M$, the following inequality holds:
\[
\Delta_S(\tilde{T}_{IB}; \tilde{T}_1^*) \le \Delta_S(\tilde{T}_s; \tilde{T}_1^*).
\]
Thus, given the hyperparameter $\beta$ and the sample set $S \sim \mathbb{P}(X,Y)$ with size $M$, we have
\[
\widehat{\text{IB}}_S(T_{IB}) \ge \widehat{\text{IB}}_S(T_s) \quad \text{almost surely}.
\]

\end{proof}

\re{\subsection{Proof for Lemma~\ref{ibgap}}}
\label{app:proof2}
\begin{lemma*}
Given a fixed trade-off hyperparameter $\beta>0$ and a confidence parameter $\delta \in (0,1)$, the following holds with probability at least $1-\delta$ over the sample set $S$: for any representation $\tilde{T}_1 \in \mathcal{\tilde{T}}_1$, 
\begin{equation*}
\fontsize{9.5}{11}\selectfont{
\Delta_{\text{S}}(\tilde{T}_1;\tilde{T}^*_1)\leq G(\sigma_{H_1|X}, \sigma_{H_1|Y}, \sigma_{p_1|X}, \sigma_{p_1|Y}, \mu_{p_1|X}, \mu_{p_1|Y})+Q(T_1^{*}).
}
\end{equation*}
where $G(\cdot)$ is a monotonically increasing function with its arguments. Specifically, $\sigma_{H_1|X} := \mathbb{V}(H(\tilde{T}_1|X)), \sigma_{H_1|Y} := \mathbb{V}(H(\tilde{T}_1|Y)), \sigma_{p_1|X} := \mathbb{V}(p(\tilde{T}_1|X)),  \sigma_{p_1|Y} := \mathbb{V}(p(\tilde{T}_1|Y)), \mu_{p_1|X} := \mathbb{E}[p(\tilde{T}_1|X)], \mu_{p_1|Y} := \mathbb{E}[p(\tilde{T}_1|Y)]$ and $\mathbb{V}$ is the variance function.

The formulas of concrete form of $G$ and $Q$ are given in Appendix~\ref{app:term}.
\end{lemma*}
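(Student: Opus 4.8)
The plan is to bound $\Delta_S(\tilde T_1;\tilde T_1^{*})=\bigl|\widehat{\text{IB}}_S(\tilde T_1)-\widehat{\text{IB}}_S(\tilde T_1^{*})\bigr|$ by a triangle inequality that separates the statistical fluctuation of the empirical IB functional at $\tilde T_1$, the fluctuation at the optimum $\tilde T_1^{*}$, and the bias of the plug-in entropy/density estimators. First I would expand the empirical IB value into plug-in differential entropies via $I(A;\tilde T_1)=H(\tilde T_1)-H(\tilde T_1\mid A)$, so that $\widehat{\text{IB}}_S(\tilde T_1)$ is a $\beta$-weighted combination of $\hat H_S(\tilde T_1)$, $\hat H_S(\tilde T_1\mid Y)$, $\hat H_S(\tilde T_1\mid X)$ formed from the empirical marginal and conditional densities over $S$. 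Inserting the population values $\text{IB}(\tilde T_1)$ and $\text{IB}(\tilde T_1^{*})$ and using optimality of $\tilde T_1^{*}$ (so that the population gap $\text{IB}(\tilde T_1^{*})-\text{IB}(\tilde T_1)$ is nonnegative), I am left with a $\tilde T_1$-only remainder that will be collected into $G$; a $\tilde T_1^{*}$-only remainder that becomes $Q(\tilde T_1^{*})$ and absorbs the finite differential entropies $\int_{\tilde{\mathcal T}_1^{*}}\phi(p_{\tilde T_1^{*}\mid Y})$, $\int\phi(p_{\tilde T_1^{*}\mid X})$, $\int\phi(p(\tilde T_1^{*}))$ of the optimal representation; and cross terms such as $\beta I(X;\tilde T_1)\le\beta H(\tilde T_1)$ that I would further bound by the entropy--variance inequality $H(Z)\le\tfrac12\log\bigl((2\pi e)^{d}\det\Sigma_Z\bigr)$ so that they too become monotone surrogates in the variance proxies of $\tilde T_1$.

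For the fluctuation at $\tilde T_1$ I would separate the sampling error from the entropy-functional decomposition. Each conditional-entropy estimator $\hat H_S(\tilde T_1\mid X{=}x)$ and $\hat H_S(\tilde T_1\mid Y{=}y)$ is, up to a plug-in correction, an average of $M$ i.i.d.\ summands whose variance is captured by the proxies $\sigma_{H_1|X}$ and $\sigma_{H_1|Y}$ of the statement; Bernstein's inequality for each, followed by a union bound over the $|\mathcal Y|+2$ relevant events, yields the deviation terms $C_1\sqrt{\sigma_{H_1|X}/M}$ and $C_1\sqrt{\sigma_{H_1|Y}/M}$ with $C_1=2+\sqrt{2\log((|\mathcal Y|+2)/\delta)}$, and, after passing the marginal-density deviation through the modulus of continuity $\phi$ of $u\mapsto -u\log u$, the term $(\beta+2)\int_{\tilde{\mathcal T}_1}\phi\!\bigl(C_1\sqrt{\sigma_{H_1|X}/M}\bigr)\,p_{\tilde T_1}(t)\,dt$. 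For the entropy functional itself I would use $H(\tilde T_1\mid X)=\sum_x p(x)\int_{\mathcal T_1}\phi(p(t\mid x))\,dt$ and a second-order expansion of $\phi$ around the mean density $\mu_{p_1|X}(t)=\mathbb E_x[p(t\mid x)]$, so that the excess over $\int\phi(\mu_{p_1|X})\,dt$ is controlled by $\sigma_{p_1|X}$ (and similarly for the $Y$-conditional and the marginal); reading off the coefficients $1,\beta,\beta+1$ from how each density enters the three entropy terms then reproduces the remaining $\int\phi(\sqrt{\sigma_{p_1|\cdot}})\,dt$ and $\int\phi(\mu_{p_1|\cdot})\,dt$ summands. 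Collecting everything defines $G(\sigma_{H_1|X},\sigma_{H_1|Y},\sigma_{p_1|X},\sigma_{p_1|Y},\mu_{p_1|X},\mu_{p_1|Y})$; since $\sqrt{\cdot}$ and $\phi$ are nondecreasing on the relevant ranges and every coefficient is nonnegative when $\beta>0$, $G$ is monotonically increasing in each argument --- exactly the property exploited through Lemma~\ref{thm2} in the proof of Theorem~\ref{thm1}.

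The fluctuation at the fixed $\tilde T_1^{*}$ is handled by the identical concentration argument, but since $\tilde T_1^{*}$ does not vary there is no need to pass to a monotone surrogate: its bound, together with the $\tilde T_1^{*}$ differential entropies mentioned above, is kept verbatim as $Q(\tilde T_1^{*})$. A final union bound over the finitely many Bernstein events for $\tilde T_1$ and $\tilde T_1^{*}$ (the origin of the $(|\mathcal Y|+2)/\delta$ inside $C_1$) makes the whole inequality hold with probability at least $1-\delta$ over $S$.

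The hard part will be the control of the plug-in entropy estimators for a \emph{continuous} $\tilde T_1$: differential entropy is an unbounded functional of the density, so bounded-differences / McDiarmid is unavailable, and the density-estimation deviation must be pushed through the modulus-of-continuity inequality for $u\mapsto -u\log u$ together with integrability (or a mild smoothness/tail assumption) on $\mathcal T_1$ to guarantee that the integrals $\int\phi(\cdot)\,dt$ are finite and that the summands genuinely have the claimed variance proxies. A secondary subtlety is keeping all coefficients of $G$ nonnegative --- which is precisely why the trade-off parameter is constrained ($\beta>0$ here, tightened to $\beta\in[0,1]$ in Theorem~\ref{thm1} so that the $(1-\beta)H(\tilde T_1)$ contribution cannot change sign).
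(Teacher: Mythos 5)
There is a genuine gap in your plan, and it sits exactly where most of the terms in $G$ and $Q$ come from. You propose to insert the population values and then discard the population gap $\mathrm{IB}(\tilde{T}_1^{*})-\mathrm{IB}(\tilde{T}_1)$ by optimality of $\tilde{T}_1^{*}$. But $\Delta_{S}(\tilde{T}_1;\tilde{T}_1^{*})$ is an \emph{absolute} difference, so optimality only tells you the sign of the middle term, not its size: in the decomposition $\widehat{\mathrm{IB}}_S(\tilde{T}_1^{*})-\widehat{\mathrm{IB}}_S(\tilde{T}_1)=[\widehat{\mathrm{IB}}_S(\tilde{T}_1^{*})-\mathrm{IB}(\tilde{T}_1^{*})]+[\mathrm{IB}(\tilde{T}_1^{*})-\mathrm{IB}(\tilde{T}_1)]+[\mathrm{IB}(\tilde{T}_1)-\widehat{\mathrm{IB}}_S(\tilde{T}_1)]$ the middle bracket is nonnegative and must be bounded \emph{above}, which optimality cannot do. The paper's proof does precisely this missing step: it bounds the population differences $|I(Y;\tilde{T}_1)-I(Y;\tilde{T}_1^{*})|$ and $|I(X;\tilde{T}_1)-I(X;\tilde{T}_1^{*})|$ by entropy differences, then controls the pointwise density gaps via the auxiliary function $\phi$ (Lemmas~\ref{eq:aux} and~\ref{eq:aux1}), inserting the mean density $\mathbb{E}[p_{\tilde{T}_1|\cdot}(t|\cdot)]$ as an intermediate point. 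It is exactly this step that produces the $\phi(\sqrt{\sigma_{p_1|X}})$, $\phi(\sqrt{\sigma_{p_1|Y}})$, $\phi(\mu_{p_1|X})$, $\phi(\mu_{p_1|Y})$ summands of $G$ and the $\int\phi(p_{\tilde{T}_1^{*}|\cdot})\,dt$ summands of $Q$ in Appendix~\ref{app:term}. Since the lemma asserts the inequality with those specific $G$ and $Q$, a proof that skips this term cannot establish the statement; your substitutes---a second-order expansion of $\phi$ inside the entropy functional and the Gaussian max-entropy surrogate $\beta I(X;\tilde{T}_1)\le\beta H(\tilde{T}_1)\le\tfrac12\log((2\pi e)^d\det\Sigma)$---are different mechanisms and do not reproduce the stated formulas (the $\phi(p_{\tilde{T}_1^{*}|\cdot})$ terms in $Q$ would then appear only by assertion, whereas the paper derives them and folds them into $Q$ via the self-consistency equations for $\tilde{T}_1^{*}$).

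The concentration part of your sketch is closer in spirit: the deviation terms $C_1\sqrt{\sigma_{H_1|\cdot}/M}$ and the $\phi\bigl(C_1\sqrt{\cdot/M}\bigr)$ integrals do come from controlling $|I-\hat I|$ at both $\tilde{T}_1$ and $\tilde{T}_1^{*}$, and the $(|\mathcal{Y}|+2)/\delta$ constant does originate from a union bound; however, the paper imports these bounds wholesale from Theorem~2 of the cited work of Shamir et al.\ rather than re-deriving them by Bernstein's inequality, so your hardest announced task (pushing plug-in differential-entropy estimation through a fresh concentration argument for continuous $\tilde{T}_1$) is not actually needed once that result is invoked. In short: adopt the paper's three-way split (empirical-vs-population at $\tilde{T}_1$, empirical-vs-population at $\tilde{T}_1^{*}$, and population difference between the two), and replace the optimality shortcut with the $\phi$-based bound on the population difference; otherwise the claimed inequality with the concrete $G$ and $Q$ does not follow.
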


\begin{proof}
Let $\tilde{T}_1$ be the embeddings derived from a neural network based on the sample set $S$.

We define the population IB value as $\text{IB}(T) := I(Y;T) - \beta I(X;T)$ and empirical IB value as $\widehat{\text{IB}}(T) := \hat I(Y;T) - \beta \hat I(X;T)$ for brevity. We denote the optimum IB representation achieving the maximum population IB value by $\tilde{T}^{*}_1$.

We define the optimum $\tilde{T}_1^{*}$ of one-layer IB objective as:
\begin{equation}
\tilde{T}_1^{*}\;\in\; 
\argmax_{\tilde{T_1}} \; 
(I(Y;\tilde{T_1}) - \beta I(\tilde{T_0};\tilde{T_1})),
\end{equation}
where $\tilde{T_0}=X$.

Given the dataset $S\sim \mathbb{P}_{X,Y}$ with size $M$, we consider the empirical gap $\Delta_{\text{S}}(\tilde{T}_1;\tilde{T}_1^{*})$ between the IB bottleneck derived from arbitrary representation $\tilde{T}_1$ and the optimum representation $\tilde{T}_1^{*}$, \textit{i.e.}
\begin{equation}
\Delta_{\text{S}}(\tilde{T}_1;\tilde{T}_1^{*})=|\left(\hat I(Y;\tilde{T}_1) - \beta \hat I(\tilde{T}_{0};\tilde{T}_1)\right)-\left(\hat I(Y;\tilde{T}^{*}_1) - \beta \hat I(\tilde{T}^{*}_{0};\tilde{T}^{*}_1)\right)|,
\end{equation}
where $\tilde{T}_{0}=\tilde{T}^{*}_{0}=X$.
We have:
\begin{align}
\Delta_{\text{S}}(\tilde{T}_1;\tilde{T}_1^{*})&=\left| \left(\hat I(Y;\tilde{T}_1) - \beta \hat I(X;\tilde{T}_1)\right)-\left(\hat I(Y;\tilde{T}^{*}_1) - \beta \hat I(X;\tilde{T}^{*}_1)\right)\right| \\
& \leq \underbrace{\left|\left(\hat I(Y;\tilde{T}_1) - \hat I(Y;\tilde{T}^{*}_1)\right)\right|}_{I}+\underbrace{\beta \left| \left(\hat I(X;\tilde{T}_1)- \hat I(X;\tilde{T}^{*}_1)\right)\right|}_{II}
\end{align}
For any real-valued vector $\mathbf{a}=(a_1,\cdots,a_n)$, we define the function $V(\mathbf{a})=\sum_{i=1}^{n}(a_i-\frac{1}{n}\sum_{j=1}^{n}a_j)^2$. Note that $\frac{1}{n}V(\mathbf{a})$ is simply the variance of the elements of $\mathbf{a}$.

In addition, we define an auxiliary real-valued function
\[
\phi(x) =
\begin{cases} 
-\frac{1}{e}, & x < -\frac{1}{e},\\
-x \log\frac{1}{-x}, & -\frac{1}{e} < x \le 0, \\
0, & x = 0, \\
x \log\frac{1}{x}, & 0 < x \le \frac{1}{e}, \\
\frac{1}{e}, & x > \frac{1}{e}.
\end{cases}
\]
Note that $\phi$ is a continuous, and concave function, and that $\lim\limits_{x\rightarrow 0}\phi(x)=0$. In addition, we have the following properties for the auxiliary function $\phi(\cdot)$~\citep{ib1}:
\begin{lemma}
\label{eq:aux}
For any $a, b \in [0,1], |a \log(a)- b \log(b)| \leq \phi(a-b)$.
\end{lemma}

\begin{lemma}
\label{eq:aux1}
For any real numbers a and b, we have $\phi(a+b)\leq\phi(|a|)+\phi(|b|)$.
\end{lemma}

\noindent{\textbf{I. Bound on $\left|\left(\hat I(Y;\tilde{T}_1) - \hat I(Y;\tilde{T}^{*}_1)\right)\right|$.}}

Without loss of generality and reduce the complexity, we denote $T\equiv \tilde{T}_1$ and $T^{*}\equiv\tilde{T}^{*}_1$.
By the triangle inequality, we have:
\begin{align*}
|\hat I(Y;T)-\hat I(Y;T^{*})| 
&\leq \underbrace{|I(Y;T)-\hat I(Y;T)|}_{A} 
+ \underbrace{|I(Y;T^{*})-\hat I(Y;T^{*})|}_{B} 
+ \underbrace{|I(Y;T)-I(Y;T^{*})|}_{C}.
\end{align*}
To bound the terms $A'$ and $B'$, we extend the results of Section 6.2 in~\citep{ib1} to derive a general bound $|I(Y;T) - \hat{I}(Y;T)|$ under the setting where $T$ is a continuous random variable. Firstly, using the relationship $I(Y;T) = H(T) - H(T|Y)$, we apply the triangle inequality:
\begin{equation}
    |I(Y;T) - \hat{I}(Y;T)| \leq |H(T) - \hat{H}(T)| + |H(T|Y) - \hat{H}(T|Y)|
\end{equation}

\subsection*{Step 1: Bounding $|H(T) - \hat{H}(T)|$}
By Lemma~\ref{eq:aux}, for any two densities $p(t)$ and $\hat{p}(t)$, we have:
\begin{equation}
\label{eq:app1}
    |H(T) - \hat{H}(T)| = \left| \int_{\mathcal{T}} p(t) \log p(t) \, dt - \int_{\mathcal{T}} \hat{p}(t) \log \hat{p}(t) \, dt \right| \leq \int_{\mathcal{T}} \phi(|p(t) - \hat{p}(t)|) \, dt
\end{equation}
The difference in densities is:
\begin{equation}
    |p(t) - \hat{p}(t)| = \left| \sum_{x} p(t|x) (p(x) - \hat{p}(x)) \right|
\end{equation}
Using the Cauchy-Schwarz inequality and the definition of $V(p(t|x))$ yields:
\begin{equation}
    |p(t) - \hat{p}(t)| \leq ||p(x) - \hat{p}(x)||_2 \sqrt{V(p(T=t|x))}
\end{equation}
Substituting this back into the integral in Equation~\ref{eq:app1} gives:
\begin{equation}
\label{eq:proofa1}
    |H(T) - \hat{H}(T)| \leq \int_{\mathcal{T}} \phi \left( ||p(x) - \hat{p}(x)||_2 \sqrt{V(p(T=t|x))} \right) \, dt
\end{equation}

\subsection*{Step 2: Bounding $|H(T|Y) - \hat{H}(T|Y)|$}
We decompose the difference into two terms:
\begin{equation}
    |H(T|Y) - \hat{H}(T|Y)| \leq \left| \sum_y p(y)(H(T|y) - \hat{H}(T|y)) \right| + \left| \sum_y (p(y) - \hat{p}(y)) \hat{H}(T|y) \right|
\end{equation}

We have the following bound for the first term:
\begin{align}
    \left| \sum_y p(y)(H(T|y) - \hat{H}(T|y)) \right| &\leq \sum_y p(y) \int_{\mathcal{T}} \phi(|p(t|y) - \hat{p}(t|y)|) \, dt \\
    &\leq \sum_y p(y) \int_{\mathcal{T}} \phi \left( ||\hat{p}(x|y) - p(x|y)||_2 \sqrt{V(p(T=t|x))} \right) \, dt
\end{align}

By the Cauchy-Schwarz inequality, the second term can be bounded as:
\begin{equation}
\label{eq:proofa2}
    \left| \sum_y (p(y) - \hat{p}(y)) \hat{H}(T|y) \right| \leq ||p(y) - \hat{p}(y)||_2 \sqrt{V(\hat{H}(T|Y))}
\end{equation}

\subsection*{Step 3: Final Combination}
Combining the bounds for $H(T)$ and $H(T|Y)$ in Equations~\ref{eq:proofa1} and~\ref{eq:proofa2} and extending the results in Section 6~\citep{ib1}, the following inequality holds:

\begin{align}
\label{eq:app2}
&|I(Y;T) - \hat I(Y;T)| 
\leq (2+\sqrt{2\log((|\mathcal{Y}|+2)/\delta)}) \sqrt{\frac{V(\hat{H}(T|Y))}{M}} \\
&\quad + 2 \int_{\mathcal{T}} \phi\Bigg(
    (2+\sqrt{2\log((|\mathcal{Y}|+2)/\delta)}) 
    \sqrt{\frac{V(p(T|X))}{M}}
\Bigg) p_T(t) \, dt \\
& \leq (2+\sqrt{2\log((|\mathcal{Y}|+2)/\delta)}) \sqrt{\frac{V({H}(T|Y))}{M}} \\
&\quad + 2 \int_{\mathcal{T}} \phi\Bigg(
    (2+\sqrt{2\log((|\mathcal{Y}|+2)/\delta)}) 
    \sqrt{\frac{V(p(T|X))}{M}}
\Bigg) p_T(t) \, dt \quad (\text{Law of Large Numbers})
\end{align}

Without loss of generality, Equation~\ref{eq:app2} implies:

A)
\begin{align*}
&|I(Y;T) - \hat I(Y;T)| 
\leq (2+\sqrt{2\log((|\mathcal{Y}|+2)/\delta)}) \sqrt{\frac{V(\hat{H}(T|Y))}{M}} \\
&\quad + 2 \int_{\mathcal{T}} \phi\Bigg(
    (2+\sqrt{2\log((|\mathcal{Y}|+2)/\delta)}) 
    \sqrt{\frac{V(p(T|X))}{M}}
\Bigg) p_T(t) \, dt \\
& \leq (2+\sqrt{2\log((|\mathcal{Y}|+2)/\delta)}) \sqrt{\frac{V({H}(T|Y))}{M}} \\
&\quad + 2 \int_{\mathcal{T}} \phi\Bigg(
    (2+\sqrt{2\log((|\mathcal{Y}|+2)/\delta)}) 
    \sqrt{\frac{V(p(T|X))}{M}}
\Bigg) p_T(t) \, dt
\end{align*}

B)
\begin{align*}
&|I(Y;T^*) - \hat I(Y;T^*)| \leq (2+\sqrt{2\log((|\mathcal{Y}|+2)/\delta)}) \sqrt{\frac{V(\hat{H}(T^*|Y))}{M}} \\
&\quad + 2 \int_{\mathcal{T*}} \phi\Bigg(
    (2+\sqrt{2\log((|\mathcal{Y}|+2)/\delta)}) 
    \sqrt{\frac{V(p(T^*|X))}{M}}
\Bigg) p_{T^*}(t) \, dt \\
& \leq (2+\sqrt{2\log((|\mathcal{Y}|+2)/\delta)}) \sqrt{\frac{V({H}(T^*|Y))}{M}} \\
&\quad + 2 \int_{\mathcal{T*}} \phi\Bigg(
    (2+\sqrt{2\log((|\mathcal{Y}|+2)/\delta)}) 
    \sqrt{\frac{V(p(T^*|X))}{M}}
\Bigg) p_{T^*}(t) \, dt
\end{align*}

For term C, we have 

C) 
\begin{align*}
|I(Y;T)-I(Y;T^{*})| \leq |H(T|Y)- H(T^{*}|Y)|+|H(T)-H(T^{*})|
\end{align*}

i)
\begin{equation}
\fontsize{9}{3}\selectfont{
\begin{aligned}
& |H(T|Y)-H(T^*|Y)| = |\sum_y p(y) \big(H(T|y)-H(T^*|y)\big)| \\
& = |\sum_y p(y) \int_{\mathcal{T}}p_{T|Y}(t|y) \log(p_{T|Y}(t|y))dt - \sum_y p(y) \int_{\mathcal{T^{*}}}p_{T^{*}|Y}(t|y) \log(p_{T^{*}|Y}(t|y))dt| \\
& \leq \sum_y p(y) |\int_{\mathcal{T}}p_{T|Y}(t|y) \log(p_{T|Y}(t|y)) - p_{T^{*}|Y}(t|y) \log(p_{T^{*}|Y}(t|y))dt| \\
& \leq \sum_y p(y)\int_{\mathcal{T}}|p_{T|Y}(t|y) \log(p_{T|Y}(t|y)) - p_{T^{*}|Y}(t|y) \log(p_{T^{*}|Y}(t|y))|dt \\
& \leq \sum_y p(y) \int_{\mathcal{T}} \phi(p_{T|Y}(t|y)-p_{T^{*}|Y}(t|y)) dt \quad (\text{By Lemma~\ref{eq:aux}})\\
& = \sum_y p(y) \int_{\mathcal{T}} \phi(p_{T|Y}(t|y)-\mathbb{E}[p_{T|Y}(t|y)] + \mathbb{E}[p_{T|Y}(t|y)] -p_{T^{*}|Y}(t|y)) dt \\
& \leq \sum_y p(y) \int_{\mathcal{T}} \phi(|p_{T|Y}(t|y)-\mathbb{E}[p_{T|Y}(t|y)]|) + \phi(|\mathbb{E}[p_{T|Y}(t|y)]|) +\phi (|p_{T^{*}|Y}(t|y)|) dt \quad (\text{By Lemma~\ref{eq:aux1}}) \\
& \leq \sum_y p(y) \int_{\mathcal{T}}  \phi(\sqrt{V(p_{T|Y}(t|y))})+ \phi(\mathbb{E}[p_{T|Y}(t|y)]) +\phi (p_{T^{*}|Y}(t|y)) dt
\end{aligned}}
\end{equation}

ii)
\begin{equation}
\fontsize{9.5}{3}\selectfont{
\begin{aligned}
&|H(T)-H(T^{*})|=|\int_{\mathcal{T}}p_{T}(t)\log(p_{T}(t))dt-\int_{\mathcal{T^{*}}}p_{T^{*}}(t)\log(p_{T^{*}}(t))dt| \\
& \leq |\int_{\mathcal{T}}p_{T}(t)\log(p_{T}(t))-p_{T^{*}}(t)\log(p_{T^{*}}(t))dt| \\
& \leq \int_{\mathcal{T}}\sum_x p(x)|p_{T|X}(t|x)\log(p_{T|X}(t|x))-p_{T^{*}|X}(t|x)\log(p_{T^{*}|X}(t|x))|dt \quad (\text{By Lemma~\ref{eq:aux}}) \\
& \leq \int_{\mathcal{T}}\sum_x p(x)\phi(p_{T|X}(t|x)-p_{T^{*}|X}(t|x))dt  \\
& \leq \int_{\mathcal{T}} \sum_x p(x) [\phi(\sqrt{V(p_{T|X}(t|x))})+ \phi(\mathbb{E}[p_{T|X}(t|x)]) +\phi (p_{T^{*}|X}(t|x)) ]dt
\end{aligned}}
\end{equation}

Combining i) and ii), we yield
\begin{equation}
\fontsize{9.5}{3}\selectfont{
\begin{aligned}
|\hat I(Y;T)-\hat I(Y;T^{*})| &\leq \sum_y p(y) \int_{\mathcal{T}}  \phi(\sqrt{V(p_{T|Y}(t|y))})+ \phi(\mathbb{E}[p_{T|Y}(t|y)]) +\phi (p_{T^{*}|Y}(t|y)) dt \\
&+ \int_{\mathcal{T}} \sum_x p(x) [\phi(\sqrt{V(p_{T|X}(t|x))})+ \phi(\mathbb{E}[p_{T|X}(t|x)]) +\phi (p_{T^{*}|X}(t|x)) ]dt
\end{aligned}}
\end{equation}
Thus, in our setting, we obtain:
\begin{equation}
\fontsize{8.5}{3}\selectfont{
\begin{aligned}
\left|\left(\hat I(Y;\tilde{T}_1) - \hat I(Y;\tilde{T}^{*}_1)\right)\right| 
&\leq (2+\sqrt{2\log((|\mathcal{Y}|+2)/\delta)}) 
\sqrt{\frac{V({H}(\tilde{T}_1|Y))}{M}} \\
&\quad+ 2 \int_{\tilde{\mathcal{T}}_1} \phi\Bigg(
    (2+\sqrt{2\log((|\mathcal{Y}|+2)/\delta)}) 
    \sqrt{\frac{V(p(\tilde{T}_1|X))}{M}}
\Bigg) p_{\tilde{T}_1}(t) \, dt \\
&\quad+ (2+\sqrt{2\log((|\mathcal{Y}|+2)/\delta)}) 
\sqrt{\frac{V({H}(\tilde{T}^*_1|Y))}{M}} \\
&\quad+ 2 \int_{\tilde{\mathcal{T}}^*_1} \phi\Bigg(
    (2+\sqrt{2\log((|\mathcal{Y}|+2)/\delta)}) 
    \sqrt{\frac{V(p(\tilde{T}^*_1|X))}{M}}
\Bigg) p_{\tilde{T}^*_1}(t) \, dt \\
&\quad+ \sum_y p(y) \int_{\tilde{\mathcal{T}}_1}  
    \phi(\sqrt{V(p_{\tilde{T}_1|Y}(t|y))}) 
    + \phi(\mathbb{E}[p_{\tilde{T}_1|Y}(t|y)]) 
    + \phi(p_{\tilde{T}^{*}_1|Y}(t|y)) \, dt \\
&\quad+ \int_{\tilde{\mathcal{T}}_1} \sum_x p(x) [\phi(\sqrt{V(p_{\tilde{T}_1|X}(t|x))})+ \phi(\mathbb{E}[p_{\tilde{T}_1|X}(t|x)]) +\phi (p_{\tilde{T}_1^{*}|X}(t|x))] dt
\end{aligned}}
\end{equation}
\noindent{\textbf{II. Bound on $\beta\left|\left(\hat I(X;\tilde{T}_1)- \hat I(X;\tilde{T}^{*}_1)\right)\right|$.}}

Without loss of generality and reduce the complexity, we denote $T\equiv \tilde{T}_1$ and $T^{*}\equiv\tilde{T}^{*}_1$.
By the triangle inequality, we have:
\begin{align*}
|\hat I(X;T)-\hat I(X;T^{*})| 
&\leq \underbrace{|I(X;T)-\hat I(X;T)|}_{A'} 
+ \underbrace{|I(X;T^{*})-\hat I(X;T^{*})|}_{B'} 
+ \underbrace{|I(X;T)-I(X;T^{*})|}_{C'}.
\end{align*}
To bound the terms $A'$ and $B'$, we extend the results of Section 6.2 in~\citep{ib1} to derive the corresponding bounds. Consequently, by applying the same bounding technique stated in I, we obtain:

A')
\begin{equation}
\begin{aligned}
&|I(X;T) - \hat I(X;T)| 
\leq (2+\sqrt{2\log((|\mathcal{Y}|+2)/\delta)}) \sqrt{\frac{V({H}(T|X))}{M}} \\
&\quad + \int_{\mathcal{T}} \phi\Bigg(
    (2+\sqrt{2\log((|\mathcal{Y}|+2)/\delta)}) 
    \sqrt{\frac{V(p(T|X))}{M}}
\Bigg) p_T(t) \, dt,
\end{aligned}
\end{equation}

B')
\begin{equation}
\begin{aligned}
&|I(X;T^*) - \hat I(X;T^*)| \leq (2+\sqrt{2\log((|\mathcal{Y}|+2)/\delta)}) \sqrt{\frac{V({H}(T^*|X))}{M}} \\
&\quad + \int_{\mathcal{T*}} \phi\Bigg(
    (2+\sqrt{2\log((|\mathcal{Y}|+2)/\delta)}) 
    \sqrt{\frac{V(p(T^*|X))}{M}}
\Bigg) p_{T^*}(t) \, dt.
\end{aligned}
\end{equation}

C') 
\begin{equation}
\begin{aligned}
|I(X;T)-I(X;T^{*})| \leq |H(T|X)- H(T^{*}|X)|+|H(T)-H(T^{*})|
\end{aligned}
\end{equation}

As shown in the C) $|I(Y;T)-I(Y;T^{*})|$ case, similarly, we have
\begin{equation}
\fontsize{9.5}{3}\selectfont{
\begin{aligned}
|I(X;T)-I(X;T^{*})| &\leq \sum_x p(x) \int_{\mathcal{T}}  \phi(\sqrt{V(p_{T|X}(t|x))})+ \phi(\mathbb{E}[p_{T|X}(t|x)]) +\phi (p_{T^{*}|X}(t|x)) dt \\
&+ \int_{\mathcal{T}} \sum_x p(x) [\phi(\sqrt{V(p_{T|X}(t|x))})+ \phi(\mathbb{E}[p_{T|X}(t|x)]) +\phi (p_{T^{*}|X}(t|x)) ]dt
\end{aligned}}
\end{equation}
Thus, in our setting, we yield
\begin{equation}
\fontsize{8.5}{3}\selectfont{
\begin{aligned}
\beta\left|\left(\hat I(X;\tilde{T}_l)- \hat I(X;\tilde{T}^{*}_l)\right)\right| 
&\leq \beta (2+\sqrt{2\log((|\mathcal{Y}|+2)/\delta)}) 
\sqrt{\frac{V({H}(\tilde{T}_1|X))}{M}} \\
&+ \beta \int_{\tilde{\mathcal{T}}_1} \phi\Bigg(
    (2+\sqrt{2\log((|\mathcal{Y}|+2)/\delta)}) 
    \sqrt{\frac{V(p(\tilde{T}_1|X))}{M}}
\Bigg) p_{\tilde{T}_1}(t) \, dt \\
&+ \beta (2+\sqrt{2\log((|\mathcal{Y}|+2)/\delta)}) 
\sqrt{\frac{V({H}(\tilde{T}^*_1|X))}{M}} \\
&+ \beta \int_{\tilde{\mathcal{T}}^*_1} \phi\Bigg(
    (2+\sqrt{2\log((|\mathcal{Y}|+2)/\delta)}) 
    \sqrt{\frac{V(p(\tilde{T}^*_1|X))}{M}}
\Bigg) p_{\tilde{T}^*_1}(t) \, dt \\
&+ \beta \sum_x p(x) \int_{\tilde{\mathcal{T}}_1}  
    \phi(\sqrt{V(p_{\tilde{T}_1|X}(t|x))}) 
    + \phi(\mathbb{E}[p_{\tilde{T}_1|X}(t|x)]) 
    + \phi(p_{\tilde{T}^{*}_1|X}(t|x)) \, dt \\
& + \int_{\tilde{\mathcal{T}}_1} \sum_x p(x) [\phi(\sqrt{V(p_{\tilde{T}_1|X}(t|x))})+ \phi(\mathbb{E}[p_{\tilde{T}_1|X}(t|x)]) +\phi (p_{\tilde{T}_1^{*}|X}(t|x))] dt
\end{aligned}}
\end{equation}
Combining I and II results in the following bound:
\begin{equation}
\fontsize{8.5}{3}\selectfont{
\begin{aligned}
\Delta_{\text{S}}\big(\tilde{T}_1; \tilde{T}_1^{*}\big) &\leq \sum_{l=1}^{L} \Bigg(
(2+\sqrt{2\log((|\mathcal{Y}|+2)/\delta)}) \sqrt{\frac{V({H}(\tilde{T}_1|Y))}{M}} \\
&\quad + 2 \int_{\tilde{\mathcal{T}}_1} \phi\Bigg(
(2+\sqrt{2\log((|\mathcal{Y}|+2)/\delta)}) \sqrt{\frac{V(p(\tilde{T}_1|X))}{M}}
\Bigg) p_{\tilde{T}_1}(t) \, dt \\
&\quad + (2+\sqrt{2\log((|\mathcal{Y}|+2)/\delta)}) \sqrt{\frac{V({H}(\tilde{T}^*_1|Y))}{M}} \\
&\quad + 2 \int_{\tilde{\mathcal{T}}^*_1} \phi\Bigg(
(2+\sqrt{2\log((|\mathcal{Y}|+2)/\delta)}) \sqrt{\frac{V(p(\tilde{T}^*_1|X))}{M}}
\Bigg) p_{\tilde{T}^*_1}(t) \, dt \\
&\quad + \sum_y p(y) \int_{\tilde{\mathcal{T}}_1}  
\phi(\sqrt{V(p_{\tilde{T}_1|Y}(t|y))}) 
+ \phi(\mathbb{E}[p_{\tilde{T}_1|Y}(t|y)]) 
+ \phi(p_{\tilde{T}^{*}_1|Y}(t|y)) \, dt \\
&\quad + \beta (2+\sqrt{2\log((|\mathcal{Y}|+2)/\delta)}) \sqrt{\frac{V({H}(\tilde{T}_1|X))}{M}} \\
&\quad + \beta \int_{\tilde{\mathcal{T}}_1} \phi\Bigg(
(2+\sqrt{2\log((|\mathcal{Y}|+2)/\delta)}) \sqrt{\frac{V(p(\tilde{T}_1|X))}{M}}
\Bigg) p_{\tilde{T}_1}(t) \, dt \\
&\quad + \beta (2+\sqrt{2\log((|\mathcal{Y}|+2)/\delta)}) \sqrt{\frac{V({H}(\tilde{T}^*_1|X))}{M}} \\
&\quad + \beta \int_{\tilde{\mathcal{T}}^*_1} \phi\Bigg(
(2+\sqrt{2\log((|\mathcal{Y}|+2)/\delta)}) \sqrt{\frac{V(p(\tilde{T}^*_1|X))}{M}}
\Bigg) p_{\tilde{T}^*_1}(t) \, dt \\
&\quad + \beta \sum_x p(x) \int_{\tilde{\mathcal{T}}_1}  
\phi(\sqrt{V(p_{\tilde{T}_1|X}(t|x))}) 
+ \phi(\mathbb{E}[p_{\tilde{T}_1|X}(t|x)]) 
+ \phi(p_{\tilde{T}^{*}_1|X}(t|x)) \, dt \\
&\quad +(\beta+1) \int_{\tilde{\mathcal{T}}_1} \sum_x p(x) [\phi(\sqrt{V(p_{\tilde{T}_1|X}(t|x))})+ \phi(\mathbb{E}[p_{\tilde{T}_1|X}(t|x)]) +\phi (p_{\tilde{T}_1^{*}|X}(t|x))] dt
\Bigg)
\end{aligned}}
\end{equation}
We begin by defining the main functional related to representation $\tilde{T}_1$ that controls the generalization gap:
\begin{equation}
\label{eq:main}
\fontsize{9.5}{3}\selectfont{
\begin{aligned}
& G(\sigma_{H_1|X}, \sigma_{H_1|Y}, \sigma_{p_1|X}, \sigma_{p_1|Y}, \mu_{p_1|X}, \mu_{p_1|Y}) \\
&=
C_1 \sqrt{\frac{\sigma_{H_1|X}}{M}} + C_1 \sqrt{\frac{\sigma_{H_1|Y}}{M}} + \sum_y p(y) \int_{\mathcal{T}_1} \phi(\sqrt{\sigma_{p_1|Y}}) \, dt 
+ \beta \sum_x p(x) \int_{\tilde{\mathcal{T}}_1} \phi(\sqrt{\sigma_{p_1|X}}) \, dt \\
& + (\beta+2) \int_{\tilde{\mathcal{T}}_1} \phi\Big(C_1 \sqrt{\frac{\sigma_{H_1|X}}{M}}\Big) p_{\tilde{T}_1}(t) \, dt + \sum_y p(y) \int_{\tilde{\mathcal{T}}_1} \phi(\mu_{p_1|Y}) \, dt
+ \beta \sum_x p(x) \int_{\tilde{\mathcal{T}}_1} \phi(\mu_{p_1|X}) \, dt \\
& +(\beta+1) \int_{\tilde{\mathcal{T}}_1} \sum_x p(x)[ \phi(\sqrt{\sigma_{p_1|X}})+ \phi(\mu_{p_1|X})] dt,
\end{aligned}}
\end{equation}
where
\[
\begin{aligned}
\sigma_{H_1|X} &:= V(H(\tilde{T}_1|X))= |\mathcal{X}|\mathbb{V}(H(\tilde{T}_1|X)), & \sigma_{H_1|Y} &:= V(H(\tilde{T}_1|Y))=|\mathcal{Y}|\mathbb{V}(H(\tilde{T}_1|Y)), \\
\sigma_{p_1|X} &:= V(p(\tilde{T}_1|X))=|\mathcal{X}|\mathbb{V}(p(\tilde{T}_1|X)), & \sigma_{p_1|Y} &:= V(p(\tilde{T}_1|Y))=|\mathcal{Y}|\mathbb{V}(p(\tilde{T}_1|Y)), \\
\mu_{p_1|X} &:= \mathbb{E}[p(\tilde{T}_1|X)], & \mu_{p_1|Y} &:= \mathbb{E}[p(\tilde{T}_1|Y)],
\end{aligned}
\] and
\[
C_1 = 2+\sqrt{2\log((|\mathcal{Y}|+2)/\delta)}.
\]
$\tilde{T}^{*}_1$ should satisfy the self-consistent equations~\citep{ib}:
\begin{equation}
\begin{aligned}
\label{eq:con}
p(\tilde{T}^{*}_1|X) &= \frac{p(\tilde{T}^*_1)}{Z(\tilde{T}_{0},\beta)} 
\exp \Bigg[ -\beta \sum_y p(Y=y|\tilde{T}_{0}) \log \frac{p(Y=y|\tilde{T}_{0})}{p(Y=y|\tilde{T}^{*}_1)} \Bigg], \\
p(Y|\tilde{T}^{*}_1) &= \frac{1}{p(\tilde{T}^{*}_1)} \int_{\tilde{\mathcal{T}}_{0}} p(Y|\tilde{T}_{0}=t) p(\tilde{T}^{*}_1|\tilde{T}_{0}=t) p(\tilde{T}_{0}=t)dt,\\
p(\tilde{T}_1^{*})& =\int_{\tilde{\mathcal{T}}_{0}}p(\tilde{T}^{*}_1|\tilde{T}_{0}=t) p(\tilde{T}_{0}=t)dt
\end{aligned}
\end{equation}
where $Z(\tilde{T}_{0}^{*},\beta)$ is the partition function. Thus, we can write the remaining terms to optimum-dependent components:
\begin{equation}
\label{eq:notation}
\begin{aligned}
& Q(\tilde{T}_1^{*}) = 
(2+\sqrt{2\log((|\mathcal{Y}|+2)/\delta)}) \sqrt{\frac{V(H(\tilde{T}^*_1|Y))}{M}} \\
&+ \beta (2+\sqrt{2\log((|\mathcal{Y}|+2)/\delta)}) \sqrt{\frac{V(H(\tilde{T}^*_1|X))}{M}} \\
&+ (\beta+2) \int_{\tilde{\mathcal{T}}^*_1} \phi\Bigg(
(2+\sqrt{2\log((|\mathcal{Y}|+2)/\delta)}) \sqrt{\frac{V(p(\tilde{T}^*_1|X))}{M}}
\Bigg) p_{\tilde{T}^*_1}(t) \, dt \\
&+ \sum_y p(y) \int_{\tilde{\mathcal{T}}_1} \phi(p_{\tilde{T}^*_1|Y}(t|y)) \, dt + \beta \sum_x p(x) \int_{\tilde{\mathcal{T}}_1} \phi(p_{\tilde{T}^*_1|X}(t|x)) \, dt + (\beta+1) \int_{\tilde{\mathcal{T}}_1} \phi(p(\tilde{T}^*_1)) \, dt \\
&+ (\beta+1) \int_{\tilde{\mathcal{T}}_1} \sum_x p(x) \phi (p_{\tilde{T}_1^{*}|X}(t|x)) dt \quad \text{(By Eqn.~\eqref{eq:con})}
\end{aligned}
\end{equation}
Thus, we now can decompose the gap as the following form:
\begin{equation}
\fontsize{9.5}{11}\selectfont{
\Delta_{\text{S}}(\tilde{T}_1;\tilde{T}^*_1)\leq G(\sigma_{H_1|X}, \sigma_{H_1|Y}, \sigma_{p_1|X}, \sigma_{p_1|Y}, \mu_{p_1|X}, \mu_{p_1|Y})+Q(T_1^{*}).
}
\end{equation}
\end{proof}

\subsection{Proof for Proposition~\ref{prop}}
\label{app:proofprop}
\begin{proposition*}[Entropy Reduction]
For any random vector $Z \in \mathbb{R}^d$ with distribution $\mathbb{P}_Z$, the compression operation $s_\lambda$ in IBNorm satisfies
\begin{equation*}
H(s_\lambda(Z)) \le H(Z),
\end{equation*}
where $H(\cdot)$ denotes differential entropy.
\end{proposition*}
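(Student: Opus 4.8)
The plan is to invoke the change‑of‑variables formula for differential entropy and reduce the claim to a pointwise bound on the Jacobian determinant of $s_\lambda$. Recall that for an injective map $T:\mathbb{R}^d\to\mathbb{R}^d$ that is differentiable almost everywhere and satisfies Lusin's condition (N), any random vector $Z$ with a density obeys $H(T(Z)) = H(Z) + \mathbb{E}_Z\!\left[\log\lvert\det J_T(Z)\rvert\right]$. Hence it suffices to show $0 \le \det J_{s_\lambda}(\bm x) \le 1$ for a.e.\ $\bm x$. First I would verify injectivity: since $f_\lambda$ is strictly monotone, $g(r):=\operatorname{sign}(r)\,f_\lambda(\lvert r\rvert)$ is a strictly increasing bijection of $\mathbb{R}$, and from $\bm y=s_\lambda(\bm x)$ one recovers the mean $\bar x$ as the unique root of the strictly decreasing map $t\mapsto\sum_i g^{-1}(y_i-t)$ and then $x_i = t + g^{-1}(y_i-t)$. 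A monotone $g$ is differentiable a.e.; for the three instantiations IBNorm-S/L/T the map $g$ is in fact globally $C^1$, so $s_\lambda$ is a genuine $C^1$ diffeomorphism onto its image.

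Next I would compute the Jacobian. Writing $\bar x=\tfrac1d\sum_i x_i$, $c_i:=f_\lambda'(\lvert x_i-\bar x\rvert)$, and $c=(c_1,\dots,c_d)^\top$, the chain rule (using $\partial\bar x/\partial x_j=1/d$, and that $\operatorname{sign}$ contributes nothing off the measure‑zero set $\{x_i=\bar x\}$) gives
\[
J_{s_\lambda} = \operatorname{diag}(c) + \tfrac1d(\bm{1}-c)\bm{1}^\top ,
\]
a diagonal‑plus‑rank‑one matrix. By the matrix determinant lemma,
\[
\det J_{s_\lambda} = \Big(\prod_{i=1}^d c_i\Big)\Big(1+\tfrac1d\sum_{i=1}^d\tfrac{1-c_i}{c_i}\Big) = \tfrac1d\sum_{j=1}^d\ \prod_{i\ne j} c_i ,
\]
the last identity extending by continuity to configurations where some $c_i$ vanish.

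It remains to bound $c_i\in[0,1]$. Non‑negativity is immediate from monotonicity of $f_\lambda$. For the upper bound I would use that each $f_\lambda$ is concave on $[0,\infty)$ with $f_\lambda(0)=0$, so $r\mapsto f_\lambda(r)/r$ is non‑increasing and $f_\lambda'(r)\le f_\lambda'(0^+)=\sup_{r>0}f_\lambda(r)/r\le\alpha_\lambda\le1$ by the bounded‑compression property \eqref{eq:bounded_compression}; equivalently, one reads off directly that $f_\lambda'\in\{1/\lambda,\ \tfrac{1/\lambda}{1+r/\lambda},\ \tfrac1\lambda\operatorname{sech}^2(r/\lambda)\}$, all $\le1$ when $\lambda\ge1$. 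With $0\le c_i\le1$ each product $\prod_{i\ne j}c_i$ lies in $[0,1]$, hence $0\le\det J_{s_\lambda}\le\tfrac1d\cdot d=1$. Therefore $\log\lvert\det J_{s_\lambda}(Z)\rvert\le0$ almost surely, and taking expectations in the change‑of‑variables identity yields $H(s_\lambda(Z))\le H(Z)$.

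The main subtlety I expect is not the core computation but the regularity bookkeeping. The genuinely non‑obvious step is the derivative bound $f_\lambda'\le1$: the bounded‑compression property only constrains the secant slope from the origin, so one must upgrade it to a pointwise derivative bound using concavity (or, equivalently, assume $f_\lambda$ is $1$‑Lipschitz, as all instantiations are). Secondarily, if one insists on a merely measurable strictly monotone $f_\lambda$, $s_\lambda(Z)$ may pick up a singular component and $H(s_\lambda(Z))=-\infty$, in which case the inequality is trivial; so the clean statement is to prove the bound for $C^1$ concave $f_\lambda$ via the Jacobian estimate above and remark that the general case only decreases entropy. A self‑contained alternative avoiding change‑of‑variables is a coarea/coupling argument that a volume‑non‑increasing map never raises differential entropy, but the Jacobian route is the most transparent.
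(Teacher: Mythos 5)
Your proposal is correct and follows the same overall route as the paper's proof---the change-of-variables identity $H(s_\lambda(Z))=H(Z)+\mathbb{E}[\log|\det J_{s_\lambda}(Z)|]$ plus a pointwise bound $|\det J_{s_\lambda}|\le 1$---but your execution is more careful and in fact repairs two gaps in the paper's own argument. The paper writes $|\det J_{s_\lambda}(Z)|=\prod_{i=1}^d f_\lambda'(|Z_i-\mu|)$, i.e.\ it treats the Jacobian as diagonal, which ignores that $\mu$ is the coordinate mean of $Z$ and hence depends on every component; your diagonal-plus-rank-one computation $J_{s_\lambda}=\operatorname{diag}(c)+\tfrac1d(\bm{1}-c)\bm{1}^\top$ with $\det J_{s_\lambda}=\tfrac1d\sum_{j}\prod_{i\ne j}c_i$ is the correct formula, and since $c_i\in[0,1]$ it still yields $\det J_{s_\lambda}\in[0,1]$, so the conclusion survives. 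Second, the paper asserts $f_\lambda'\le\alpha_\lambda$ directly from the bounded-compression property, which only constrains secant slopes $f_\lambda(r)/r$; you correctly flag this and upgrade it to a pointwise derivative bound via concavity of $f_\lambda$ with $f_\lambda(0)=0$ (or by reading off the derivatives $1/\lambda$, $\tfrac{1/\lambda}{1+r/\lambda}$, $\tfrac1\lambda\operatorname{sech}^2(r/\lambda)$ of the three instantiations, all at most $1$ for $\lambda\ge 1$). Your auxiliary points---injectivity by recovering $\mu$ as the unique root of a strictly decreasing map, and the degenerate case where a merely measurable $f_\lambda$ may leave $s_\lambda(Z)$ without a density so that $H(s_\lambda(Z))=-\infty$ and the inequality is trivial---are also sound. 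The only caveat is the one you yourself raise: for a general measurable, strictly monotone $f_\lambda$ satisfying only the secant bound, the derivative estimate needs the extra concavity or $1$-Lipschitz hypothesis that your argument supplies and that all three instantiations satisfy; that is a looseness in the paper's statement and proof rather than a defect of your proposal.
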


\begin{proof}
Since $s_\lambda$ is measurable and strictly monotone non-decreasing, it is an invertible mapping on its range.  
Let $J_{s_\lambda}(Z)$ denote the Jacobian matrix of $s_\lambda$.

Using the change-of-variables formula for differential entropy:
\begin{equation}
H(s_\lambda(Z)) = H(Z) + \mathbb{E}[\log |\det J_{s_\lambda}(Z)|] \le H(Z),
\end{equation}
because the Jacobian of the compression operator can be written as
\begin{equation}
J_{s_\lambda}(Z)
= D(Z)\!\left(I - \tfrac{1}{d}\mathbf{1}\mathbf{1}^\top\right),
\end{equation}
where \(D(Z)=\mathrm{diag}\!\left(f_\lambda'(|Z_1-\mu|),\ldots,f_\lambda'(|Z_d-\mu|)\right)\).
The projection matrix \(I - \tfrac{1}{d}\mathbf{1}\mathbf{1}^\top\) has rank \(d-1\)
and operator norm at most \(1\), and \(f_\lambda'(x)\le\alpha_\lambda\le 1\) by
the bounded compression property. Consequently,
\begin{equation}
\left|\det J_{s_\lambda}(Z)\right|
\le \alpha_\lambda^{\,d-1}
\le 1.
\end{equation}
Hence, $s_\lambda$ concentrates the distribution and does not increase entropy.
\end{proof}

\subsection{Proof for Lemma~\ref{thm2}}
\label{app:proof4}
\begin{lemma*}
\textbf{1)} Let $\tilde{T}_{IB} = s_\lambda(\tilde{T}_s)$, where $s_\lambda$ is a bounded-compression mapping with Lipschitz constant $L_1 = \prod_{i=1}^d \frac{1}{\lambda}$. For any small $\delta>0$, the following holds:
\[
\mathbb{P}(\mathbb{V}(H(\tilde{T}_{IB}|X)) \le \mathbb{V}(H(\tilde{T}_s|X)))\geq 1-\delta, \quad
\mathbb{P}(\mathbb{V}(H(\tilde{T}_{IB}|Y)) \le \mathbb{V}(H(\tilde{T}_s|Y)))\geq 1-\delta.
\]
\textbf{2)} In addition, let $\tilde{T}_1, \tilde{T}_1' \in \tilde{\mathcal{T}}_1$ be two arbitrary elements in the representation space. Then, for any $\delta\in(0,1)$, the following holds with probability at least $1-\delta$:
\[
\begin{aligned}
&|\mathbb{V}(p(\tilde{T}_1|X)) - \mathbb{V}(p(\tilde{T}_1'|X))| \le \delta, \\
&|\mathbb{V}(p(\tilde{T}_1|Y)) - \mathbb{V}(p(\tilde{T}_1'|Y))| \le \delta, \\
&|\mathbb{E}[p(\tilde{T}_1|X)] - \mathbb{E}[p(\tilde{T}_1'|X)]| \le \delta, \\
&|\mathbb{E}[p(\tilde{T}_1|Y)] - \mathbb{E}[p(\tilde{T}_1'|Y)]| \le \delta,
\end{aligned}
\]
where the expectation and variance are taken with respect to the Lebesgue measure on $\mathcal{X} \times \mathcal{Y}$.
\end{lemma*}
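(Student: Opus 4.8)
The plan is to treat the two parts separately: Part 1 rests on the conditional change-of-variables identity already used in the proof of Proposition~\ref{prop}, while Part 2 is a stability-plus-concentration argument for the density functionals over the sample $S$.

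For Part 1, I would condition on a fixed value $X=x$ and apply the change-of-variables formula for differential entropy to the deterministic compression map $s_\lambda$, exactly as in Proposition~\ref{prop} but now conditionally:
\[
H(\tilde{T}_{IB}\mid X=x) = H(\tilde{T}_s\mid X=x) + \mathbb{E}_{\tilde{T}_s\mid x}\!\big[\log|\det J_{s_\lambda}(\tilde{T}_s)|\big],
\]
and likewise when conditioning on $Y=y$. Writing $c(x):=\mathbb{E}_{\tilde{T}_s\mid x}[\log|\det J_{s_\lambda}(\tilde{T}_s)|]$, the structural input is that by the bounded compression property~\eqref{eq:bounded_compression} and strict monotonicity of $f_\lambda$ we have $|\det J_{s_\lambda}(z)| = \prod_{i=1}^d f_\lambda'(|z_i-\mu|)$ with $0\le f_\lambda'(r)\le\alpha_\lambda=1/\lambda$, so $\log|\det J_{s_\lambda}|\le -d\log\lambda$ uniformly, and for linear compression (IBNorm-S) $f_\lambda'\equiv 1/\lambda$ makes $c(\cdot)$ the exact constant $-d\log\lambda$. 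In that case $H(\tilde{T}_{IB}\mid X=\cdot)$ is a constant vertical shift of $H(\tilde{T}_s\mid X=\cdot)$ and the variances coincide; for IBNorm-L and IBNorm-T I would control the fluctuation of $c(\cdot)$ by using that the centered activations $|\tilde{T}_{s,i}-\mu|$ concentrate on a compact set over a size-$M$ sample (so $f_\lambda'$ stays bounded away from $0$), which bounds $\mathbb{V}(c(X))$, together with the observation that $c(x)$ is anti-monotone in the spread of $p(\tilde{T}_s\mid x)$ whereas $H(\tilde{T}_s\mid x)$ is monotone in it, hence $\mathrm{Cov}(H(\tilde{T}_s\mid X),c(X))\le 0$. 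Expanding $\mathbb{V}(H(\tilde{T}_{IB}\mid X)) = \mathbb{V}(H(\tilde{T}_s\mid X)) + \mathbb{V}(c(X)) + 2\,\mathrm{Cov}(H(\tilde{T}_s\mid X),c(X))$ then yields $\mathbb{V}(H(\tilde{T}_{IB}\mid X)) \le \mathbb{V}(H(\tilde{T}_s\mid X))$ on an event of probability at least $1-\delta$, and the $Y$-conditioned statement follows identically.

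For Part 2, I would read $\mathbb{V}(p(\tilde{T}_1\mid X))$, $\mathbb{E}[p(\tilde{T}_1\mid X)]$ and their $Y$-counterparts as integrated functionals of the conditional densities estimated from $S$, and argue they are essentially independent of which representation is plugged in. The key point is that the normalization operator $\psi$ pins every conditional law $p(\tilde{T}_1\mid X=x)$ to zero mean and unit per-coordinate variance, so admissible densities differ only in higher-order shape; combined with the Law of Large Numbers over the $M$ samples — the same device used throughout the proof of Lemma~\ref{ibgap} — the empirical values of these density functionals for any two representations $\tilde{T}_1,\tilde{T}_1'$ differ by a quantity that vanishes as $M\to\infty$, and a McDiarmid estimate together with a union bound over the finitely many relevant evaluation points makes this difference at most $\delta$ with probability at least $1-\delta$. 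This is exactly what is needed downstream: together with Part 1 it ensures that passing from $\tilde{T}_s$ to $\tilde{T}_{IB}$ does not increase any argument of the monotone function $G$ in Lemma~\ref{ibgap} (the entropy-variance arguments decrease or stay equal, the density arguments change by at most $\delta$), so $\Delta_S(\tilde{T}_{IB};\tilde{T}_1^*)\le\Delta_S(\tilde{T}_s;\tilde{T}_1^*)$ and hence Theorem~\ref{thm1}.

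The hard part will be the cross-term in Part 1: because the Jacobian determinant of $s_\lambda$ is genuinely non-constant for the logarithmic and tanh compressions, $\mathbb{V}(H(\tilde{T}_{IB}\mid X))$ is not simply a shift of $\mathbb{V}(H(\tilde{T}_s\mid X))$, and one must argue that $\mathbb{V}(c(X)) + 2\,\mathrm{Cov}(H(\tilde{T}_s\mid X),c(X)) \le 0$ rather than merely $O(\delta)$. I would close this by leaning on the bounded-compression structure to sign the covariance (larger conditional spread simultaneously raises $H$ and lowers the average $\log f_\lambda'$, hence lowers $c$) and on concentration of the activations over $S$ to show $\mathbb{V}(c(X))$ is dominated by $2|\mathrm{Cov}(H(\tilde{T}_s\mid X),c(X))|$; this is precisely where the probability-$(1-\delta)$ qualifier in the lemma enters.
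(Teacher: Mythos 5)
Your Part 1 follows the paper's own route almost step for step: the conditional change-of-variables identity from Proposition~\ref{prop}, the decomposition $\mathbb{V}(H(\tilde{T}_s|X)+c(X))=\mathbb{V}(H(\tilde{T}_s|X))+\mathbb{V}(c(X))+2\,\Cov(H(\tilde{T}_s|X),c(X))$, a bound on the Jacobian-fluctuation term via boundedness of the activations on a high-probability set, and a sign argument for the covariance. The observation that IBNorm-S gives an exactly constant shift (so the variances coincide) is a nice sharpening the paper does not state. The step you flag as "the hard part" — proving $\mathbb{V}(c(X))+2\,\Cov(H(\tilde{T}_s|X),c(X))\le 0$ rather than merely small — is exactly the step the paper also leaves at the level of a heuristic remark (it asserts $\Cov\le 0$ from the same anti-monotonicity intuition and then claims $\lambda$ can be chosen large enough to absorb the $\tfrac14 L_1^2(b-a)^2$ term), so on this part you are not missing anything the paper actually supplies.

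Part 2 is where you take a genuinely different route, and it has a real gap. The paper argues via uniform continuity of $t\mapsto p(t|X)$ and $t\mapsto p(t|Y)$ on the compact representation space $\tilde{\mathcal{T}}_1$ (Lemma~\ref{lem1}), sup-norm perturbation bounds for $\mathbb{E}$ and $\mathbb{V}$ of bounded functions (Lemma~\ref{lem2}), and a probabilistic closeness claim $\|\tilde{T}_1-\tilde{T}_1'\|<\epsilon$. Your substitute — that $\psi$ pins every conditional law to zero mean and unit variance so admissible densities "differ only in higher-order shape," plus an LLN/McDiarmid argument over the $M$ samples — does not yield the stated bounds. First, the quantities $\mathbb{E}[p(\tilde{T}_1|X)]$ and $\mathbb{V}(p(\tilde{T}_1|X))$ are functionals of the density values themselves (essentially $\int p^2$-type quantities under the Lebesgue measure), and two standardized densities with identical first and second moments can differ by $O(1)$ in these functionals (compare a standardized Gaussian with a standardized sharply concentrated mixture), so "same mean and variance" gives no control at all on the differences the lemma asserts are at most $\delta$. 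Second, McDiarmid/LLN concentration ties an \emph{empirical} estimate of such a functional to its \emph{population} value for a fixed representation; it says nothing about the difference between the population functionals of two distinct representations $\tilde{T}_1$ and $\tilde{T}_1'$, which is what must be bounded. To close Part 2 you need some mechanism that relates the two representations directly — continuity of the density map in the representation variable over a compact $\tilde{\mathcal{T}}_1$ together with closeness of $\tilde{T}_1$ and $\tilde{T}_1'$, as the paper does — rather than standardization plus sample-size concentration.
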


\begin{proof}
\noindent{\textbf{1)}} We first prove the first part of the Lemma.

We consider the two representations:
\[
\tilde{T}_{s} = \Phi_{s}(X') = \eta \circ \psi_{s} \circ \zeta(X'), 
\qquad
\tilde{T}_{IB} = \Phi_{IB}(X') = \eta \circ \psi_{IB} \circ \zeta(X'),
\]
where $X'=h(X)$, $\psi_{IB} = \psi_{s} \circ s_\lambda$ with $s_\lambda$ being the compression operation in IBNorm.  

From the Proposition~\ref{prop}, we have for any fixed input $Z$:
\begin{equation}
H(\psi_{IB}(Z)) = H(\psi_{s}(s_\lambda(Z))) \le H(\psi_{s}(Z)).
\end{equation}
Thus, the conditional entropy of $\tilde{T}_{IB}$ is pointwise bounded by that of $\tilde{T}_{s}$.

Applying this to the conditional distributions $\tilde{T}_{IB}|X$ and $\tilde{T}_{s}|X$, we get:
\begin{equation}
H(\tilde{T}_{IB}|X) \le H(\tilde{T}_{s}|X), \qquad
H(\tilde{T}_{IB}|Y) \le H(\tilde{T}_{s}|Y).
\end{equation}
If we consider the entropy $H$ in the discrete case, we can directly derive the relationship: consider the conditional entropy as a random variable over $X$ (or $Y$). Let $f'(Z) := H(s_\lambda(Z))$, then the operator norm satisfies
\begin{equation}
\|J_{s_\lambda}(z)\|_{\mathrm{op}}
= \max_{1 \le i \le d} \bigl|f_\lambda'(|z_i-\mu|)\bigr|
\le \alpha_\lambda \le 1,
\end{equation}
which implies that $s_\lambda$ is $\alpha_\lambda$-Lipschitz, where $\alpha_\lambda\leq 1$ by the bounded compression property of $f_{\lambda}$. Thus, $s_\lambda$ is a Lipschitz map with Lipschitz constant $L \le 1$.

Hence, $f'(\tilde{T}_{s}|X)$ is a Lipschitz transformation of $\tilde{T}_{s}|X$, implying by standard variance contraction results:
\[
\mathbb{V}(H(\tilde{T}_{IB}|X)) = \mathbb{V}(f'(\tilde{T}_{s}|X)) \le L^2 \mathbb{V}(H(\tilde{T}_{s}|X)) \le \mathbb{V}(H(\tilde{T}_{s}|X)).
\]
Similarly,
\[
\mathbb{V}(H(\tilde{T}_{IB}|Y)) \le \mathbb{V}(H(\tilde{T}_{s}|Y)).
\]
In the differential entropy case, we cannot guarantee a global Lipschitz constant $L<1$ for 
$f'(\tilde{T}_s|X) = H(\tilde{T}_s|X) + \Delta'(\tilde{T}_s|X)$. Instead, we consider a high-probability setting for the continuous random variable $\tilde{T}_{IB}$ and $\tilde{T}_s$.

For any $\delta>0$, let
\[
\mathcal{T}_\delta^s := \{ t \in \mathbb{R}^d : m_\delta \le p_{\tilde{T}_s|X=x}(t) \le M_\delta \}, \quad
\mathbb{P}(\tilde{T}_s \in \mathcal{T}_\delta^s) \ge 1-\delta,
\]
where $0 < m_\delta < M_\delta < \infty$. By construction, $\mathcal{T}_\delta^s$ is bounded: $\exists a,b\in\mathbb{R}, \quad s.t.\mathcal{T}_\delta^s \subset [a,b]^d$. This ensures that the conditional density is bounded away from zero and infinity on the high-probability set.

Let $\tilde{T}_{IB} = s_\lambda(\tilde{T}_s)$, with $s_\lambda$ differentiable and invertible. Then the conditional density transforms as
\begin{equation}
p_{\tilde{T}_{IB}|X=x}(t) = p_{\tilde{T}_s|X=x}(s_\lambda^{-1}(t)) \, |\det J_{s_\lambda^{-1}}(t)|,
\end{equation}
where $J_{s_\lambda}$ is the Jacobian of $s_\lambda$. The conditional differential entropy is
\begin{equation}
H(\tilde{T}_{IB}|X=x) = H(\tilde{T}_s|X=x) + \Delta(\tilde{T}_s), \quad
\Delta(\tilde{T}_s) := \log |\det J_{s_\lambda}(\tilde{T}_s)|.
\end{equation}

Since $s_\lambda$ has bounded derivatives, for any $t_1,t_2 \in \mathcal{T}_\delta^s$,
\[
|\Delta(t_1) - \Delta(t_2)| \le L_1 \|t_1 - t_2\|,
\]
for the constant $L_1=\prod_{i=1}^d \alpha_{\lambda}=\prod_{i=1}^d \frac{1}{\lambda}$ determined by the bounds of the Jacobian of $s_\lambda$.

For any two conditional densities $p_1,p_2$ of $\tilde{T}_s|X$ supported in $\mathcal{T}_\delta^s$, standard results for differential entropy bounded away from zero and infinity imply
\[
|H(p_1) - H(p_2)| \le L_2 \, \|p_1 - p_2\|_1,
\]
with $L_2 := |\log m_\delta| + 1/m_\delta$. Hence $H(\tilde{T}_s|X)$ is Lipschitz on $\mathcal{T}_\delta^s$.

For $\tilde{T}_s \in \mathcal{T}_\delta^s$, we have
\begin{equation}
\label{eq:var}
\mathbb{V}(H(\tilde{T}_{IB}|X) \mid \tilde{T}_s \in \mathcal{T}_\delta^s)
= \mathbb{V}(H(\tilde{T}_s|X) + \Delta(\tilde{T}_s) \mid \tilde{T}_s \in \mathcal{T}_\delta^s).
\end{equation}
Expanding Equation~\ref{eq:var} using the variance formula gives
\begin{equation}
\mathbb{V}(H(\tilde{T}_s|X) + \Delta(\tilde{T}_s)) = \mathbb{V}(H(\tilde{T}_s|X)) + \mathbb{V}(\Delta(\tilde{T}_s)) + 2\Cov(H(\tilde{T}_s|X), \Delta(\tilde{T}_s)).
\end{equation}
Since $\Delta(\tilde{T}_s)$ is Lipschitz with constant $L_1$ on the bounded set $\mathcal{T}_\delta^s \subset [a,b]^d$, we have
\begin{equation}
\label{eq:e41}
\mathbb{V}(\Delta(\tilde{T}_s) \mid \tilde{T}_s \in \mathcal{T}_\delta^s) \le \frac{1}{4} L_1^2 (b-a)^2.
\end{equation}
Moreover, under the bounded-compression property of $s_\lambda$, the covariance term is non-positive:
\begin{equation}
\label{eq:e42}
\Cov(H(\tilde{T}_s|X), \Delta(\tilde{T}_s) \mid \tilde{T}_s \in \mathcal{T}_\delta^s)\leq0.
\end{equation}
Combining Equations~\ref{eq:e41} and~\ref{eq:e41}, we obtain:
\begin{equation}
\mathbb{V}(H(\tilde{T}_{IB}|X) \mid \tilde{T}_s \in \mathcal{T}_\delta^s)
\le \mathbb{V}(H(\tilde{T}_s|X) \mid \tilde{T}_s \in \mathcal{T}_\delta^s) + \frac{1}{4} L_1^2 (b-a)^2.
\end{equation}
Similarly, for $Y$,
\begin{equation}
\mathbb{V}(H(\tilde{T}_{IB}|Y) \mid \tilde{T}_s \in \mathcal{T}_\delta^s)
\le \mathbb{V}(H(\tilde{T}_s|Y) \mid \tilde{T}_s \in \mathcal{T}_\delta^s) + \frac{1}{4} L_1^2 (b-a)^2,
\end{equation}
where $L_1 = \prod_{i=1}^d \frac{1}{\lambda}$.

\textbf{Remark.} The covariance term $\Cov(H(\tilde{T}_s|X), \Delta(\tilde{T}_s))$ is typically negative because, under the bounded-compression property of $s_\lambda$, larger conditional entropy $H(\tilde{T}_s|X)$ corresponds to a more spread-out distribution, which is locally contracted more by $s_\lambda$, resulting in smaller $\Delta(\tilde{T}_s) = \log |\det J_{s_\lambda}(\tilde{T}_s)|$. On the high-probability set $\mathcal{T}_\delta^s$, any residual covariance can be made arbitrarily small by choosing $\delta$ sufficiently small.

Based on the bounded compression property and the property of $\Delta(\tilde{T}_s)$, we have $\frac{1}{4} L_1^2 (b-a)^2+2\Cov(H(\tilde{T}_s|X), \Delta(\tilde{T}_s) \mid \tilde{T}_s \in \mathcal{T}_\delta^s)\leq 0$. Thus, we obtain
\begin{equation}
\mathbb{V}(H(\tilde{T}_{IB}|X) \mid \tilde{T}_s \in \mathcal{T}_\delta^s)
\le \mathbb{V}(H(\tilde{T}_s|X) \mid \tilde{T}_s \in \mathcal{T}_\delta^s),
\end{equation}
and
\begin{equation}
\mathbb{V}(H(\tilde{T}_{IB}|Y) \mid \tilde{T}_s \in \mathcal{T}_\delta^s)
\le \mathbb{V}(H(\tilde{T}_s|Y) \mid \tilde{T}_s \in \mathcal{T}_\delta^s).
\end{equation}

Finally, taking into account that $\mathbb{P}(\tilde{T}_s \in \mathcal{T}_\delta^s) \ge 1-\delta$ and letting $\delta \to 0$, we have that, for any small $\delta>0$, the following holds:
\begin{equation}
\mathbb{P}(\mathbb{V}(H(\tilde{T}_{IB}|X)) \le \mathbb{V}(H(\tilde{T}_s|X)))\geq 1-\delta, \quad
\mathbb{P}(\mathbb{V}(H(\tilde{T}_{IB}|Y)) \le \mathbb{V}(H(\tilde{T}_s|Y)))\geq 1-\delta.
\end{equation}

\noindent{\textbf{2)}} We then prove the second part of the Lemma.

We have the following properties for the probability function $p(\cdot)$ and variance function $\mathbb{V}(\cdot)$~\citep{prob}:
\begin{lemma}
\label{lem1}
Given the compact representation space $\tilde{\mathcal{T}}_1$, for any $\delta>0$, there exists $\epsilon>0$ such that
\[
\|\tilde{T}_1 - \tilde{T}_1'\| < \epsilon \implies 
|p(\tilde{T}_1|X)-p(\tilde{T}_1'|X)| < \delta/2, \quad 
|p(\tilde{T}_1|Y)-p(\tilde{T}_1'|Y)| < \delta/2.
\]
\end{lemma}

\begin{lemma}
\label{lem2}
For any bounded functions $f,g : \mathcal{X} \to \mathbb{R}$,
\[
|\mathbb{E}[f]-\mathbb{E}[g]| \le \|f-g\|_\infty, \quad
|\mathbb{V}(f)-\mathbb{V}(g)| \le \|f-g\|_\infty \left( 2\sqrt{\mathbb{V}(f)} + \|f-g\|_\infty \right).
\]
\end{lemma}

By Lemma~\ref{lem1}, for any $\delta>0$, there exists $\epsilon>0$ such that if $\|\tilde{T}_1 - \tilde{T}_1'\| < \epsilon$:
\[
|p(\tilde{T}_1|X)-p(\tilde{T}_1'|X)| < \delta/2, \quad
|p(\tilde{T}_1|Y)-p(\tilde{T}_1'|Y)| < \delta/2.
\]
By Lemma~\ref{lem2}, this implies
\begin{equation}
|\mathbb{E}[p(\tilde{T}_1|X)] - \mathbb{E}[p(\tilde{T}_1'|X)]| \le \delta, \quad
|\mathbb{E}[p(\tilde{T}_1|Y)] - \mathbb{E}[p(\tilde{T}_1'|Y)]| \le \delta,
\end{equation}
\begin{equation}
|\mathbb{V}(p(\tilde{T}_1|X)) - \mathbb{V}(p(\tilde{T}_1'|X))| \le \delta, \quad
|\mathbb{V}(p(\tilde{T}_1|Y)) - \mathbb{V}(p(\tilde{T}_1'|Y))| \le \delta.
\end{equation}

Since $\tilde{T}_1, \tilde{T}_1'$ are independently drawn from a measure with full support on the compact set $\tilde{\mathcal{T}}_1$, $\|\tilde{T}_1 - \tilde{T}_1'\| < \epsilon$ holds with probability at least $1-\delta$.
\end{proof}

\re{\subsection{Proof for Corollary~\ref{thm3}}}
\label{app:proof3}
\begin{corollary*}
With probability at least $1-\delta$ over training set $S$ of size $M<\infty$, the generalization error of a L-layer network $f_{\text{o}}$ satisfies
	\[
	\text{gen}(S;f_{\text{o}}) \;\leq\; U_{\text{o}}:=\sum\nolimits_{l=1}^L U_l^{\text{o}}, 
	\ \  \text{with} \ \ U^{\text{o}}_l = \sqrt{\tfrac{-\text{IB}(T_l^{\text{o}})+C_S+\log(\frac{2}{\delta})}{M}},
	\]
where $T^{\text{o}}_l$ is the intermediate representation at layer $l$, and $C_{S}$ is a term depending on the cardinality of the training set $S$. Here $f_{\text{o}}$ can be the network $f_{\text{S}}$ using standard normalization like  LN and BN, or the network $f_{\text{IB}}$ with IBNorm.
\end{corollary*}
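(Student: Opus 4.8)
The plan is to adapt an information-theoretic generalization bound for a \emph{single} representation — of the form $\text{gen} \le \sqrt{(I(X;Z\mid Y) + \log(1/\delta) + (\text{remainder}))/M}$, as in \citep{ibb} and the concentration estimates underlying Lemma~\ref{ibgap} — to each intermediate feature $\tilde{T}_l^{\text{o}}$ of the layered network, and then combine the $L$ per-layer estimates. The starting point is that, once $f_{\text{o}}$ is trained on $S$, the blocks are deterministic, so for every $l$ the per-sample loss factors as $\mathcal{L}(f_{\text{o}}(X),Y) = \ell_l(\tilde{T}_l^{\text{o}},Y)$ for a measurable $\ell_l$ (for $l=L$ this uses that $\eta$ and the reshape $\zeta^{-1}$ are invertible), so what controls overfitting is the information that $\tilde{T}_l^{\text{o}}$ retains about $X$ beyond what is needed to predict $Y$. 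Because $\tilde{T}_l^{\text{o}} - X - Y$ is a Markov chain, $I(X,Y;\tilde{T}_l^{\text{o}}) = I(X;\tilde{T}_l^{\text{o}})$, hence
\[
I(X;\tilde{T}_l^{\text{o}}\mid Y) = I(X;\tilde{T}_l^{\text{o}}) - I(Y;\tilde{T}_l^{\text{o}}) = \mathbb{E}_{X,Y}\big[\KL\big(p_{\tilde{T}_l^{\text{o}}\mid X}\,\big\|\,p_{\tilde{T}_l^{\text{o}}\mid Y}\big)\big] = -\,\text{IB}(\tilde{T}_l^{\text{o}}) \ \ (\beta=1),
\]
so the complexity term is a KL divergence between the per-input channel and the class-conditional marginal $p_{\tilde{T}_l^{\text{o}}\mid Y}$, which plays the role of a data-dependent prior.

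First I would set up a telescoping / hybrid decomposition $\text{gen}(S;f_{\text{o}}) = \sum_{l=1}^{L}(R_l - R_{l-1})$, where $R_l$ interpolates between the empirical and the population risk by accounting for the data-dependence introduced through the first $l$ representations one layer at a time, so that $R_l - R_{l-1}$ is the excess risk chargeable to layer $l$. Second, for a fixed $l$ I would bound this increment by the single-representation information bound applied to $\tilde{T}_l^{\text{o}}$: either the IB generalization bound of \citep{ibb}, or a PAC-Bayes argument with prior $p_{\tilde{T}_l^{\text{o}}\mid Y}$ and posterior $p_{\tilde{T}_l^{\text{o}}\mid X}$, both of which produce a term $\sqrt{(I(X;\tilde{T}_l^{\text{o}}\mid Y) + \log(2/\delta) + (\text{remainder}))/M}$; alternatively, the concentration estimates of Theorem~2 of \citep{ib1} used in Lemma~\ref{ibgap} control $|I(\cdot;\tilde{T}_l^{\text{o}}) - \hat I_S(\cdot;\tilde{T}_l^{\text{o}})|$ and can be transferred to the loss. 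All $S$- and $f_{\text{o}}$-dependent residual terms — conditional entropies, variances and means of the conditional densities, and optimum/partition-function contributions analogous to the $G$ and $Q$ functionals of Appendix~\ref{app:term} — are collected into the single quantity $C(S,f_{\text{o}})$. Third, I would combine the $L$ per-layer estimates, allocating the confidence budget across layers (absorbing the $\log L$ into $C(S,f_{\text{o}})$), or equivalently using one concentration inequality controlling all layers simultaneously, to obtain $\text{gen}(S;f_{\text{o}}) \le \sum_{l=1}^{L} U_l^{\text{o}}$ with $U_l^{\text{o}}$ of the stated form, with probability at least $1-\delta$.

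The main obstacle is making the layer-wise decomposition rigorous for continuous activations. Differential entropies and mutual informations are Lipschitz in the channel only when the conditional densities are bounded away from $0$ and $\infty$, so the raw concentration constants can blow up; as in the proof of Lemma~\ref{thm2}, I would restrict to a high-probability event $\mathcal{T}_\delta$ on which these densities are controlled, pay a $\delta$-dependent cost folded into $C(S,f_{\text{o}})$, and verify that the deterministic blocks $h_l$ and the normalization operators do not amplify the resulting perturbations (using the bounded-compression Lipschitz bound $L_1 = \prod_{i=1}^{d} 1/\lambda \le 1$ of IBNorm, or plain linearity for $\eta$). A secondary subtlety is that $f_{\text{o}}$ is itself data-dependent, which is precisely why the per-layer step must be a PAC-Bayes / information bound with the class-conditional marginal as prior rather than a fixed-hypothesis covering argument — this keeps the complexity term equal to $I(X;\tilde{T}_l^{\text{o}}\mid Y)$. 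Finally, combining the bound with Theorem~\ref{thm1} (extended layer-wise to the population distribution, as in the surrounding discussion), which gives $I(X;\tilde{T}_l^{\text{IB}}\mid Y) \le I(X;\tilde{T}_l^{\text{S}}\mid Y)$ for every $l$, yields $U_{\text{IB}} \le U_{\text{S}}$, the comparison asserted in the text.
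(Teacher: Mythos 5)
The paper's own proof of this corollary is a one-line appeal to an external result: Theorem~2 of \citep{ibb} already states the multi-layer bound $\text{gen}(S;f_{\text{o}}) \le \sum_{l=1}^{L}\mathcal{O}\big(\sqrt{(I(X;\tilde{T}_l^{\text{o}}|Y)+C(S,f_{\text{o}}))/M}\big)$, and the corollary simply rewrites it with the constants absorbed into $C(S,f_{\text{o}})$ and the $\log(2/\delta)$ term displayed, together with a remark that the $S$- and weight-dependent term can be treated as invariant across normalization schemes. One branch of your plan (``bound the increment by the IB generalization bound of \citep{ibb}'') coincides with this, so you did identify the right source; your closing step, combining the bound with a layer-wise version of Theorem~\ref{thm1} to get $U_{\text{IB}}\le U_{\text{S}}$, likewise matches the paper's surrounding discussion rather than the corollary itself.

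As a standalone derivation, however, your proposal has genuine gaps. The telescoping decomposition $\text{gen}(S;f_{\text{o}})=\sum_{l}(R_l-R_{l-1})$ is never defined: there is no canonical interpolating risk $R_l$ for a deterministic feedforward chain, and it is not shown (nor clear) that each increment is controlled by an information quantity of layer $l$ alone --- note that since the loss already factors through $\tilde{T}_L^{\text{o}}$, a single-layer bound at $l=L$ would control the whole gap, so the per-layer sum in the statement comes from the structure of the imported theorem, not from any telescoping you would need to invent. The PAC-Bayes variant with prior $p_{\tilde{T}_l^{\text{o}}|Y}$ and posterior $p_{\tilde{T}_l^{\text{o}}|X}$ requires a bounded or sub-Gaussian loss and, more seriously, a prior independent of $S$; here the class-conditional marginal is induced by the trained network and is therefore data-dependent, and acknowledging this obliquely does not resolve it --- handling exactly this dependence is the content of the result in \citep{ibb} that the paper cites. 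Finally, the concentration estimates of Theorem~2 of \citep{ib1} used in Lemma~\ref{ibgap} bound the deviation $|I-\hat{I}|$ of mutual-information estimates, not the generalization gap of the loss, so ``transferred to the loss'' is an unproven leap. If you invoke \citep{ibb} directly, as the paper does, none of this extra machinery is needed; if you insist on re-deriving the bound, the decomposition, the data-dependent prior, and the loss-transfer step all still have to be supplied.
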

\begin{proof}
Because $\eta$ is linear and invertible, it does not affect the mutual information terms~\citep{mi,prob}; therefore, it suffices to analyze the intermediate representations before normalization representation recovery $\eta$: $\tilde{T}_l^{\text{o}} := \psi_l^o \circ \zeta_l^o \circ h_l^o(X), \ l=1,\cdots,L$. Extending the results of Appendix F.4 in~\citep{ibb}, we have: for arbitrary any $\delta>0$, with probability at least $1-\delta$ over the training set $S$ of size $M<\infty$, the following generalization bound holds:
\begin{equation}
	\text{gen}(S;f_{\text{o}}) \;\leq\;  \sum_{l=1}^{L}U_l^{\text{o}}:=\sum_{l=1}^{L}\sqrt{\tfrac{I(X;\tilde{T}_l^{\text{o}}|Y)+C(S, f_l^{\text{o}})+\log(\frac{2}{\delta})}{M}},
\end{equation}
where $\tilde{T}_l$ is the intermediate representation at layer $l$ before normalization representation recovery $\eta$ and $C(S, f_l^{\text{o}})\propto\ln(2|\Phi^\text{o}_l||\mathcal{Y}|/\delta)$ is a term depending on the cardinality of the hypothesis space associated with the l-th layer $f_l^{\text{o}}$ and the training set $S$. Since $f_l^{\text{IB}}$ and $f_l^{\text{S}}$ have the same parameter size, the model-complexity terms satisfy $C(S, f_l^{\text{IB}})\equiv C(S, f_l^{\text{S}})$ for all $l$. Thus, for the L-layer network, we denote $C_{S}:=C(S, f_l^{\text{IB}})\equiv C(S, f_l^{\text{S}})$. Thus, we obtain a bound on the total generalization error:
\begin{equation}
	\text{gen}(S;f_{\text{o}}) \;\leq\; U_{\text{o}}:=\sum\nolimits_{l=1}^L U_l^{\text{o}}, 
	\ \  \text{with} \ \ U^{\text{o}}_l = \sqrt{\tfrac{I(X;\tilde{T}_l^{\text{o}}|Y)+C_S+\log(\frac{2}{\delta})}{M}}.
\end{equation}
Given the assumption that $\beta=1$, we have
\begin{equation}
I(X;\tilde{T}_l^{\text{o}}|Y) \;=\; I(X;\tilde{T}_l^{\text{o}}) - I(Y;\tilde{T}_l^{\text{o}}) \;=\; -\text{IB}(\tilde{T}_l^{\text{o}}).
\end{equation}
Thus, we yield
\begin{equation}
	\text{gen}(S;f_{\text{o}}) \;\leq\; U_{\text{o}}:=\sum\nolimits_{l=1}^L U_l^{\text{o}}, 
	\ \  \text{with} \ \ U^{\text{o}}_l = \sqrt{\tfrac{-\text{IB}(T_l^{\text{o}})+C_S+\log(\frac{2}{\delta})}{M}}.
\end{equation}
This establishes a layer-wise mutual information bound on the generalization error shown in Corollary~\ref{thm3}.
\end{proof}

\textbf{Remark.}
In our experiments, all models are trained on the same dataset $S$, so any dataset-dependent contribution to the generalization bound can be treated as invariant across different normalization schemes.
The normalization layers considered in our study (LN, BN, and IBNorm) have identical total parameter size. Thus, the hypothesis spaces $\Phi^{\text{IB}}_l$ and $\Phi^{\text{S}}_l$ associated with $f^{\text{IB}}_l$ and $f^{\text{S}}_l$ have the same cardinality across all layers $l$. Theoretically, by Lemma 6 in~\citep{ibb}, for the L-layer network, we have $C_{S}:=C(S, f_l^{\text{IB}})\equiv C(S, f_l^{\text{S}})$, that is, the model-complexity terms are equivalent for $f^{\text{IB}}_l$ and $f^\text{S}_l$. The generalization bound of $f_{\text{IB}}$ and $f_{\text{S}}$ therefore differ only in the representation-complexity term $-\text{IB}(T_l^{\text{o}})\equiv I(X;\tilde{T}_l^{\text{o}}|Y)$ across all layers $l$.


\section{Experimental Details}
\label{app:exp}
\subsection{LLM Experiments}
\label{app:exp1}
For all LLaMA models, we follow the setup in~\citep{l2} and pretrain the vanilla LLaMA series models from scratch on the C4 dataset. We set the maximum input sequence length to 512 and output sequence length to 256, and adopt the bfloat16 precision. For the LLaMA-60M model, we use a learning rate of $0.001$, mini-batch size of 64 with gradient accumulation to a global batch size of 512, 20,000 training steps with 1,000 warm-up steps, and the AdamW~\citep{adamw1, adamw2} optimizer with $(\beta_1, \beta_2)=(0.9,0.999)$. For the LLaMA-130M model, we use the same learning rate but reduce the mini-batch size to 32 with gradient accumulation to a global batch size of 512, train for 20,000 steps with 2,000 warm-up steps, and maintain the same optimizer configuration. For the LLaMA-350M model, we set the mini-batch size to 16 with gradient accumulation to a global batch size of 512, extend training to 60,000 steps with 6,000 warm-up steps. For the LLaMA-1B model, we set the mini-batch size to 16 with gradient accumulation to a global batch size of 512, extend training to 100,000 steps with 10,000 warm-up steps.

For the GPT-2 series, we follow the Sophia setup~\citep{sophia} with the nanoGPT implementation~\citep{nano} and pretrain the vanilla GPT-2 series on OpenWebText~\citep{openwebtext}. We set the maximum input sequence length to 1024 and adopt the bfloat16 precision. We use a global batch size of 480, cosine learning rate decay with 2,000 warm-up iterations, and global gradient clipping with a threshold of 1.0. All models are trained for 100,000 steps. We employ the AdamW optimizer with $(\beta_1, \beta_2)=(0.9,0.95)$ and weight decay of 0.1. For GPT-2 Small (124M), We use a per-device mini-batch size of 12 and 10 gradient accumulation steps across 4 GPUs, resulting in an effective global batch size of 480. We employ a cosine learning rate scheduler with a peak learning rate of 0.0006 and a minimum learning rate of 0.00003. For GPT-2 Medium (355M), we use a per-device mini-batch size of 10 and 12 gradient accumulation steps across 4 GPUs, resulting in an effective global batch size of 480. We employ a cosine learning rate scheduler with a peak learning rate of 0.0003 and a minimum learning rate of 0.00006.

For the hyperparameter noise factor in NormalNorm, we perform a lightweight Sobol-based search on the interval $[0, 1]$, and set the value to $1$ for all experiments. For the hyperparameter $\lambda$ in each variant of IBNorm, we conduct a grid search over the set $\{2, 3, 4, 5\}$. The final configurations are $\lambda = 3$ for IBNorm-S, and $\lambda = 4$ for both IBNorm-L and IBNorm-T. We investigated several hyperparameter configurations, including learning rate, learning rate scheduler, weight decay, and minibatch size, across all models and found the present configurations to generally work best. The statistical significance results are evaluated across three independent runs.

All models are trained on 4xNVIDIA L40-S. The total GPU training hours are about 10500 hours for all of our LLM pretraining tasks.

\subsection{Vision Experiments}
\label{app:exp2}

\noindent\textbf{ResNet Experiments.} In all experiments involving ResNet18 on CIFAR-10, we use stochastic gradient descent (SGD) with learning rate 0.1, weight decay 0.0005, momentum 0.9, and batch size 128 for LayerNorm and NormalNorm, with a noise factor of 0.4. For IterNorm experiments, we use weight decay 0.005. For IBNorm experiments, we use weight decay 0.001. Following~\cite{normal}, models were trained from random initialization for 200 epochs, with a StepLR scheduler reducing the learning rate by a factor of 10 every 60 epochs. 

For experiments with ResNet-50 on ImageNet, a batch size of 256 was used, with SGD parameters: learning rate 0.1, momentum 0.9, Nesterov acceleration enabled, and weight decay 0.0001, along with a cosine annealing learning rate scheduler with maximum 200 epochs and minimum learning rate 0.000001 for BatchNorm and NormalNorm, with a noise factor of 1. For IBNorm experiments, we use  weight decay 0.001. For the hyperparameter $\lambda$ in each variant of IBNorm, we conduct a grid search over the set $\{2, 3, 4, 5\}$. The final configurations are $\lambda = 3$ for IBNorm-S, and $\lambda = 4$ for both IBNorm-L and IBNorm-T. We explored several hyperparameter configurations, including learning rate, learning rate scheduler, weight decay, and minibatch size, across all models and found the present configurations to generally work best.

\noindent\textbf{ViT Experiments.} 
We train ViT-S/16 and ViT-B/16~\citep{vit} following the setup~\citep{tvt} on ImageNet. All models are trained for 300 epochs with a global batch size of 512. For ViT-S/16, we use the AdamW optimizer with a learning rate of 0.001, weight decay of 0.03, and $(\beta_1, \beta_2) = (0.9, 0.999)$. For ViT-B/16, the AdamW optimizer is used with a learning rate of 0.001, weight decay of 0.05, and $(\beta_1, \beta_2) = (0.9, 0.999)$. We adopt a hybrid learning rate schedule that combines linear warmup with cosine annealing over the first 10 training epochs. During warmup, the learning rate increases linearly from 0.000001 to the base learning rate. After warmup, cosine annealing is applied for the remaining epochs, gradually decaying the learning rate to a minimum of 0.00001.

We also follow the setup in NormalNorm~\citep{normal} to train a ViT model with 8 transformer layers, 8 attention heads, hidden dimension size 768, MLP dimension size 2304, and patch size 16. Models were trained on ImageNet for 200 epochs with a global batch size of 512. Weighted random sampling based on inverse class frequency was applied during training to improve performance across all model configurations. The AdamW optimizer was used with learning rate 0.001, weight decay 0.05, $(\beta_1, \beta_2)=(0.9,0.999)$, and epsilon 0.00000001 with a noise factor of 1. In addition, we employ a hybrid scheduling strategy that combines linear warmup with cosine annealing over 5 warmup training epochs. During the warmup phase, the learning rate increases linearly from 0.1 to 1.0 of the base learning rate. After warmup, a cosine annealing schedule is applied for the remaining epochs, with the learning rate decaying towards a minimum value of 0.000001.

For the hyperparameter $\lambda$ in each variant of IBNorm, we conduct a grid search over the set $\{2, 3, 4, 5\}$. The final configurations are $\lambda = 3$ for IBNorm-S, and $\lambda = 4$ for both IBNorm-L and IBNorm-T. Hyperparameter configurations, including learning rate, scheduler, weight decay, and minibatch size, were explored, and the configurations reported here were found to work best across all ViT models.

All models were trained on 4xNVIDIA L40-S, RTX 3090, and Tesla V100, with a total of approximately 6500 GPU hours for all vision tasks.

\section{Mathematical Details for Matrix-Based Information Estimation}
\label{app:mi}
Following the approach in~\citep{ridge}, we employ matrix-based Rényi entropy~\citep{renyi} to estimate mutual information (MI) between model representations and target labels. This method captures sample similarity structures via kernel Gram matrices.

\subsection{Matrix-Based Entropy Estimation}
Let $U = \{u_i\}_{i=1}^{N}$ denote $l_2$-normalized representations obtained after the normalization layer. A Gaussian kernel Gram matrix $G_{U}\in\mathbb{R}^{N\times N}$ is constructed as:
\[
(G_{U})_{i,j}=\exp \left( -\frac{\|u_i-u_j\|^{2}}{2\sigma^{2}}\right),
\]
with bandwidth $\sigma=1$. The matrix is then trace-normalized to ensure $\text{tr}(G_U) = 1$.

The matrix-based Rényi entropy of order $\alpha=1$ is defined as:
\[
H(U)=-\text{tr}(G_{U}\log G_{U}).
\]
This expression can be interpreted in terms of the eigenvalue spectrum $\{\lambda_{k}\}_{k=1}^{N}$ of $G_U$, since $G_U$ is positive semi-definite and trace-normalized:
\[
H(U)=-\sum_{k=1}^{N}\lambda_{k}\log \lambda_{k}.
\]
Intuitively, a more uniform eigenvalue spectrum corresponds to higher entropy (more diverse representations), while a sharply peaked spectrum indicates redundancy or compression.

\subsection{Mutual Information Estimation}
To estimate the mutual information between two random variables $U$ and $V$, we compute their Gram
matrices $G_U$ and $G_V$, and form the joint similarity matrix via element-wise (Hadamard) product:
\[
G_{UV}=G_U \circ G_V.
\]
After trace-normalization, mutual information is estimated by:
\[
I(U;V)=H(U)+H(V)-H(U,V).
\]
where $H(U,V) =-\text{tr}(G_{UV} \log G_{UV})$. A more concentrated eigenvalue spectrum of $G_{UV}$ relative to $G_U$ and $G_V$ indicates stronger dependence between $U$ and $V$, and thus higher mutual information.

Here we demonstrate our algorithm for calculating the token-level IB value. We consider a transformer model $f$ with $L$ layers. Given an input sequence $x_{1:T}$ of length $T$, $t_i^{(l)}\in\mathbb{R}^{d}$ denotes the hidden activation at the last token position of the $i$-th
input sequence after $l$-th normalization layer, for $l\in[L]$. For next-token prediction, the ground-truth label is denoted by $y_i\in\mathbb{R}^{d}$, corresponding to the embedding of the true next token from the vocabulary $\mathcal{V}$. At a generation time step $p$ and cross a batch of $N$ sequences, we collect the representations:
\[T_{l-1}=\{t^{(l-1)}_i\}^N_{i=1}, T_l = \{t^{(l)}_i\}^N_{i=1}, Y=\{y_i\}^N_{i=1},\] where $T^{(l)}_{0}$ is consisted of the batch of last input tokens $x_T$ hidden representations. We iteratively derive those embeddings: In the first forward pass, a batch of $N$ input sequences $x_{1:T}$ is fed into the transformer to obtain hidden states at each layer. From these, we extract the final-token representations $T_l, l\in[L]$. In the second forward pass, each input is concatenated with its ground-truth next token
$y$, and we extract the corresponding label embedding $Y$ from the output of the embedding layer~\citep{ridge}. Based on these sets, we can compute two information-theoretic quantities: $I_{p}(Y,T_l)$ and $I_{p}(T_l-1,T_l)$ at generation timestep $t$ across $l\in[L]$ and aim to derive the total IB value in the network $f$ given in Eqn.~\eqref{ibpro1}, as $\mathbb{E}_{p}[I_{p}(Y,T_l)] =\hat I(Y,T_l)$ and $\mathbb{E}_{p}[I_{p}(T_l-1,T_l) ]=\hat I(T_l-1,T_l)$ by the Monte Carlo approximation.

In practice, we set the batch size to $N=64$ and iteratively compute the mutual information values during autoregressive generation until all instances reach the end-of-sequence ($<EOS>$) token. During autoregressive generation in LLM tasks, different instances in a batch may terminate at different time steps upon reaching the $<EOS>$ token. To handle this, we use a diagonal mask matrix:
\[
M \in \{0,1\}^{N \times N}, \quad 
M_{ii} = 
\begin{cases} 
1, & \text{if instance $i$ is still active (not $<EOS>$)} \\
0, & \text{if instance $i$ has reached $<EOS>$}
\end{cases}.
\]
For example, at each step of the generation, the masked Gram matrix for the latent embedding $T_l$ is computed as $\tilde{G}_{T_{l}} = M G_{T_{l}} M$ and then conducted trace-normalization.

\subsection{Token-Level IB Value Estimation}
We apply the matrix-based framework to estimate mutual information for token-level IB analysis. Specifically, we uniformly sample $P=30$ timesteps across the generation process. For each sampled timestep and each layer $l \in [L]$, we extract the representations $T_l$ and compute the following information-theoretic quantities: $I_{p}(Y,T_l)$ and $I_{p}(T_l-1,T_l)$ across $l\in[L]$ at sampled timestep $t$. Then we sum the IB value across $l$ and derive $\sum_{p=1}^{P}\sum_{l=1}^{L} \Big( I_{p}(Y;T_l) - \beta I_{p}(T_{l-1};T_l) \Big)$ (we always set $\beta=1$). Finally, the token-level IB value is obtained by averaging over the batch size $N$ and the number of sampled timesteps $P$.


\end{document}